\newtheorem{theorem}{Theorem}
\newtheorem{proposition}{Proposition}
\newtheorem{lemma}[theorem]{Lemma}
\newtheorem{remark}{Remark}
\definecolor{mdcolor}{RGB}{200,0,0}   % red
\definecolor{jrcolor}{RGB}{0,120,0}   % green
\definecolor{gkcolor}{RGB}{0,0,180}   % blue
\definecolor{pecolor}{RGB}{180,0,180} % purple
\definecolor{revcolor}{RGB}{0,0,0} 
\newcommand{\authorcomment}[3]{%
  \ifthenelse{\boolean{showcomments}}%
    {{\color{#1}[#2: #3]}}% when enabled
    {#3}% when disabled (just the text, no color)
}
\newcommand{\authorfootnote}[3]{%
  \ifthenelse{\boolean{showcomments}}%
    {\footnote{\textcolor{#1}{#2: #3}}}%
    {}% when disabled, no footnote at all
}
\newcommand{\rev}[1]{{\color{revcolor}{#1}}}
\def\eqref#1{equation~\ref{#1}}
\def\1{\bm{1}}
\DeclareMathAlphabet{\mathsfit}{\encodingdefault}{\sfdefault}{m}{sl}
\SetMathAlphabet{\mathsfit}{bold}{\encodingdefault}{\sfdefault}{bx}{n}
\def\sI{{\mathbb{I}}}
\newcommand{\E}{\mathbb{E}}
\newcommand{\KL}{D_{\mathrm{KL}}}
\DeclareMathOperator*{\argmax}{arg\,max}
\DeclareMathOperator{\supp}{Supp}
\newcommand{\D}{\mathcal{D}}
\newcommand{\piref}{\pi_\textrm{base}}
\newcommand{\pit}{\pi_\theta}
\newcommand{\ver}{v}
\newcommand{\pcbeta}{p_{x,\beta}}
\newcommand{\fa}{f_\alpha}
\title{Whatever Remains Must Be True: Filtering Drives Reasoning in LLMs, Shaping Diversity}
\titlerunning{Whatever Remains Must Be True: Filtering Drives Reasoning in LLMs, Shaping Diversity}
\authors{Germán Kruszewski\textsuperscript{1,*}~  \authsep Pierre Erbacher\textsuperscript{1,*}~ \authsep Jos Rozen\textsuperscript{1} \authsep Marc Dymetman\textsuperscript{2}}
\website{\texttt{\{german.kruszewski,pierre.erbacher,jos.rozen\}@naverlabs.com} \quad \texttt{marc.dymetman@gmail.com}}
\begin{document}

\begin{abstract}
Reinforcement Learning (RL) has become the \emph{de facto} standard for tuning LLMs to solve tasks involving reasoning.
However, growing evidence shows that models trained in such way often suffer from a significant loss in diversity.
We argue that this arises because RL implicitly optimizes the ``mode-seeking'' or ``zero-forcing'' \emph{Reverse KL} to a target distribution causing the model to concentrate mass on certain high-probability regions of the target while neglecting others. 
In this work, we instead begin from an explicit target distribution, obtained by filtering out incorrect answers while preserving the relative probabilities of correct ones.
Starting from a pre-trained LLM, we approximate this target distribution using the $\alpha$-divergence family, which unifies prior approaches and enables direct control of the precision–diversity trade-off by interpolating between mode-seeking and mass-covering divergences.
On a \textsc{Lean} theorem-proving benchmark, our method achieves state-of-the-art performance along the coverage–precision Pareto frontier, outperforming all prior methods on the coverage axis.
\end{abstract}

\maketitle

\section{Introduction}
\begin{quote}
``How often have I said to you that when you have eliminated the impossible, 
whatever remains, \emph{however improbable}, must be the truth?''

\hfill --- \textbf{Arthur Conan Doyle}, \textit{The Sign of Four}
\end{quote}

Large Language Models (LLMs) have made striking progress on reasoning tasks. 
A leading approach is Reinforcement Learning from Verifiable Rewards (RLVR) \citep{lambert_tulu_2025,deepseek-ai_deepseek-r1_2025}, where policy-gradient methods such as PPO~\citep{schulman_proximal_2017} or GRPO~\citep{shao_deepseekmath_2024} optimize against a reward that combines a binary verifier of correctness with a KL penalty to keep the tuned model close to its base distribution.

While RLVR has been credited with enabling exploration of new solutions~\citep{deepseek-ai_deepseek-r1_2025}, recent studies challenge this view.  In particular, they find that base models already contain these solutions given a sufficient sampling budget, and that tuned models often exploit additional samples less effectively due to reduced output diversity~\citep{yue_does_2025,he_rewarding_2025,dang_assessing_2025,wu_invisible_2025}.  Earlier, with RLHF \citep{christiano_deep_2017,ziegler_fine-tuning_2020}, similar diversity reductions were observed, sometimes described as ``mode collapse''~\citep{kirk_understanding_2024,omahony_attributing_2024}.

We argue that this loss of diversity stems from the \emph{implicit objective} of RL-based training: optimizing the \emph{Reverse KL} divergence to a target distribution that favors correct answers~\citep{korbak_reinforcement_2022}. Reverse KL is ``mode-seeking'' or ``zero-forcing,'' emphasizing precision on a subset of solutions while ignoring others~\citep{bishop_pattern_2006,huszar_how_2015,li_mode-seeking_2023}. This explains why RLVR models become accurate but less diverse.

To address this, we explicitly define the desired target distribution: one that \emph{always} outputs correct solutions while remaining as close as possible to the base model,  thus preserving any solution included therein ~\citep{khalifa_distributional_2021,kim_guaranteed_2025}.
Direct sampling is infeasible, but we can approximate it with an autoregressive policy using Distributional Policy Gradient Algorithms~\citep[DPG][]{parshakova_distributional_2019,khalifa_distributional_2021,go_aligning_2023}. Concretely, we apply $f$-DPG \citep{go_aligning_2023}, which minimizes an $f$-divergence \citep{polyanskiy2025information} to this target. 
Different divergences trade off precision (probability of sampling a correct solution)  and coverage (probability of sampling at least one correct solution given a sufficiently large sampling budget): Reverse KL emphasizes the former, Forward KL the latter~\citep{bishop_pattern_2006}.
To interpolate between them, we introduce $\alpha$-DPG, based on $\alpha$-divergences~\citep{renyi_measures_1961, cressie_multinomial_1984, amari_-divergence_1985}.
Notably, this method unifies RLVR (Reverse KL) and both KL-DPG~\citep{parshakova_distributional_2019,khalifa_distributional_2021, go_aligning_2023} and Rejection Sampling Fine-Tuning (Forward KL)~\citep{zelikman_star_2022,yuan_scaling_2023} under a single umbrella.
We call this approach Distributional Matching with Verifiable Rewards (DMVR), positioning it under the general framework of Distributional Matching~\citep{khalifa_distributional_2021,korbak_controlling_2022,korbak_reinforcement_2022}.

We evaluate $\alpha$-DPG in \textsc{Lean}, a proof assistant that automatically verifies formal mathematical proofs.In this setting, success requires not only correct candidates but also diversity across proof attempts, since harder theorems may be solvable only through rare derivations. We find that $\alpha$-DPG achieves state-of-the-art performance, producing models that lie on the Pareto frontier between precision (\texttt{pass@1}) and coverage (\texttt{pass@256}), and that surpass prior methods in coverage\footnote{We make the code available at \url{https://github.com/naver/alpha-dpg}.}.

\paragraph{Contributions.} 
\begin{itemize}
    \item We introduce the DMVR framework, which trains models by approximating an explicitly defined verifier-based target distribution.
    \item We clarify how the implicit dynamics of RL-based methods lead to reduced diversity.
    \item We highlight the role of the divergence family in trading off precision and diversity, and propose $\alpha$-DPG to smoothly interpolate between Forward and Reverse KL.
    %\item We demonstrate state-of-the-art results on the \textsc{Lean} benchmark, achieving a favorable balance along the coverage–precision frontier.
    \item We show on the \textsc{Lean} benchmark that $\alpha$-DPG achieves results along a Pareto coverage-precision frontier, with the $\alpha$ parameter allowing to trade-off between precision and coverage. Moreover, $\alpha$-DPG achieves the best coverage results among all considered methods when using low values of $\alpha$.
\end{itemize}

\begin{figure}[t]%
\centering%
\begin{subfigure}[c]{0.4\textwidth}%
    \centering%
    \includegraphics[width=1\linewidth]{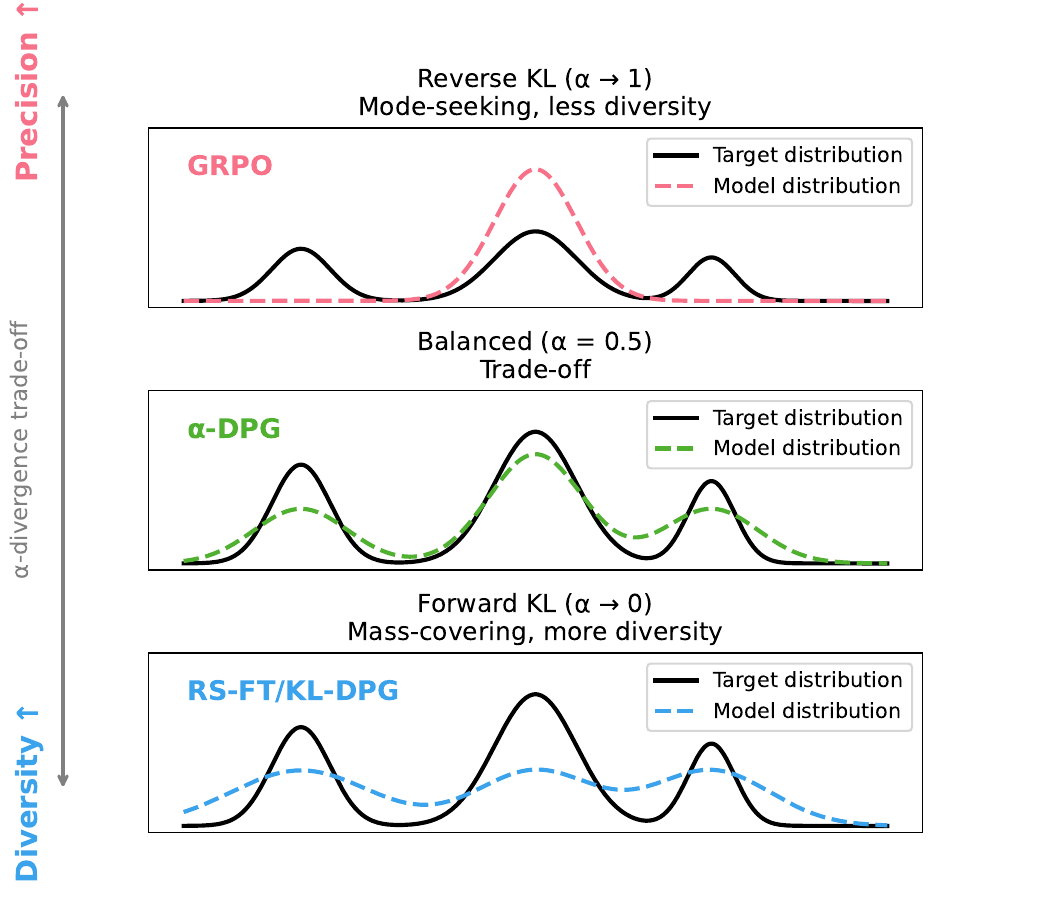}
\end{subfigure}%
\hfill%
\begin{subfigure}[c]{0.6    \textwidth}%
    \centering
    \includegraphics[width=\linewidth]{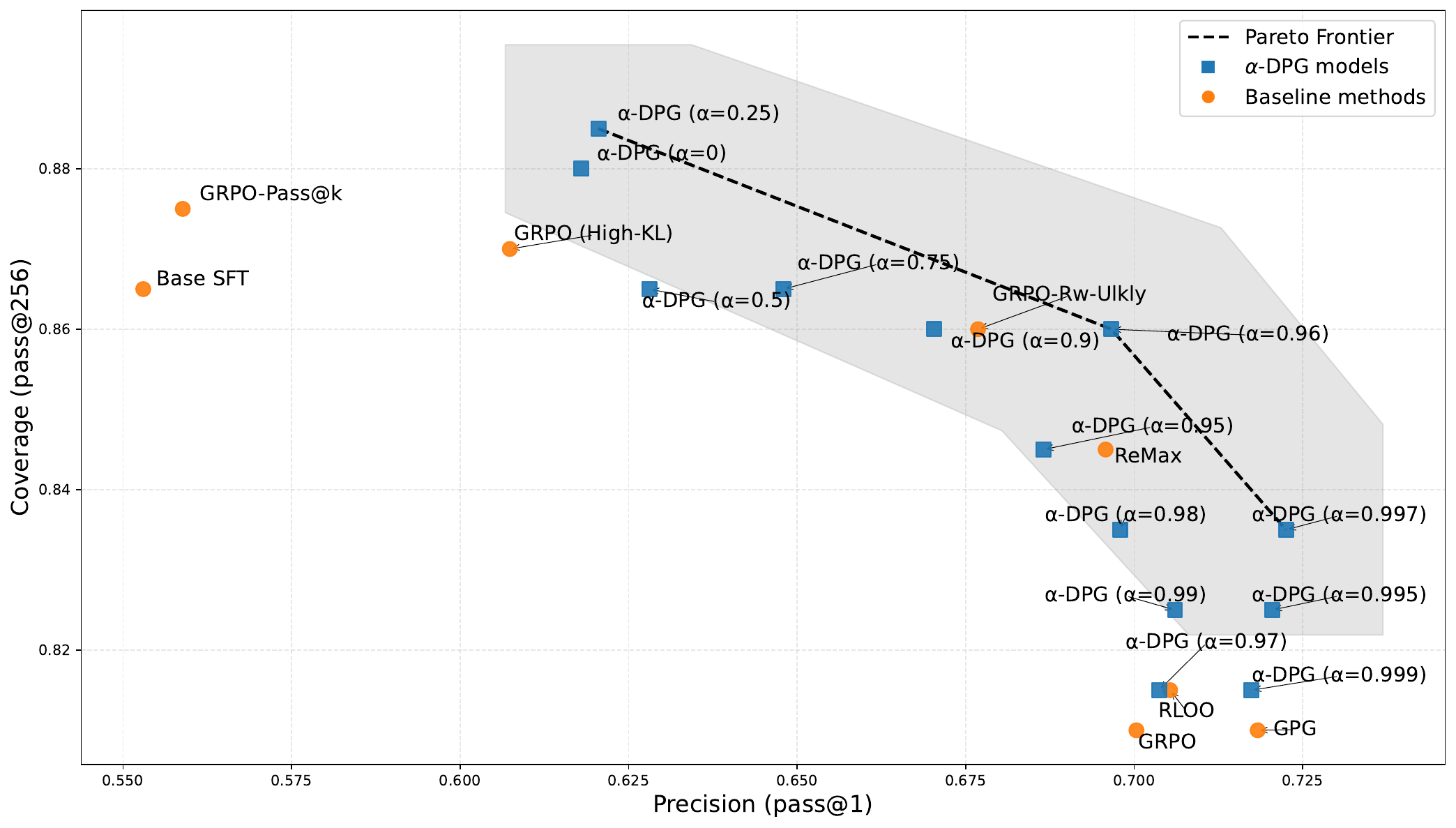}
\end{subfigure}
    % \caption{Left: Illustrative representation of our method. GRPO/PPO and other policy-gradient methods used in RLVR focus the model on a small region of the target distribution. Other methods, such as KL-DPG, recover more of the diversity at the cost of putting probability mass to low-quality regions. $\alpha$-DPG allows to strike a balance between the two. Right: Estimates of models precision (\texttt{pass@1}) and coverage (\texttt{pass@256}). A Pareto frontier is drawn as the top rightmost side of the convex hull of all points, and the shaded margin has width of one std computed via bootstrap sampling ($n=100$). $\alpha$-DPG models and other baselines sit along a Pareto frontier.}
    \caption{
    \textbf{Left:} Illustration of our method. RLVR policy-gradient approaches (e.g., GRPO, PPO) concentrate probability on a narrow region of the target distribution, while methods like KL-DPG increase diversity but allocate mass to low-quality regions. $\alpha$-DPG balances these extremes. 
\textbf{Right:} Model precision ($\text{pass@1}$) and coverage ($\text{pass@256}$). The Pareto frontier is the upper-right boundary of the convex hull; the shaded band shows one bootstrap standard deviation. $\alpha$-DPG and baseline models lie along the frontier, with $\alpha$-DPG achieving the highest coverage and precision for low and high values of $\alpha$, respectively.}
\label{fig:main_figure}
\end{figure}
\section{Background}
\label{sec:background}

\paragraph{Reinforcement Learning with Verifiable Rewards (RLVR)}
Let $\pi_\theta(\cdot|x): \mathcal{Y}\rightarrow\mathbb{R}$ be an LLM with parameters $\theta$ defining a probability distribution over sequences $y \in \mathcal{Y}$ conditioned on a prompt $x$. A verifier $\ver(y,x) \in \{0,1\}$ is a binary function that discriminates between correct and incorrect responses to $x$. Given a dataset $\mathcal{D}$ of prompts (and, optionally, ground-truth answers for the verifier), RLVR uses a standard reinforcement learning objective derived from previous work on Reinforcement Learning from Human Feedback (RLHF) \citep{christiano_deep_2017,ziegler_fine-tuning_2020}:
\begin{align}
    \mathcal{J}_\textrm{RLVR} =\argmax_\theta \E_{x\sim\mathcal{D}, y\sim \pi_\theta(\cdot|x)} R_\theta(y,x) ,\label{eq:rl}
\end{align}
where the ``pseudo-reward''~\citep{korbak_reinforcement_2022} $R_\theta$ is defined as follows:
\begin{align}
    R_\theta(y, x) = \ver(y,x) - \beta \log \frac{\pi_\theta(y|x)}{\piref(y|x)}.\label{eq:rlvr}
\end{align}
Here, $\piref$ is the base LLM and $\beta$ is a tunable parameter that controls the trade-off between maximizing the reward and minimizing divergence from the original model. Some authors fall back to setting $\beta=0$, just optimizing the expected verifier reward~\citep{liu2025understanding,mistral-ai_magistral_2025,yu_dapo_2025}.

\paragraph{Policy Gradient (PG) Algorithms} We can maximize Eq. \ref{eq:rlvr} following multiple algorithms. One of the simplest ones is KL-Control~\citep{todorov_linearly-solvable_2006,kappen_optimal_2012}, with the following gradient
\begin{equation}
    \nabla_\theta \mathcal{J}_\textrm{KL-Control}(\theta) = \E_{x\sim\D, y \sim \pit} \hat{A}(y,x) \nabla_\theta \log \pit(y|x),\label{eq:kl-control}
\end{equation}
where $\hat{A}(y,x) = R_\theta(y,x) - B(x)$ is the advantage function and $B(x)$ is a baseline that doesn't depend on $y$ to keep the objective unbiased used for variance reduction~\citep[Chapter 2, Section 7]{sutton_reinforcement_2020}. Some options for an unbiased baseline could include a constant, a critic~\citep{sutton_policy_1999}, or a leave-one-out average in a batch of rewards~\citep[RLOO;][]{kool_buy_2019,ahmadian_back_2024}\footnote{Note that the average of all samples, as is used in GRPO, is biased instead.}. When $\beta=0$, Eq. \ref{eq:kl-control} reduces to the original policy gradient algorithm REINFORCE~\citep{williams_simple_1992}.
Another popular choice is PPO~\citep{schulman_proximal_2017}, which optimizes the following clipped surrogate objective:
\begin{equation}
\mathcal{J}^\textrm{PPO}(\theta) = \E_{x\sim\D, y \sim \pi_{\theta_{old}}} \left[ \sum_{t=1}^{|y|} \min\left( \rho(y_t|x,y_{<t}) \hat{A}(y,x), \text{clip}\left( \rho(y_t|x,y_{<t}) , 1-\epsilon, 1+\epsilon\right) \hat{A}(y,x) \right) \right],
\end{equation}
where $\rho(y_t|x,y_{<t})=\pit(y_t|x, y_{<t})/\pi_{\theta_{old}}(y_t|x, y_{<t})$ is the ratio between the current and the previous policy token probabilities and $\epsilon$ is a small hyperparameter. PPO discourages large policy updates that would move 
$\rho$ % $r(\theta)$ 
outside the interval $[1-\epsilon, 1+\epsilon]$, which improves training stability by constraining the new policy to be close to the old one. GRPO~\citep{deepseek-ai_deepseek-r1_2025} and other recent methods use the same clipping strategy, but differ in the form of the baseline. While PPO is commonly understood to use a critic model as a baseline, GRPO uses the group-level reward average.

\paragraph{Distribution Matching (DM)}
DM~\citep{khalifa_distributional_2021,korbak_controlling_2022,korbak_reinforcement_2022,go_aligning_2023,kim_guaranteed_2025} is a family of techniques for aligning LLMs to meet certain constraints. For instance, let's suppose that we want to constrain the LM so that all sequences $y$ meet $r(x,y)=1$ for some binary filter $r(x,y)\in\{0,1\}$\footnote{If the reader notices a strong similarity between a constraint $r(x,y)$ and a verifier $\ver(x,y)$, this is no coincidence: It is exactly this connection that we will exploit in this paper.}. The method expresses this as a target distribution we want to approximate, which in this case is given by
\begin{equation}
    p_x(y) \propto \piref(y|x)\ r(y,x).\label{eq:p_pointwise}
\end{equation}
This distribution is the only distribution $p'$ that fulfills the following two desirable conditions: (i) it satisfies the constraint $r(y,x)=1,~\forall y \in \supp(p'(\cdot|x))$, and (ii) it is the closest to $\piref(\cdot|x)$ in terms of $\KL(p'(\cdot|x)|| \piref(\cdot|x))$. The second condition guarantees the preservation of all the diversity contained in the original model. In information geometric terms, $p_x$ is the I-projection of $\piref(\cdot|x)$ into the manifold of all distributions that satisfy the constraint given by $r$~\citep{csiszar_information_2004}.

\paragraph{Distributional Policy Gradient (DPG) Algorithms}
Given a target distribution, we can optimize an autoregressive policy $\pit$ to approximate it. \citet{khalifa_distributional_2021} used for this the DPG algorithm~\citep{parshakova_distributional_2019,korbak_controlling_2022,korbak_reinforcement_2022}, later denoted KL-DPG~\citep{go_aligning_2023}, which minimizes the Forward KL $\KL(p_x|| \pit)$ to the target distribution $p_x$:
\begin{equation}
    \nabla_\theta\mathcal{L}^\textrm{DPG}(\theta) = \nabla_\theta \E_{x\sim\mathcal{D}} \KL(p_x|| \pit) = -\E_{x\sim\D,y\sim\pit(\cdot|x)} \left(\frac{p_x(y)}{\pit(y|x)} - 1 \right)\ \nabla_\theta\log \pit(y|x).
\end{equation}
The negative one term acts as a constant baseline. Computing $p_x(y|x)$ requires estimating a normalization constant denoted as the partition function, $Z_x$, which can be estimated by importance sampling~\citep{owen_importance_2013}. Note that in the case of a binary constraint, $Z_x$ is the acceptance rate of the constraint when sampling from the base model~\citep{kim_guaranteed_2025}: 
$Z_x = \sum_{y\in \mathcal{Y}}{a(y|x)r(y, x)} = \E_{y \sim a(\cdot|x)} r(y, x) = \mathbb{P}_{y\sim a(\cdot|x)}[r(y, x)=1]$. (See App. \ref{app:z_estimation} for additional details.) 

Later, \citet{go_aligning_2023} introduced $f$-DPG, effectively generalizing this technique to any $f$-divergence by minimizing $D_f(\pit, p_x)$:
\begin{align}
    \nabla_\theta \mathcal{L}^\textrm{$f$-DPG}(\theta) = \nabla_\theta\E_{x\sim \D}\left[D_f\left(\pi_\theta(\cdot|x) || p_x\right)\right] 
    =\E_{x\sim\D,y\sim\pi_{\theta}(\cdot|x)}\left[ - \hat A^f(y,x) \nabla_{\theta}\log\pi_{\theta}(y|x)\right],
\label{Eq.:fdpg-gradient}
\end{align}
where $\hat A^f(y, x) = R^f_\theta(y, x) - B(x)$, $R^f_\theta(y, x) \doteq -f^{'}\left(\frac{\pi_{\theta}(y|x)}{p_x(y)}\right)$ is a ``pseudo-reward'',  $B(x)$ is a context-dependent baseline, and $f$  is a convex function that parametrizes the $f$-divergence where $f:(0,\infty)\rightarrow\mathbb{R}$ s.t. $f(1) = 0$. If $p_x(y)=0$, then, $R_\theta^f(y, x) = - f^{'}(\infty)$, where $f^{'}(\infty)\doteq\lim_{t\rightarrow0} t f(\frac{1}{t})$.
\section{Distributional Matching with Verifiable Rewards (DMVR)}

Here, we adopt the DM framework, and propose that the ideal target distribution for tuning a language model $\piref(\cdot|x)$ to solve problems $x\sim\D$ by means of a verifier $\ver(y,x) \in \{0,1\}$ is simply the result of applying the verifier as a binary constraint, i.e. $r(y,x)=\ver(y,x)$, as follows:
\begin{equation}
    p_x(y) \propto \piref(y|x) \ver(y,x) ~ \forall x\in\D. \label{eq:pcver}
\end{equation}
This distribution \emph{filters out all incorrect responses, leaving out only correct ones with the same relative probabilities as the reference LLM}. This is the single distribution that (i) always answers correctly to $x$, and (ii) it is closest to the base model $\piref$ as measured by $\KL(\cdot||\piref)$ \citep{khalifa_distributional_2021}. In the following discussion, we argue that (1) the distribution approximated by RLVR is closely related, becoming equivalent to $p_x$ in the limit $\beta\rightarrow0$, (2) for any \emph{fixed} $\beta>0$, RLVR optimizes the Reverse KL to a target distribution, which has a mode-seeking behavior, incentivizing the policy to put high probability mass in small regions of high reward at the cost of diversity, and (3) we propose an alternative based on $f$-DPG parametrized with $\alpha$-divergences to trade-off precision and diversity. Of these, points (1) and (3) are original to our work, whereas point (2) is reproduced from \citet{korbak_reinforcement_2022}. Moreover, the target distribution and the $f$-DPG technique to approximate it were defined in prior art~\citep{khalifa_distributional_2021,go_aligning_2023}.

\subsection{From RLVR to DMVR}
\label{sec:rlvr_to_dm}
Consider the following lemma, reproducing the argument from \citet{korbak_reinforcement_2022}:
\begin{lemma}
Define
\begin{align}
p_{x,\beta}(y) \;=\; \frac{1}{Z_x(\beta)} \,\piref(y\mid x)\,\exp\!\big(\ver(y,x)/\beta\big),
\qquad
Z_x(\beta) \;=\; \sum_{y} \piref(y\mid x)\,\exp\!\big(\ver(y,x)/\beta\big).\nonumber
\end{align}
Then, 
\begin{align}
\nabla_\theta\; \mathbb{E}_{x}\!\big[\mathrm{KL}(\pi_\theta\|p_{x,\beta})\big]
&=-\mathbb{E}_{x\sim\D,y\sim\pi_\theta}\left[\frac{1}{\beta}\ver(y,x) - \log\frac{\pi_\theta(y\mid x)}{\piref(y\mid x)}\right]\nabla_\theta\log\pi_\theta(y|x)\label{eq:grad-revkl1}\\
&= -\frac{1}{\beta}\,\mathbb{E}_{x\sim\D,y\sim\pi_\theta}\left[\underbrace{\ver(y,x) - \beta\log\frac{\pi_\theta(y\mid x)}{\piref(y\mid x)}}_\text{RLVR pseudo-reward}\right]\nabla_\theta\log\pi_\theta(y|x)
\label{eq:grad-revkl2}.
\end{align}
(Proof in App. \ref{app:rlvr_is_rkl}).
\end{lemma}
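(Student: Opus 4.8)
The plan is to reduce the claim to a direct policy-gradient computation, leaning on the fact that the target $\pcbeta$ is a \emph{fixed} distribution whose normalizer $Z_x(\beta)$ depends only on $\piref$ and $\ver$, and is therefore constant in $\theta$. First I would expand the reverse KL using the definition of $\pcbeta$, collecting the log-ratio into a single term:
\begin{align}
\mathrm{KL}(\pit \,\|\, \pcbeta)
= \E_{y\sim\pit(\cdot|x)}\!\left[\log\frac{\pit(y|x)}{\piref(y|x)} - \tfrac{1}{\beta}\ver(y,x) + \log Z_x(\beta)\right].\nonumber
\end{align}
This isolates a $\theta$-dependent log-ratio term, the $\theta$-independent verifier term, and the constant $\log Z_x(\beta)$.

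Next I would write this as $\sum_y \pit(y|x)\, g_\theta(y)$, with $g_\theta$ the bracketed integrand, and differentiate with the product rule. This yields two contributions: (a) $\sum_y [\nabla_\theta \pit(y|x)]\, g_\theta(y)$, which by the log-derivative identity $\nabla_\theta\pit = \pit\,\nabla_\theta\log\pit$ becomes $\E_{y\sim\pit}[g_\theta(y)\,\nabla_\theta\log\pit(y|x)]$; and (b) the inner-gradient term $\sum_y \pit(y|x)\,\nabla_\theta g_\theta(y)$.

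The heart of the argument is showing that two pieces vanish. In (b), only the log-ratio depends on $\theta$, so $\nabla_\theta g_\theta(y) = \nabla_\theta\log\pit(y|x)$ and hence $\sum_y \pit(y|x)\,\nabla_\theta\log\pit(y|x) = \nabla_\theta\sum_y \pit(y|x) = \nabla_\theta 1 = 0$. The same score-function identity $\E_{y\sim\pit}[\nabla_\theta\log\pit(y|x)] = 0$ also discharges the constant $\log Z_x(\beta)$ appearing inside (a), since it acts as a $y$-independent baseline. What survives from (a) is precisely $\E_{y\sim\pit}\big[(\log\tfrac{\pit(y|x)}{\piref(y|x)} - \tfrac{1}{\beta}\ver(y,x))\,\nabla_\theta\log\pit(y|x)\big]$; taking the outer expectation over $x\sim\D$ and pulling out the overall minus sign gives Eq.~\ref{eq:grad-revkl1}. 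The final line Eq.~\ref{eq:grad-revkl2} is then pure algebra---factor out $1/\beta$ to expose the bracketed RLVR pseudo-reward of Eq.~\ref{eq:rlvr}.

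There is no genuine difficulty here; the only care required is bookkeeping. The decisive observation, and the reason the reverse KL is tractable at all, is that $Z_x(\beta)$ is independent of $\theta$, so the otherwise-troublesome $\nabla_\theta \log Z_x(\beta)$ term never arises. The second subtlety is recognizing that $\E_{y\sim\pit}[\nabla_\theta\log\pit]=0$ must be invoked twice---once to kill the inner-gradient term coming from the entropy-like piece, and once to kill the $\log Z_x(\beta)$ baseline---which is exactly what collapses the expression to the stated form.
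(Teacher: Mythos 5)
Your proof is correct and follows essentially the same route as the paper's: expand the reverse KL using the definition of $\pcbeta$, observe that $\log Z_x(\beta)$ is independent of $\theta$, and apply the log-derivative trick to reach the policy-gradient form, then factor out $1/\beta$. If anything, your explicit product-rule bookkeeping---showing separately that the inner-gradient term $\E_{y\sim\pit}[\nabla_\theta\log\pit(y|x)]=0$ kills both the entropy-like contribution and the $\log Z_x(\beta)$ baseline---is more careful than the paper's own write-up, which applies the score-function identity stated for $\theta$-independent integrands and relegates this exact subtlety to a parenthetical remark.
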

 From Eq. \ref{eq:grad-revkl2}, we can see that the gradient of the KL-Control's objective (right-hand-side, optimizing the expected RLVR pseudo-reward) is proportional to the gradient of the Reverse KL of $\pit$ to $\pcbeta$ (left-hand-side), up to a negative constant that flips the direction of optimization. Therefore, maximizing the regularized reward implies minimizing the divergence, and vice versa. 
 
Furthermore, we note that the distribution $\pcbeta$ is a smooth approximation to the ideal distribution defined in Eq. \ref{eq:pcver}, $p_x$, converging to it as $\beta\rightarrow0^+$:
\begin{lemma}
\begin{equation}
    \lim_{\beta\rightarrow0} \pcbeta = p_x.
\end{equation}
(Proof in App. \ref{app:p_beta_pointwise_equivalence}). 
\end{lemma}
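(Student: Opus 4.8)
The plan is to exploit the fact that the verifier $\ver(y,x)\in\{0,1\}$ takes only two values, which collapses the exponential weight $\exp(\ver(y,x)/\beta)$ into one of two constants and lets me compute the limit of $\pcbeta(y)$ pointwise in $y$.

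First I would partition the response space according to the verifier. Writing $S_1(x)=\sum_{y:\ver(y,x)=1}\piref(y\mid x)$ for the base mass on correct responses and $S_0(x)=\sum_{y:\ver(y,x)=0}\piref(y\mid x)$ for the mass on incorrect ones, the normalizer factorizes as
$$Z_x(\beta)=e^{1/\beta}\,S_1(x)+S_0(x),$$
since $\exp(\ver(y,x)/\beta)$ equals $e^{1/\beta}$ on correct responses and $1$ on incorrect ones. Note that $S_1(x)$ is exactly the partition function $Z_x$ of the ideal target $p_x$ in Eq.~\ref{eq:pcver}, and the whole argument requires $S_1(x)>0$, i.e. that at least one correct response carries positive base probability; this is the only nondegeneracy assumption, and the claim is vacuous otherwise.

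Next I would take the limit separately on the two blocks. For a fixed $y$ with $\ver(y,x)=1$, I factor $e^{1/\beta}$ out of both numerator and denominator to obtain
$$\pcbeta(y)=\frac{\piref(y\mid x)}{S_1(x)+S_0(x)\,e^{-1/\beta}},$$
and since $e^{-1/\beta}\to 0$ as $\beta\to 0^+$ this converges to $\piref(y\mid x)/S_1(x)=p_x(y)$. For a fixed $y$ with $\ver(y,x)=0$, the numerator carries no $e^{1/\beta}$ factor, so
$$\pcbeta(y)=\frac{\piref(y\mid x)\,e^{-1/\beta}}{S_1(x)+S_0(x)\,e^{-1/\beta}},$$
whose numerator vanishes while the denominator tends to $S_1(x)>0$; hence $\pcbeta(y)\to 0=p_x(y)$. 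Combining the two cases yields $\lim_{\beta\to 0}\pcbeta(y)=p_x(y)$ for every $y$.

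The argument is essentially a one-line computation once the sum is split, so I do not anticipate a real obstacle; the only points deserving care are (i) recording the nondegeneracy hypothesis $S_1(x)>0$, and (ii) being explicit that the convergence asserted is pointwise in $y$ — which is all the statement needs. If desired, one could strengthen this to convergence in total variation on the discrete space $\gY$ via Scheff\'e's lemma, since both limits are probability mass functions and the total incorrect-block mass $S_0(x)\,e^{-1/\beta}/(S_1(x)+S_0(x)\,e^{-1/\beta})\to 0$.
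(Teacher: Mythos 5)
Your proof is correct and follows essentially the same route as the paper's appendix proof, which makes the identical decomposition $Z_x(\beta)=e^{1/\beta}S_1(x)+S_0(x)$ (writing $Z=S_1$, $S_0=1-Z$), multiplies numerator and denominator by $e^{-1/\beta}$, splits on the verifier value, and records the same nondegeneracy assumption $Z\in(0,1]$. The only difference of emphasis is the direction of the conclusion: the paper computes the exact total-variation distance $\lVert \pcbeta - p_x\rVert_{\mathrm{TV}} = e^{-1/\beta}(1-Z)\big/\bigl(Z+e^{-1/\beta}(1-Z)\bigr)$ and deduces pointwise convergence as a corollary, whereas you prove pointwise convergence directly and defer total variation to Scheff\'e's lemma --- two valid orderings of the same computation.
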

However, the gradient of the Reverse KL is dominated by 
$-\tfrac{1}{\beta}\nabla_\theta\mathbb{E}_{y\sim\pi_\theta}[v(y,x)]$ (Eq. \ref{eq:grad-revkl1}),  revealing why minimizing Reverse KL becomes an increasingly aggressive mode-seeking proxy for maximizing expected reward, at the cost of diversity even if the diversity of responses is well-captured by the target $p_x$. In the limit, with $\beta=0$, the Reverse KL becomes undefined, the KL-Control algorithm reduces to plain REINFORCE, and no safeguard remains to preserve diversity.

\subsection{And back (as a special case of $\alpha$-DPG)}
Having defined the target distribution $p_x$, we are left with the task of picking a divergence to train a policy $\pit$ to approximate it. 
As we have seen, the Reverse KL $\KL(\pi_\theta || p)$ implicitly employed by RLVR, is a mode-seeking or zero-forcing divergence~\citep{bishop_pattern_2006,huszar_how_2015,li_choice_2025}. This means that it will penalize placing probability mass in regions where $p$ assigns little or none, but it is relatively indifferent to ignoring modes of $p$ altogether. As a result, the learned policy tends to concentrate on a small subset of high-probability modes while disregarding other plausible regions of the target distribution, which can reduce diversity and lead to brittle or degenerate behavior. 

In contrast, the Forward KL, $\KL(p \,\Vert\, \pi_\theta)$, is mass-covering. %\mdfn{I find the terminology zero-forcing vs zero-avoiding very confusing. Actually, I think it is $KL(p,\pi)$ which is zero-avoiding (meaning that $\pi$ avoids putting 0 on y's for which p is positive) and $KL(\pi,p)$ which is zero-forcing (meaning that $\pi$ has to put 0 on y's for which p is null), in the terminology of Bishop, so the reverse of what is said here. I think it would be better to avoid this terminology.}
Here, the divergence becomes large whenever $\pi_\theta$ assigns insufficient probability to regions where $p$ has support, encouraging the policy to cover all modes of the target distribution. While this can improve diversity and robustness, it often comes at the cost of assigning non-negligible probability mass to low-reward or unlikely regions, leading to less precise approximations of the target.

This tension between mode-seeking and mass-covering behavior motivates considering a broader family of divergences. The $\alpha$-divergences family~\citep{renyi_measures_1961, cressie_multinomial_1984, amari_-divergence_1985} --a subfamily of $f$-divergences-- provides exactly such a continuum, interpolating smoothly between the Forward KL (as $\alpha \to 0$) and the Reverse KL (as $\alpha \to 1$). In between (for $\alpha=0.5$) lies the squared Hellinger distance~\citep{hellinger_neue_1909}. %Previous applications in ML included typically Variational Inference~\citep{zhu_information_1995,minka_divergence_2005,hernandez-lobato_black-box_2016}, but also found other applications such as the stabilizing of GANs~\citep{cai_utilizing_2020}. 
By tuning $\alpha$, one can balance the degree of mode-seeking versus mass-covering behavior, potentially capturing the benefits of both extremes while mitigating their drawbacks.
We refer to Table \ref{tab:alpha-divergence} for a full characterization.
We denote the parametrization of $f$-DPG with $\alpha$-divergences as $\bm\alpha$\textbf{-DPG}, which has the following pseudo-reward:
\begin{align}
R_\theta(y,x) = -f'\left(\frac{\pi_\theta(y|x)}{p_x(y)}\right) =\frac{1}{1-\alpha}\left(\left(\frac{p_x(y)}{\pit(y|x)}\right)^{1-\alpha} - 1\right).
\end{align}
Because for low values of $\alpha$, peaks in the $p/\pi$ ratios can induce large variance in $R_\theta$, 
% we clip it after discounting the $1-\alpha$ constant to make its scale independent of the value of $\alpha$:
we clip the parenthetical factor to a maximum $M$. We also rescale by discounting the constant $1/(1-\alpha)$:
\begin{align}
\hat R_\theta(y,x) \doteq  \min\left(\left(\frac{p_x(y)}{\pit(y|x)}\right)^{1-\alpha} - 1, M\right).
\end{align}%
Finally, we use the gradient formula in Eq. \ref{Eq.:fdpg-gradient}, setting the baseline $B(x)$ to the leave-one-out per-context average of the pseudo rewards~\citep{kool_buy_2019,ahmadian_back_2024}.

It is interesting to note its behavior when setting $\alpha=1-\epsilon$ (i.e., close to the Reverse KL). Then,
\begin{align}
    \nabla_\theta D_{\fa}(\pit(\cdot|x) || p_x) &= \mathbb{E}_{x\sim\D,y\sim\pi_\theta} \fa'\left(\frac{\pit(y|x)}{p_x(y)}\right) \nabla_\theta\log\pit(y|x) \\
    &\propto -\mathbb{E}_{x\sim\D,y\sim\pi_\theta}\left(\frac{p_x(y)}{\pit(y|x)}\right)^{\epsilon} \nabla_\theta\log\pit(y|x) \label{eq:f_alpha_prop} \\
    & \approx \mathbb{E}_{x\sim\D,y\sim\pi_\theta} -\ver(y,x) \nabla_\theta\log\pit(y|x).
\end{align}
by discounting for simplicity the scaling constant and the baselines in Eq. \ref{eq:f_alpha_prop}, and noting that for $0<\epsilon\ll1$ and if $|x|$ is bounded then $x^\epsilon \approx \sI[x\neq 0]$ and $p_x(y) \neq 0 \Leftrightarrow \ver(y,x) = 1$ in the last step.
Thus, for $\alpha$ that is lower but very close to $1$, we are again recovering the REINFORCE learning rule. 
In contrast, when $\alpha=0$, then $D_{f_\alpha}$ becomes exactly the Forward KL, and thus we recover the original KL-DPG algorithm~\citep{parshakova_distributional_2019,khalifa_distributional_2021}, which conserves more diversity from the original distribution sacrificing some precision:
\begin{align}
    \nabla_\theta   D_{\fa}(\pit(\cdot|x) || p_x)
    = -\mathbb{E}_{x\sim\D,y\sim\pi_\theta}\left(\frac{p_x(y)}{\pit(y|x)} - 1\right) \nabla_\theta\log\pit(y|x) 
\end{align}
Notably, it is easy to see that if a fixed amount of samples are obtained just from $\piref$ instead of $\pit$, KL-DPG reduces to RS-FT~\citep{zelikman_star_2022,yuan_scaling_2023}, as it optimizes the cross-entropy to samples from the base model filtered by the verifier (see App. \ref{app:rs_ft_is_kl_dpg} for more details).

Finally, while the general formal properties of $\alpha$-divergences are well-studied, our specific setup, with a target distribution that has a restricted support 
%(only over verifiable outputs) 
(over verifiable outputs only) 
has some interesting novel formal properties that we describe in App. \ref{app:decomposition}.

\begin{table}
\resizebox{\textwidth}{!}{%
\begin{tabular}{@{}l l l l l l l@{}}
\hline
Parameter &
Name/Correspondence &
Generator \(f_\alpha(t)\) (generic) &
\(f'_\alpha(t)\) (generic) &
\(f'_\alpha(\pi/p)\) &
%\(f'_\alpha(1)\) &
\(f'_\alpha(\infty)\) \\
\hline
\(\alpha\neq 0,1\) &
$\alpha$–divergence (\(f\)-div.) &
\(\displaystyle \frac{t^{\alpha}-\alpha t-(1-\alpha)}{\alpha(\alpha-1)}\) &
\(\displaystyle \frac{t^{\alpha-1}-1}{\alpha-1}\) &
\(\displaystyle \frac{1}{\alpha - 1}\left(\left(\frac{p}{\pi}\right)^{1-\alpha} - 1\right)\) &
%\(0\) &
\(\begin{array}{@{}l@{}} \alpha<1:\ \tfrac{1}{1-\alpha}\\ \alpha>1:\ +\infty \end{array}\) \\
\hline
\(\alpha\to 1\) &
\textbf{Reverse KL} \text{KL}($\pi\Vert p$)&
\(\displaystyle \lim_{\alpha\to 1} f_\alpha(t)=t\log t - t + 1\) &
\(\displaystyle \lim_{\alpha\to 1} f'_\alpha(t)=\log t\) &
\(\log \rev{(\pi/p)}\) &
%\mdtII{\(\log (\pi/p)\) &}
%\(0\) &
\(+\infty\) \\
\(\alpha\to 0\) &
\textbf{Forward KL} \text{KL}($p\Vert \pi$)&
\(\displaystyle \lim_{\alpha\to 0} f_\alpha(t)=-\log t + t - 1\) &
\(\displaystyle \lim_{\alpha\to 0} f'_\alpha(t)=1-\tfrac{1}{t}\) &
\(\rev{1}-p/\pi\) &
%\(0\) &
\(1\) \\
\(\alpha=\tfrac{1}{2}\) &
Hellinger (exact) &
\(-4\sqrt t + 2t + 2\) &
\(-\tfrac{2}{\sqrt t} + 2\) &
\(-2\sqrt{p/\pi} + 2\) &
%\(0\) &
\(2\) \\
\hline
\end{tabular}%
}
\caption{Parametrization of the  $\alpha$-divergence as an $f$-divergence $D_{f_\alpha}\!(\pi,p)$.}
\label{tab:alpha-divergence}
\end{table}

\section{Experiments}
\label{sec:experiments}
%\subsection{Formal Mathematics}

% 
\paragraph{Informal vs Formal Mathematics}

Informal mathematics has become a common paradigm for training large language models (LLMs) on reasoning tasks, with widely used benchmarks such as MATH~\citep{hendrycks_measuring_2021} and AIME~\citep{sun_challenging_2025}. These methods have achieved impressive performance~\citep{DeepMind2025IMO}, but they also face inherent challenges. Informal proofs and solutions often lack guarantees of rigor, making large-scale verification difficult and requiring heuristics like majority voting rather than provable correctness. While effective in many scenarios, these approaches may be less suited for tasks that aim to explore novel results or a wide variety of solutions.
Formal methods provide a promising alternative~\citep{Tao2025}. Proof assistants such as \textsc{Lean}~\citep{Moura2015TheLT}, \textsc{COQ}~\citep{barras:inria-00069968}, and \textsc{Isabelle}~\citep{isabelle} represent statements and proofs in a formally verifiable language. This not only allows for efficient and rigorous proof generation but also reduces dependence on labeled data. At inference, the paradigm shifts from achieving consensus over a large pool of candidates (eg., via majority voting) to ensuring sufficient diversity among them to guarantee larger coverage.
Theorem provers must therefore go beyond producing a single correct proof but should generate diverse candidate proofs to more fully explore the solution space. Prior work in \textsc{Lean} has applied reinforcement learning (e.g., GRPO~\citep{kimina,dsp1,dsp2}), but such methods can result in decreased diversity~\citep{he_rewarding_2025}. In this work, we present an approach to formal theorem proving in \textsc{Lean} that seeks to improve both precision for better efficiency and diversity to guarantee high coverage, highlighting the potential of formal reasoning for scalable discovery.

%\subsubsection{Models}
\paragraph{Models}
For these experiments we consider DeepSeek-Prover-V1.5-SFT \citep{dsp1} a 7B parameters models based on DeepSeek model and further pre-trained on high-quality mathematics and code data, with a focus on formal languages such as \textsc{Lean}, \textsc{Isabelle}, and \textsc{Metamath}, and finetuned on \textsc{Lean4} code completion datasets \citep{dsp1}.

%\subsubsection{Dataset}
\paragraph{Dataset}
We follow the experimental setup introduced in prior work \citep{he_rewarding_2025}. The training set is composed of 10K solvable \textsc{Lean} problems extracted and filtered from the \textsc{Lean} Workbook dataset \citep{ying2024lean,wu2024internlm25stepproveradvancingautomatedtheorem}, from which 200 problems are kept unseen as a test set.

%\subsubsection{Reward function and \textsc{Lean4} Verifier}
\paragraph{Reward function and \textsc{Lean4} Verifier}
The reward function extracts the last \textsc{Lean4} code block in the generated sequence and verifies it automatically by the \textsc{Lean} proof assistant. The sequence is given a reward of $1$ if it is verified as correct and $0$ otherwise. In our experiments we used the same \textsc{Lean4} and Mathlib4 version (lean4:v4.9.0) as used in DeepSeek-ProverV1.5 \citep{dsp1}. 

%\subsubsection{Baselines}
\paragraph{Baselines}
We compare against several baselines, focusing on critic-free methods, which have become standard thanks to not requiring an additional copy of the model: \textbf{GRPO} \citep{deepseek-ai_deepseek-r1_2025} --for which we only consider its unbiased instantiation, Dr. GRPO~\citep{liu2025understanding}-- and other variants with diversity-preserving regularization: \textbf{High-KL} with a strong KL penalty($\beta=0.1$), \textbf{Rw-Ulkly}~\citep{he_rewarding_2025} 
with a rank bias promoting diversity $\beta=0.25$, and \textbf{Pass@k training} ~\citep{chen_passk_2025, tang2025optimizing} directly optimizes \texttt{pass}@$k$ via a leave-one-out advantage formulation that reduces variance. We further include in our comparison \textbf{GPG}~\citep{chu_gpg_2025}, \textbf{ReMax}~\citep{li_remax_2024} and \textbf{RLOO}~\citep{ahmadian_back_2024}. For reference, \textbf{Base-SFT} is the model used as a base for all methods.

\paragraph{Training}
All trainings are done on a single node of 4xA100 with 28 CPUs dedicated for running proof assessment in parallel. Due to the high variance in \textsc{Lean} compilation and verification time, the reward function is executed asynchronously. All training scripts are based on \textsc{veRL}~\citep{sheng2024hybridflow}. By default, since we use a verifiable reward, we follow \citet{mistral-ai_magistral_2025,liu2025understanding} and disable advantage normalization while setting the KL divergence penalty to $\beta=0$. For all $\alpha$-DPG training, we set the clipping value to $M=10$, train on float16 following ~\citet{qi_defeating_2025}, and constrain the partition function on the lower side as $Z_x \geq \epsilon$ with $\epsilon = 1e^{-4}$. The partition function is computed online using the sampled responses for each problem, thus not incurring any additional computational cost with respect to baseline methods. See App. \ref{app:z_estimation} for more details on the estimation of Z and an ablation experiment using more samples to compute the partition function.
Across all methods, we fix the maximum response length to 1024 tokens and use $512$ generated sequences per step for $200$ iterations ($\approx$ 3 epochs).  See App. \ref{app:additional-experiment-details} for additional details on hyperparameters. 

%\subsubsection{Evaluation Metric:}
\paragraph{Evaluation Metric}
%\paragraph{Pass@k}
To evaluate model performance on reasoning and generation tasks, we report \textbf{\texttt{pass@k}}~\citep{chen_et_al_evaluating_2021}, a standard metric widely used in code generation and problem-solving benchmarks. The metric measures the probability that at least one correct solution is found within the top-$k$ sampled outputs of the model.
For a problem with $n$ generated outputs with $n\geq k$, of which $c$ are correct, an unbiased estimate of the  probability that at least one of $k$ sampled outputs is correct is \citep{chen_et_al_evaluating_2021}
\[
\text{\texttt{pass@}}k \;=\; 1 - \frac{\binom{n-c}{k}}{\binom{n}{k}}.
\]
Averaging over problems gives the overall \texttt{pass@}$k$. We generate responses using temperature \hbox{$T=1$} and nucleus sampling~\citep{holtzman_curious_2020} with parameter $p=0.99$.

\subsection{Results}

\paragraph{Coverage vs. precision analysis}
We analyze models along the trade-off between coverage, measured by \texttt{pass@256}, and precision, measured by \texttt{pass@1} (Fig.~\ref{fig:main_figure}). The results reveal a clear precision–coverage frontier. The base SFT model attains relatively broad coverage but low precision, indicating the ability to solve diverse problems but with a low single-sample success probability, placing it far from the frontier. Pass@k training improves coverage while leaving precision largely unchanged. Training strategies such as GRPO with KL regularization and GRPO-Rw-Ulkly improve precision without substantially affecting coverage, still shifting performance toward a more favorable region of the trade-off space. In contrast, RLOO, GRPO without KL regularization, and GPG achieve high precision at the cost of reduced coverage.
Across $\alpha$-DPG variants, lower-$\alpha$ settings (e.g., $\alpha=0.25$) achieve the highest coverage while still considerably improving precision over SFT. Increasing $\alpha$ further improves precision, reaching parity with RL-based methods at large values (e.g., $\alpha \geq 0.995$), while typically retaining higher coverage. Importantly, most $\alpha$-DPG models lie on or near the Pareto frontier, demonstrating that the method provides a controllable and efficient trade-off between precision and coverage, outperforming or matching competing RL baselines across much of the spectrum.
% We analyze models in terms of their capacity to cover a broad set of problems versus their precision at producing correct solutions. Coverage is captured by the \texttt{pass@256} metric, while precision is reflected in \texttt{pass@1}. Results are summarized in Fig.~\ref{fig:main_figure}. Models trained with $\alpha$-DPG ($\alpha\geq0.995$) and GPG achieve high \texttt{pass@1}, indicating high precision on solvable problems. GPG, on the other hand exhibit the lowest \texttt{pass@256} along with with RLOO, GPG and $\alpha=0.999$, suggesting limited overall coverage. Nonetheless, $\alpha$-DPG models generally show improved coverage with respect to RL baselines. In contrast, the base SFT model shows poor precision but can solve a wider range of problems. Methods such as Pass@k training, or GRPO with KL regularization improve in relation to the base model. $\alpha$-DPG  ($\alpha=0.25$) achieves the highest coverage while also improving in \texttt{pass@1}. Models trained with  $\alpha$-DPG lie on or close to the Pareto frontier, meaning that these models achieve an excellent trade-off between precision and coverage. 

\paragraph{Pass@$k$ curves}
Figure \ref{fig:passk} illustrates the \texttt{pass@}$k$ performance on the test set measured over $n=256$ samples for a few chosen models. 
First, looking at the left panel, we note that we have reproduced the results reported by \citet{he_rewarding_2025}, where the GRPO model starts with a much higher \texttt{pass@1} score, but then the base model overpasses it as it reaches \texttt{pass@16}. %The left panel shows the performance of Amari-DPG with different values of $\alpha$ compared to SFT and dr-GRPO. 
Furthermore, there is no single model that dominates all others across all values of $k$ indicating a Pareto trade-off between precision (\texttt{pass@1}) and coverage (\texttt{pass@256}).
Nonetheless, $\alpha$-DPG ($\alpha=0.999$) generally dominates GRPO and other pure RL-based techniques, with the only exception of ReMax that has better coverage albeit somewhat lower precision.
On the other hand, $\alpha=0.25$ dominates the base model and other diversity-preserving baselines such as Pass@k training and GRPO with KL regularization, achieving the best performance on \texttt{pass@256}. Rw-Ulkly, in turn, starts with higher \texttt{pass@1}, but $\alpha=0.25$ outperforms for $k\geq4$.
The middle panel highlights the RL-based baselines, confirming that KL regularization helps prevent diversity collapse and Pass@k training and Rw-Ulkly baselines offer competitive results. The rightmost panel is comparing different variants of $\alpha$-DPG in relation to the base model. While the base model surpasses and matches models with $\alpha\geq0.75$, models with lower values of alpha dominate the base model.

\begin{figure}
    \centering
    \includegraphics[width=0.33\linewidth]{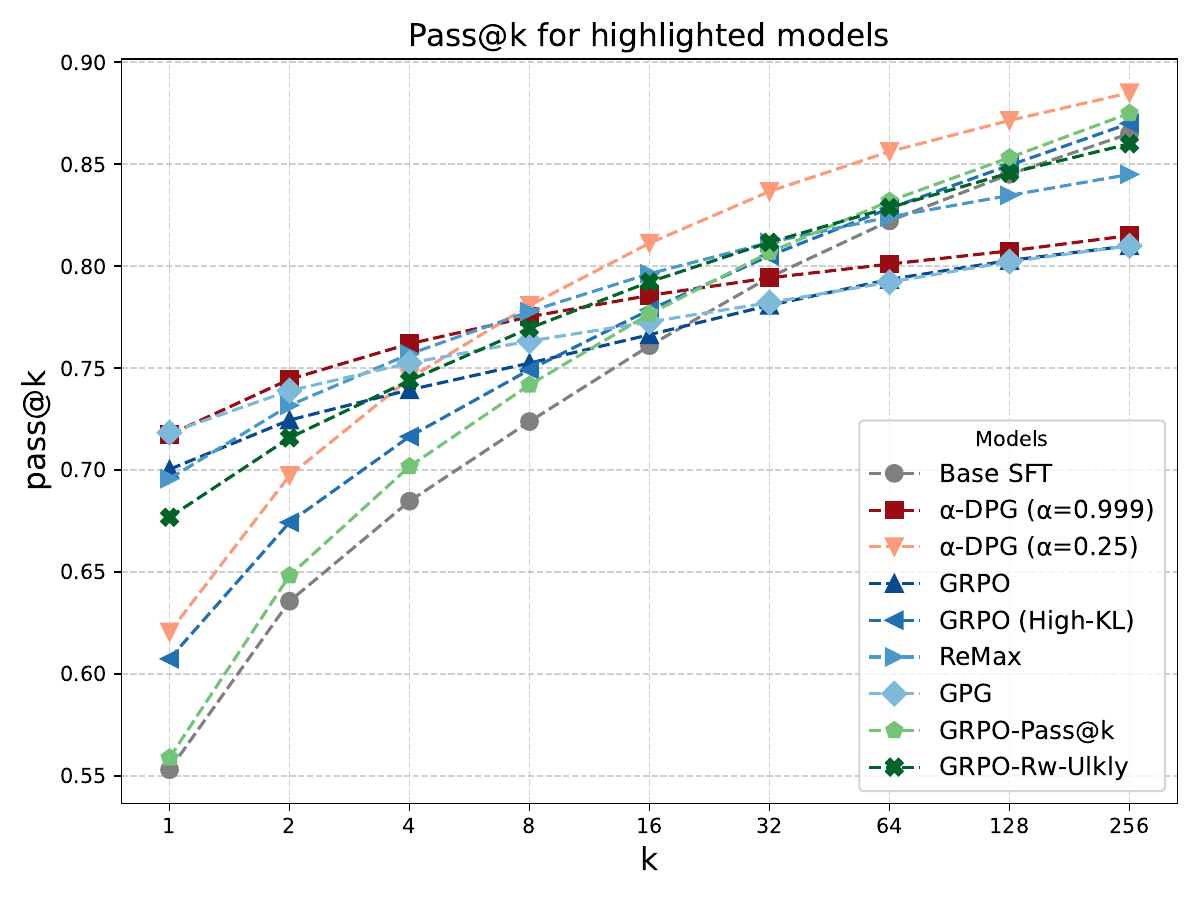}
    \includegraphics[width=0.33\linewidth]{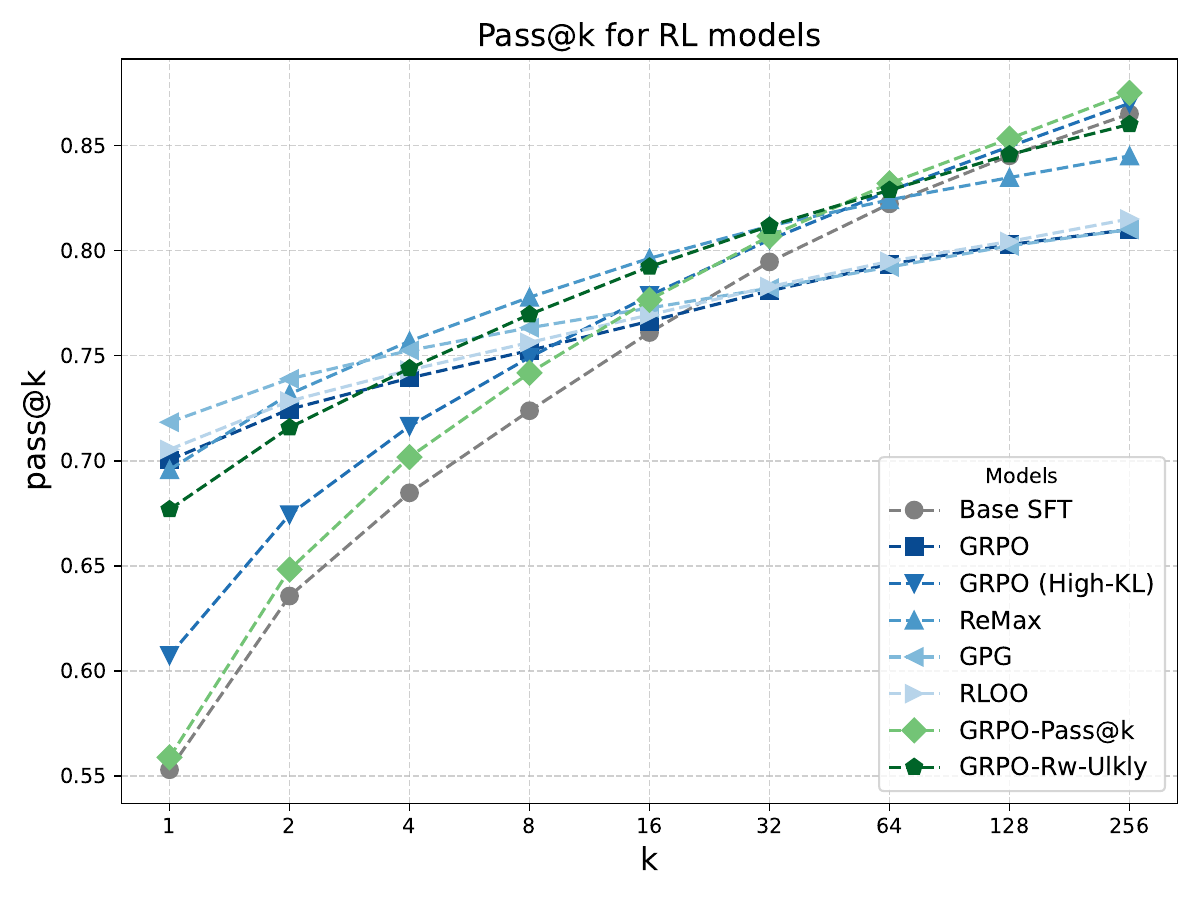}
    \includegraphics[width=0.32\linewidth]{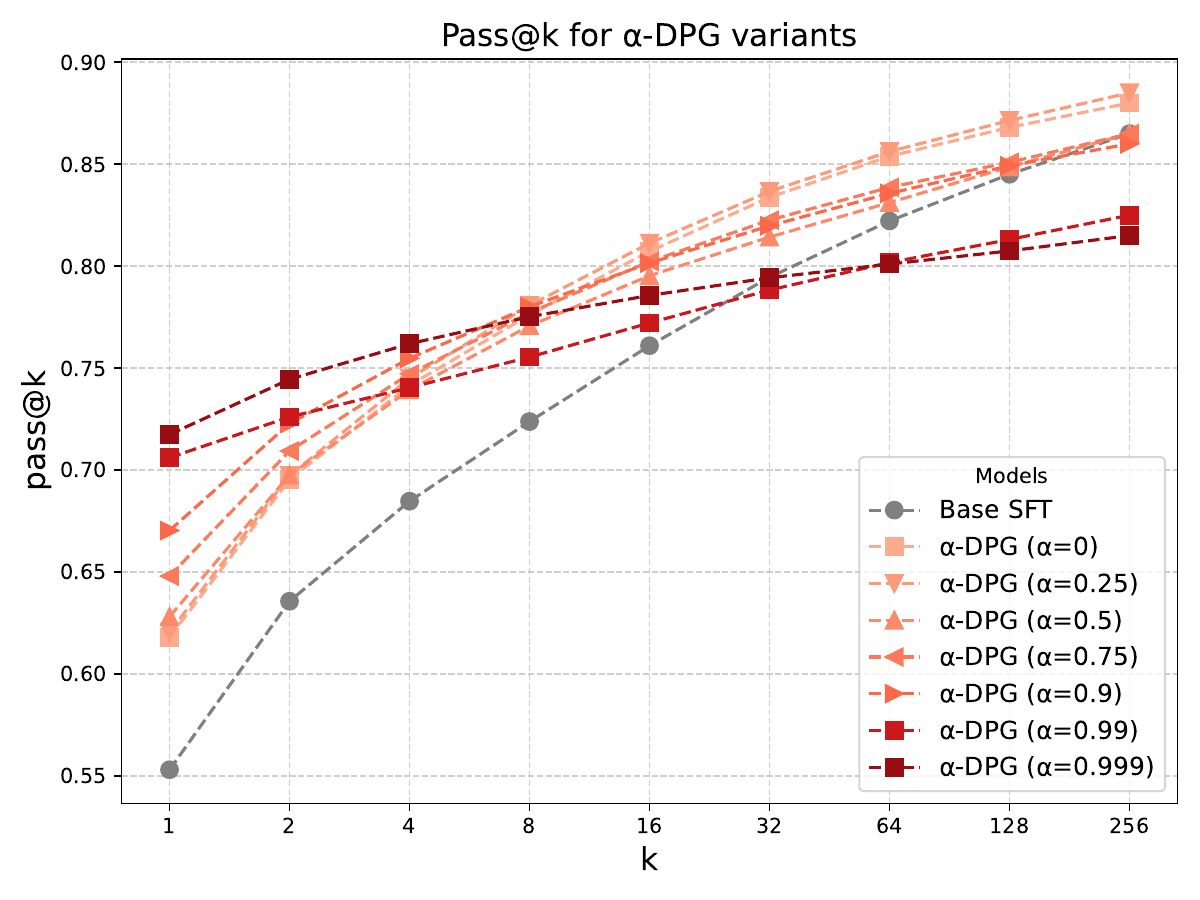}
     \caption{\texttt{Pass@}$k$ curves on the test set for the Base-SFT model tuned with different methods.}
    \label{fig:passk}
\end{figure}

\paragraph{Problem difficulty analysis}
In this section, we examine how training affects problem solvability. We categorize problem difficulty based on model performance, measured as the proportion of correctly solved sequences. A problem is considered easy for a given model if at least 80\% of the sampled sequences are correct, medium if 20–80\% are correct, and hard if fewer than 20\% are correct. This notion of difficulty has a direct relation with the efficiency of the model at solving a given problem as the number of samples until generating one correct solution follows a geometric distribution whose parameter is the model's sampling accuracy.
%\mdfn{Could one make this clearer by saying: ``A problem is considered easy for \textbf{a sampler} if at least 80\% of \textbf{sampled} sequences are correct, medium if 20–80\% are correct, and hard if fewer than 20\% are correct'' ?}
In Figure \ref{fig:difficulty} we plot how the problems difficulties evolve after having trained the Base SFT model using various methods. Problems on the diagonal (grey) remain unaffected by training. Elements in the lower-left triangle (blue) represent problems for which solving efficiency improved, while elements in the upper-right triangle (red) indicate problems where sampling efficiency decreased. As we can see, many problems that were medium or hard for the base model became easy after training both for GRPO and $\alpha=0.999$. However, this came at the cost of other problems in these same categories becoming unsolvable given the same sampling budget. $\alpha$-DGP ($\alpha = 0.25$) and GRPO (High-KL) on the other hand, improve sample efficiency on fewer problems at the cost of just three problems becoming unsolvable.

\begin{figure}
    \centering
    \includegraphics[width=0.24\linewidth]{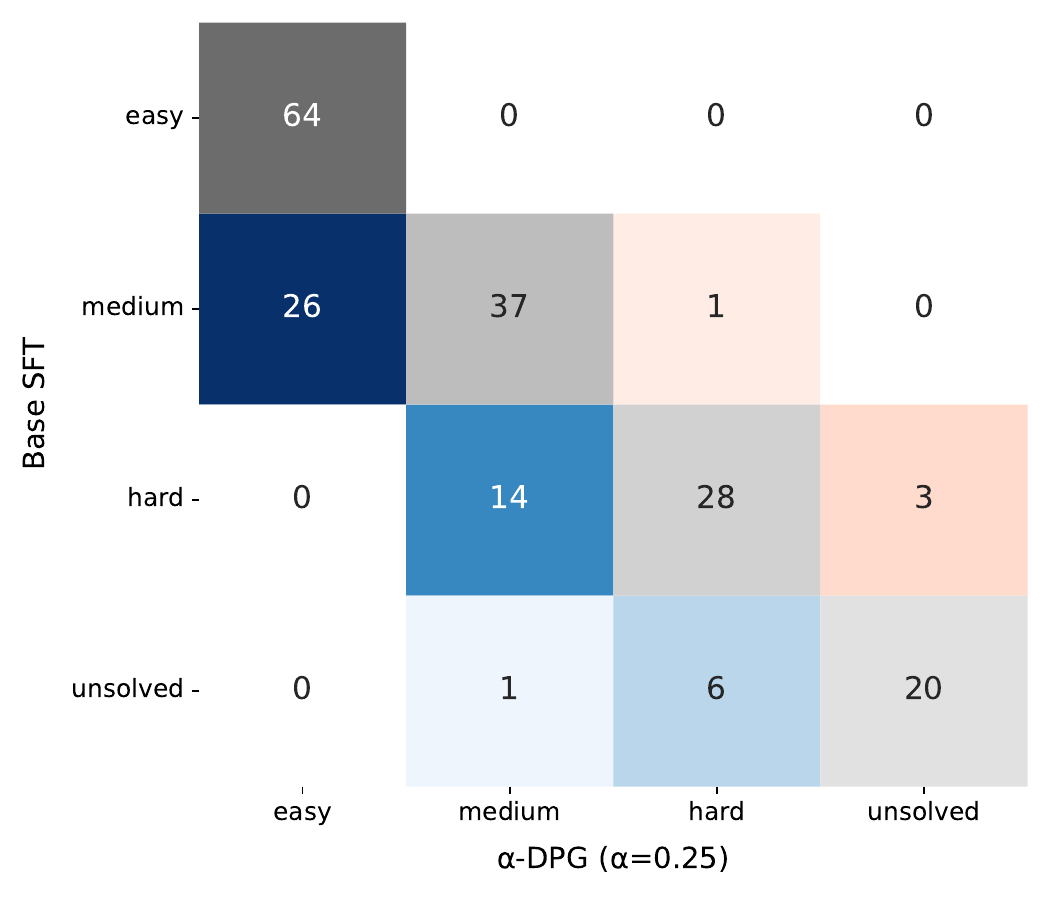}
    \includegraphics[width=0.24\linewidth]{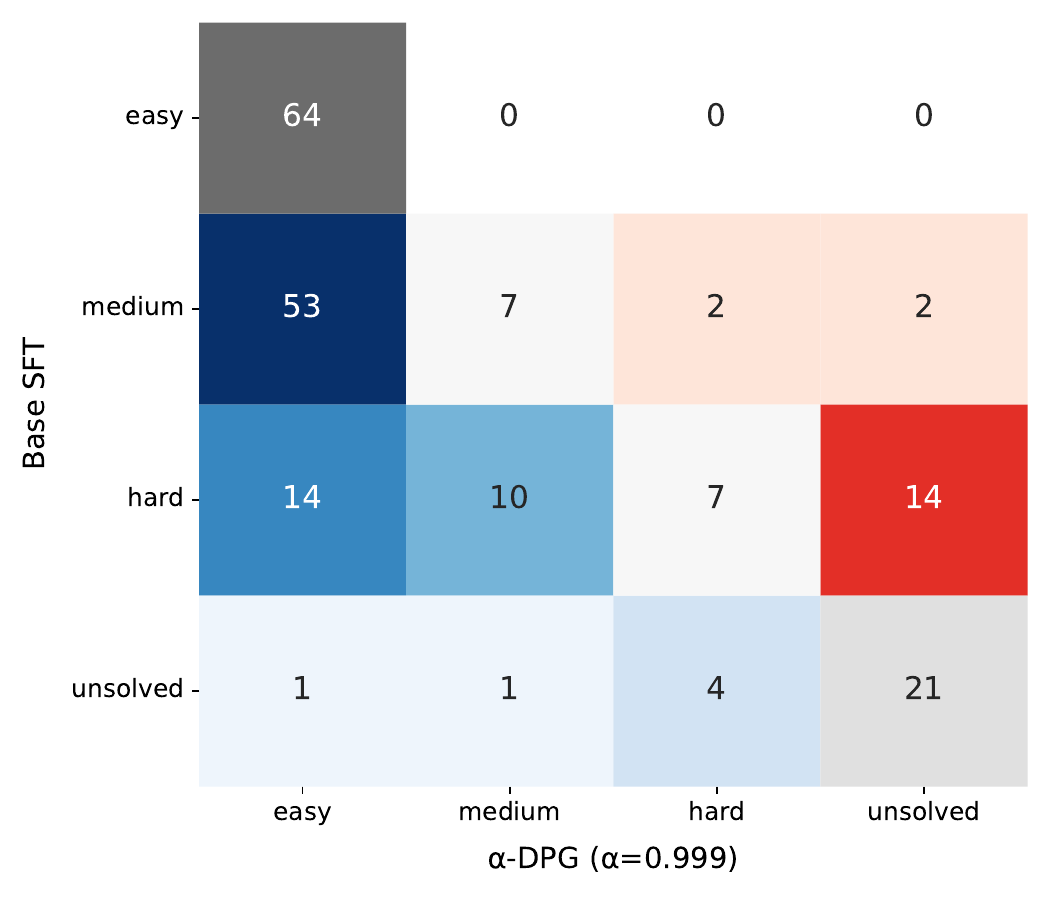}
    \includegraphics[width=0.24\linewidth]{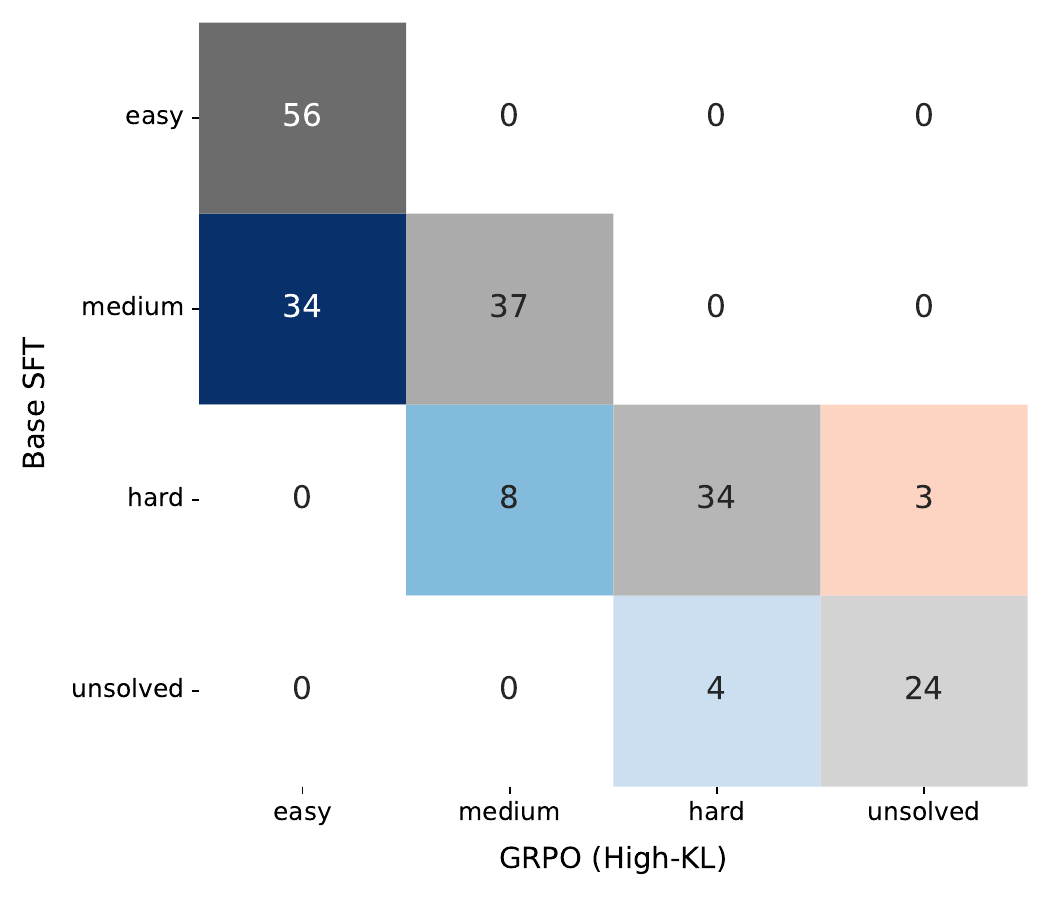}
    \includegraphics[width=0.24\linewidth]{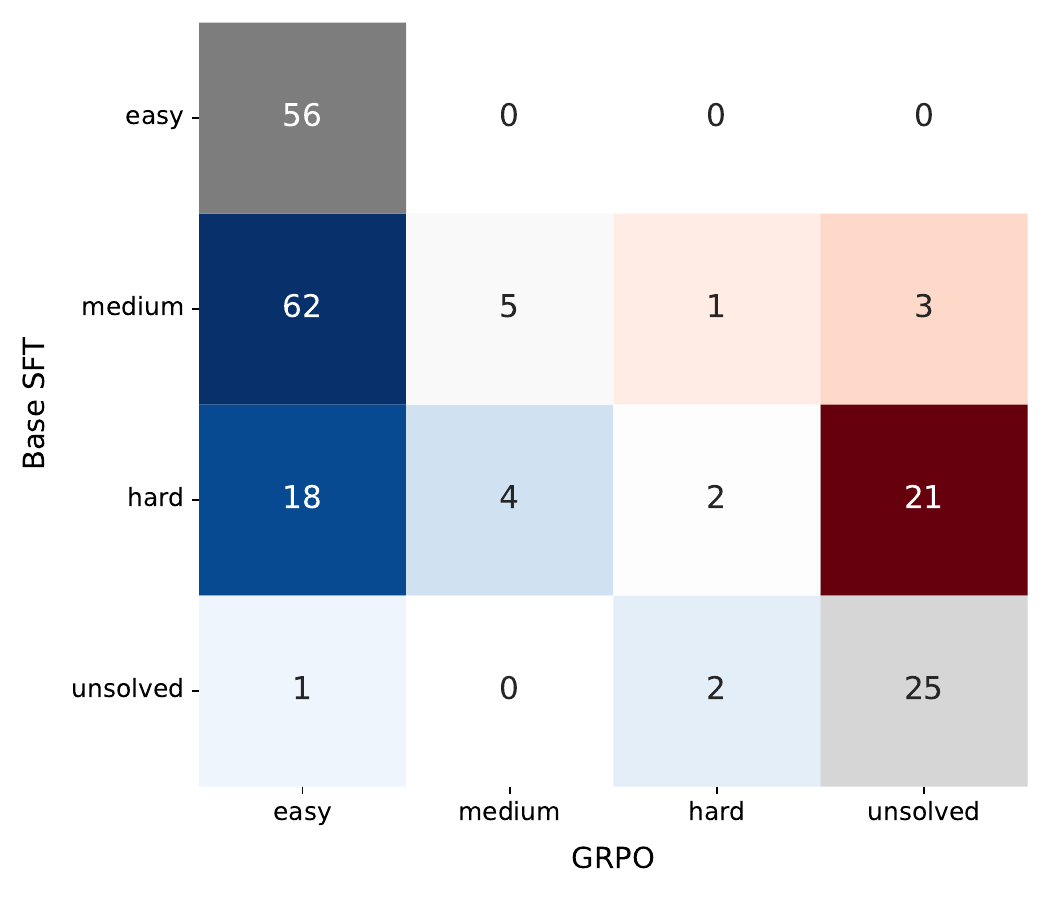}
     \caption{Problem Difficulty Transition Matrices showing the number of problems that transition from an initial difficulty classification under %Supervised Fine-Tuning (SFT)
     the base model (Base-SFT) (y-axis) to a final difficulty after post-training (x-axis). The results highlight a polarizing effect: $\alpha$-DPG ($\alpha=0.999$) and GRPO exhibit similar behavior, improving performance on a majority of medium-difficulty problems by making them easy, but also degrading performance on hard problems, causing many of them to become unsolved. $\alpha$-DPG ($\alpha = 0.25$) and GRPO (High-KL) are more conservative, %is more sample efficient on less problems (compared to GRPO) 
     improving sample efficiency on fewer problems
     but harder problems remain solvable.}
    \label{fig:difficulty}
\end{figure}

\paragraph{Diversity Analysis}
We investigate the relationship between proof diversity and model performance. We split this analysis into two components: the \textit{tactics} and \textit{premises} used in candidate \textsc{Lean} proofs. A \textit{tactic} is a command that transforms a proof goal into simpler subgoals (e.g., \texttt{intro}, \texttt{apply}, \texttt{rw}), while a \textit{premise} is a lemma or previously proven theorem that can be used within a proof (e.g., \texttt{mul\_comm}, \texttt{mul\_assoc}) . As measures of diversity, we employ the Shannon index and the Gini-Simpson index. For each problem, we evaluate 256 generated proof sequences. At every proof state, we compute both the Simpson index and Shannon entropy over the choices of tactics and premises. Concretely, for a given problem, we count the occurrences of each premise and tactic, and compute the Simpson index as $D= 1 - \sum_{i=1}^S p_i^2$ and the Shannon index as $H = - \sum_{i=1}^S p_i \ln p_i$
where \(p_i\) is the relative abundance of premise or tactic \(i\), and \(S\) is the total number of premises or tactics. These metrics are then aggregated across all problems to capture the overall diversity of candidate sequences. Higher diversity in tactics and premises in candidate proofs generally correlates with improved \texttt{pass@256} performance, whereas it is anticorrelated with \texttt{pass@1} as shown on the left panel of Figure~\ref{fig:diversity} (and, additionally, in Appendix Figure ~\ref{fig:app:diversity}).

\begin{figure}
    \centering
    \begin{subfigure}[c]{0.45\textwidth}
    \includegraphics[width=\linewidth]{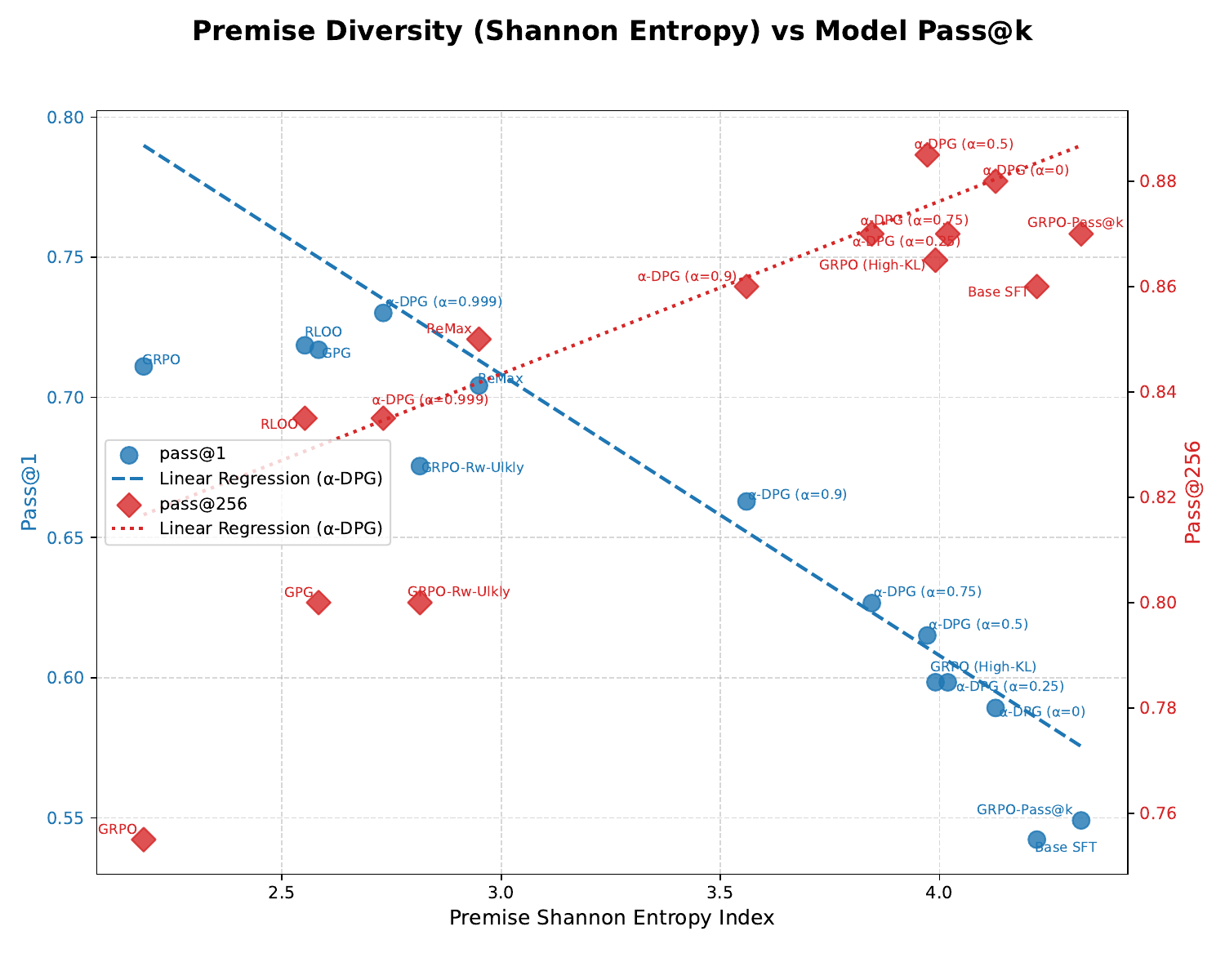}%
    \end{subfigure}%
    \hfill%
    \begin{subfigure}[c]{0.49\textwidth}%
    \includegraphics[width=\linewidth]{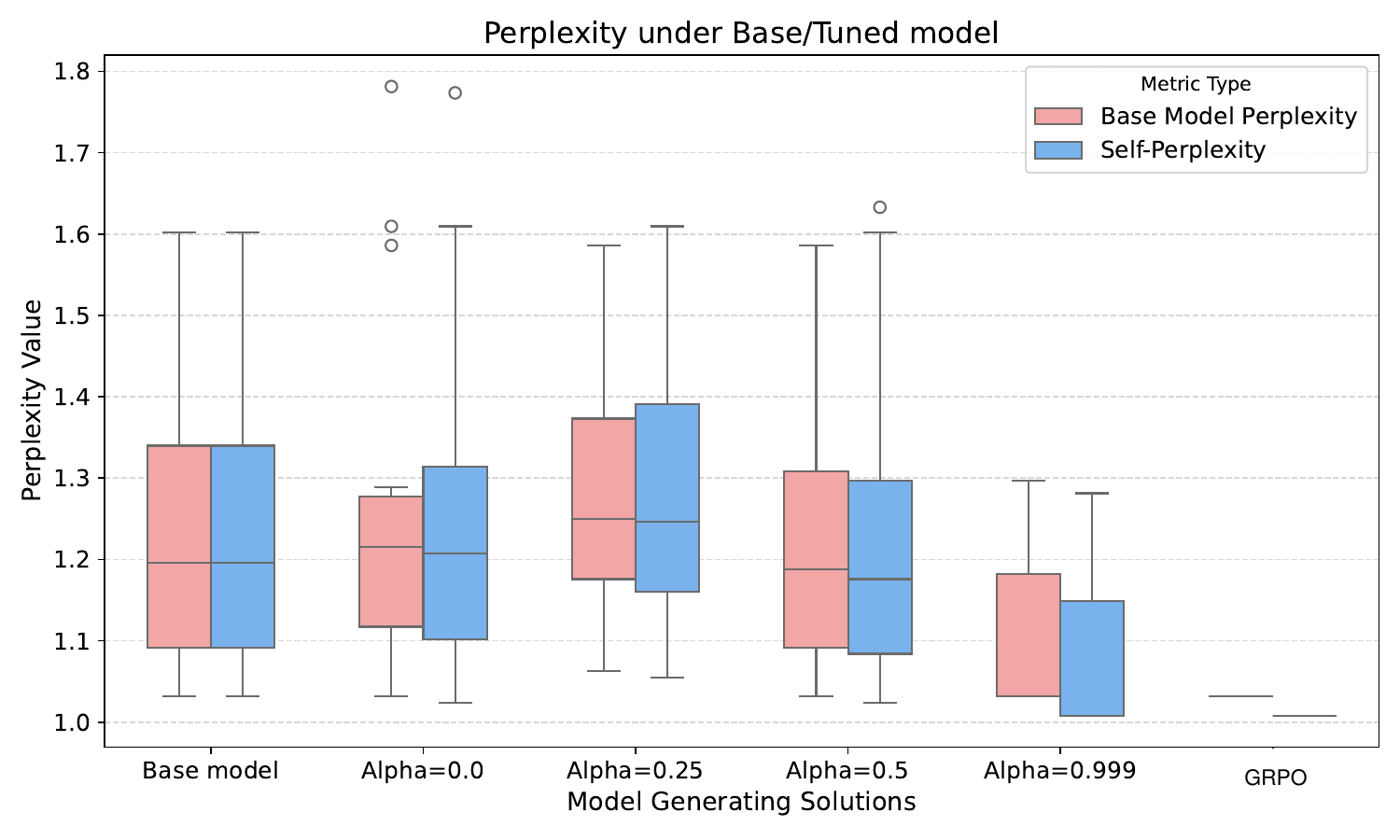}%
    \end{subfigure}%
     \caption{Left: Relationship between premise diversity measured by Shannon index and model performance (pass@1 and pass@256). The regression lines are computed for $\alpha$-DPG models. The left y-axis shows pass@1 performance, and the right y-axis shows pass@256 performance.
     Right: Perplexity analysis showing the distribution of perplexity for responses to a single problem sampled from various models under the base SFT model distribution.
     }
    \label{fig:diversity}
\end{figure}

\paragraph{Perplexity Analysis}
Recent work on RLVR \citep{yue2025limit-of-rlvr} shows that RL-trained models do not truly discover new solutions, instead the solutions they generate are already likely under the base model. To further investigate whether $\alpha$-DPG stays close to the base model, we conduct a perplexity analysis. We sample a single problem from the test set and have each model generate 16 solutions. For each solution, we compute perplexity both under the model that generated it (self-perplexity) and under the base model (Base SFT). As shown in the right panel of Figure \ref{fig:diversity}, across all models, the generated sequences are already highly probable according to the base model, with very similar perplexities. Note also that for this particular problem, GRPO collapsed and produces 16 identical sequences, which were also highly probable under the base model.

\section{Other Related Work}
\paragraph{Improving test-time scaling (\texttt{pass@}$k$)}
Popular RL-based post-training methods, such as PPO~\citep{schulman_proximal_2017} or GRPO~\citep{shao_deepseekmath_2024} optimize inherently mode-seeking objectives, which can lead to mode collapse. The resulting models often achieve high accuracy, but exhibit very low entropy, generating less diverse sequences~\citep{yue_does_2025}. This issue is particularly pronounced when scaling test-time compute for solution search, with the SFT models often achieving better performances due to higher diversity \citep{he_rewarding_2025, Passk_Training, tang2025optimizing, zhu2025rlvr-decomposed, yue2025limit-of-rlvr}. Multiple approaches try to overcome this issue, mainly by adapting the advantage function. \citep{he_rewarding_2025} proposed to add a rank bias penalty, that increases the advantage of unlikely sequences. 
\citep{chen_passk_2025} modify the reward from \texttt{pass@1} equivalent to \texttt{pass@}$k$ , allowing better sampling efficiency. They obtain a better \texttt{pass@}$k$ at inference than training with entropy regularization. \citet{tang2025optimizing} propose a leave one out strategy to reduce variance and therefore improving pass@k at test-time.
\paragraph{Reinforcement Learning from Proof Assistant Feedback}
Significant efforts in the community have focused on training LLMs integrated with interactive proof assistants such as \textsc{Lean}~\citep{Moura2015TheLT}, \textsc{Coq}~\citep{barras:inria-00069968}, and \textsc{Isabelle}~\citep{isabelle}. Early approaches leveraged LLMs to generate the next proof step or tactic~\citep{DeepMind2024IMO,polu2023formal,wu2024internlm25stepproveradvancingautomatedtheorem}, often combined with explicit search strategies~\citep{lample2022hypertree}. More recent work has shifted towards training models to generate complete proofs directly~\citep{dsp1,dsp2,kimina} where the last training stage relies on reinforcement learning from proof assistant feedback (RLPAF), where the proof assistant verification serves as a reward signal. However, RLPAF algorithms such as GRPO are mode-seeking, strongly truncating/filtering the original distribution, which inherently limits the diversity of generated proofs and introduces significant inefficiencies in inference scaling \citep{zhu2025rlvr-decomposed}.

% \citep{he_rewarding_2025} https://arxiv.org/pdf/2506.02355 rewarding the unlilely
% \cite{tang2025optimizing} https://arxiv.org/abs/2503.19595 Optimizing Language Models for Inference Time Objectives using Reinforcement Learning
%  \cite{Passk_Training} https://arxiv.org/pdf/2508.10751v1 Pass@k Training for Adaptively Balancing Exploration and Exploitation of Large Reasoning Models
% \cite{zhu2025rlvr-decomposed} https://arxiv.org/pdf/2506.01347 The Surprising Effectiveness of Negative Reinforcement in LLM Reasoning
% \cite{li2025choicedivergenceneglectedkey} https://arxiv.org/html/2509.07430v1 The Choice of Divergence: A Neglected Key to Mitigating Diversity Collapse in Reinforcement Learning with Verifiable Reward
% \cite{yue2025limit-of-rlvr} https://arxiv.org/pdf/2504.13837 Does Reinforcement Learning Really Incentivize Reasoning Capacity in LLMs Beyond the Base Model?
\section{Final Remarks and Conclusions}

We have introduced DMVR, a general framework for optimizing a policy to produce only correct answers according to a verifier function.
This perspective casts RLVR training in a new light and helps diagnose its failure modes.
In particular, building on the work of~\citet{korbak_reinforcement_2022}, we established that RLVR methods optimize toward a \emph{filtered} version of the original distribution, even if they do it in a way that especially focuses on certain regions of high verifier reward.
From this viewpoint, we can revisit recent debates on whether RL alone can create new skills~\citep{deepseek-ai_deepseek-r1_2025,he_rewarding_2025,yue_does_2025,wu_invisible_2025} %take on a clearer resolution: 
and understand why RLVR does not generate fundamentally new capabilities but instead reweights and amplifies behaviors already present in the base model.
Moreover, because RLVR is tied to a mode-seeking divergence, it sacrifices distributional breadth, leading models to forget solutions that the base model could originally provide.

However, the core principle of filtering the base model is sound and can be independently motivated; it enforces correctness while preserving the multiplicity of valid responses.
The true source of diversity loss, therefore, lies not in the target distribution itself, but in the divergence used to approximate it with gradient descent within a restricted parametric family.

By explicitly defining the target distribution, DMVR enables optimization with divergences that balance the two competing goals: correctness and diversity. In particular, we explored $\alpha$-divergences, which smoothly interpolate between Forward and Reverse KL. This approach generates a Pareto frontier of models: setting $\alpha$ near the Reverse KL recovers models that match or exceed the performance of RL-based alternatives, while low values of $\alpha$ produce models that preserve substantial diversity while still improving sampling precision over the base model.

The choice of divergence impacts the resulting models in at least two different ways: On one hand, loss landscapes associated with different divergences can produce different training dynamics. On the other hand, within a restricted parametric family, each divergence can induce different optima. To disentangle the contribution that each of these factors has in the resulting models we could adopt a \emph{curriculum} in which we gradually increase $\alpha$ during training, thus encouraging coverage at the beginning of training, and precision towards the end. If the resulting models preserve more diversity, this could be an indication that training dynamics matter. If, on the contrary, they reach as much coverage as using a high value of $\alpha$ from the start, then this would be an indication that it is the optima for the given parametric family that dominates. We leave these questions for future work.

\paragraph{Known Limitations} Without clipping, $\alpha$-DPG is unstable for small values of $\alpha$ (e.g., $\leq 0.5$). \citet{khalifa_distributional_2021} use an offline version of DPG where the sampling policy is only updated from the training policy if the divergence to the target probability is estimated to have improved. Here, we avoided keeping in memory a second copy of the model, and preferred to manage variance by clipping the pseudo rewards.
Also, the evidence for the effectiveness of $\alpha$-DPG centers on the Lean task using the DeepSeek Prover base model. Generalization to other tasks (e.g., code generation) and larger model families remains limited and is acknowledged as future work.

\newpage
\section*{Ethics statement}

LLMs tuned using verifier feedback have attracted considerable attention from the research community and the general public because of their strong problem-solving abilities.  
The techniques introduced in this paper aim to preserve more of a model’s initial diversity than existing RL-based approaches at the cost of less strict adherence to the verifier's feedback.

In the specific setting we study—training models to prove theorems—this trade-off carries little risk of harm.
In broader applications, however, such as attempts to ``align'' language models with specific policies or behaviors, weaker adherence to constraints could be more concerning. We note that recent proposals within the distributional matching framework may help address this issue~\citep{kim_guaranteed_2025}, even though they rely on the availability of a verifier at inference time.

More importantly, the distributional perspective makes explicit the central choice of a target distribution to optimize. Although the training techniques used for approximation also influence model behavior, we believe that making the choice of target distribution open and transparent can promote accountability and clarity. We therefore encourage this practice when tuning models for specific goals.

\section*{Reproducibility statement}

We will make publicly available all the code to reproduce our experiments after publication. The \textsc{Lean} Workbook dataset we used in our experiments is already open~\citep{ying2024lean,wu2024internlm25stepproveradvancingautomatedtheorem}, as so it is the base model DeepSeek-Prover-V1.5-SFT we use for training~\citep{dsp1}. In addition to the experiment details we describe in Section \ref{sec:experiments}, we report additional details such as hyperparameter choices in in App. \ref{app:additional-experiment-details}.
% \subsubsection*{Author Contributions}
% If you'd like to, you may include  a section for author contributions as is done
% in many journals. This is optional and at the discretion of the authors.

\section*{Acknowledgements}
We thank Thibaut Thonet for the very helpful comments on an early draft of this paper and anonymous reviewers for suggestions that helped improve this work.

\bibliography{bibliography_german,bibliography}
\bibliographystyle{iclr2026_conference}

\appendix
\section{LLM Usage}

We have required the assistance from LLMs (ChatGPT, Gemini) in the production of this paper on many stages during its development. The most prevalent usage has been on writing code and debugging, but we have also discussed research directions and mathematical statements, used them to simplify and format mathematical proofs, proof-reading of the paper and improving the flow of the text, and they even assisted us in producing the cartoon illustrating our method. The authors take full responsibility for the contents in this paper.
\section{Proof of the correspondence between RLVR and Reverse KL}
\label{app:rlvr_is_rkl}
We work in the contextual setting where contexts \(x\) are drawn from \(\mu(x)\),
the policy is \(\pi_\theta(y\mid x)\), the reward is \(\ver(y,x)\),
and the prior (reference policy) is \(\pi_{\mathrm{ref}}(y\mid x)\).
Define for each context \(x\) and inverse temperature \(\beta>0\)
\[
p_{x,\beta}(y)
\;=\;
\frac{1}{Z_x(\beta)}\;\pi_{\mathrm{ref}}(y\mid x)\,\exp\!\big(\ver(y,x)/\beta\big),
\qquad
Z_x(\beta)
\;=\;
\sum_y \pi_{\mathrm{ref}}(y\mid x)\,\exp\!\big(\ver(y,x)/\beta\big).
\]

Fix a context \(x\). The reverse KL divergence (from \(\pi_\theta(\cdot\mid x)\)
to \(p_{x,\beta}\)) is
\[
\mathrm{KL}\!\big(\pi_\theta(\cdot\mid x)\,\big\|\,p_{x,\beta}\big)
=
\mathbb{E}_{y\sim\pi_\theta(\cdot\mid x)}\!\left[
\log\frac{\pi_\theta(y\mid x)}{p_{x,\beta}(y)}
\right].
\]
Substituting \(p_{x,\beta}(y)=\tfrac{1}{Z_x(\beta)}\pi_{\mathrm{ref}}(y\mid x)e^{\ver(y,x)/\beta}\) gives
\begin{align*}
\mathrm{KL}\!\big(\pi_\theta\|p_{x,\beta}\big)
&=
\mathbb{E}_{y\sim\pi_\theta}\Big[
\log\pi_\theta(y\mid x)
- \log\Big(\tfrac{1}{Z_x(\beta)}\pi_{\mathrm{ref}}(y\mid x)e^{\ver(y,x)/\beta}\Big)
\Big] \\
&=
\mathbb{E}_{y\sim\pi_\theta}\Big[
\log\frac{\pi_\theta(y\mid x)}{\pi_{\mathrm{ref}}(y\mid x)}
- \frac{1}{\beta}\ver(y,x)
\Big]
+ \log Z_x(\beta).
\end{align*}
Equivalently (rearranging signs),
\[
\mathrm{KL}\!\big(\pi_\theta\|p_{x,\beta}\big)
= -\,\mathbb{E}_{y\sim\pi_\theta}\Big[
\frac{1}{\beta}\ver(y,x) - \log\frac{\pi_\theta(y\mid x)}{\pi_{\mathrm{ref}}(y\mid x)}
\Big] + \log Z_x(\beta).
\]

Now take the gradient with respect to \(\theta\) and average over contexts \(x\sim\mu\).
Since \(Z_x(\beta)\) depends only on \(\pi_{\mathrm{ref}}\) and \(\ver\) (not on \(\theta\)),
\(\nabla_\theta \log Z_x(\beta)=0\). Thus
\[
\nabla_\theta\,\mathbb{E}_{x\sim\mu}\big[\mathrm{KL}(\pi_\theta\|p_{x,\beta})\big]
= -\,\nabla_\theta\,\mathbb{E}_{x\sim\mu}\;
\mathbb{E}_{y\sim\pi_\theta(\cdot\mid x)}\!\left[
\frac{1}{\beta}\ver(y,x) - \log\frac{\pi_\theta(y\mid x)}{\pi_{\mathrm{ref}}(y\mid x)}
\right].
\]

To proceed we use the score-function (REINFORCE) identity:
for any function \(h(y,x)\) independent of \(\theta\),
\[
\nabla_\theta \mathbb{E}_{y\sim\pi_\theta(\cdot\mid x)}[h(y,x)]
= \mathbb{E}_{y\sim\pi_\theta(\cdot\mid x)}\big[h(y,x)\; \nabla_\theta\log\pi_\theta(y\mid x)\big].
\]
Applying this identity (and noting the dependence of the \(\log\pi_\theta\) term
must be handled consistently) yields the explicit gradient form
\begin{equation}\label{eq:final-gradient}
\nabla_\theta\,\mathbb{E}_{x\sim\mu}\big[\mathrm{KL}(\pi_\theta\|p_{x,\beta})\big]
=
-\,\mathbb{E}_{x\sim\mu}\;
\mathbb{E}_{y\sim\pi_\theta(\cdot\mid x)}\!\left[
\left(\frac{1}{\beta}\ver(y,x)
- \log\frac{\pi_\theta(y\mid x)}{\pi_{\mathrm{ref}}(y\mid x)}\right)\,\nabla_\theta\log\pi_\theta(y\mid x)
\right].
\end{equation}
This is the standard policy-gradient / score-function expression for the gradient
of the reverse-KL objective in the contextual case.

Finally, an equivalent compact form is obtained by moving the \(\nabla_\theta\)
inside the expectation and noticing the factor \(1/\beta\):
\[
\nabla_\theta\,\mathbb{E}_{x}\big[\mathrm{KL}(\pi_\theta\|p_{x,\beta})\big]
= -\frac{1}{\beta}\,\nabla_\theta\;\mathbb{E}_{x\sim\mu,\,y\sim\pi_\theta}\left[
\ver(y,x) - \beta\log\frac{\pi_\theta(y\mid x)}{\pi_{\mathrm{ref}}(y\mid x)}
\right].
\]

\qed

Maximizing the expected reward under the conditional policy:
\[
\mathbb{E}_{x, y \sim \pi_\theta} [\ver(y, x) -  \beta\log\frac{\pi_\theta(y\mid x)}{\pi_{\mathrm{ref}}(y\mid x)}]
\]
is equivalent (up to constants) to minimizing:
\[
\mathbb{E}_{x \sim \mu(x)} \left[ D_{\text{KL}}(\pi_\theta(y \mid x) \parallel \pcbeta(y \mid x)) \right].
\]
\section{Proof that $p_\beta$ becomes $p$ as $\beta\rightarrow 0$}
\label{app:p_beta_pointwise_equivalence}

\begin{proposition}
\begin{equation}
    \lim_{\beta\rightarrow 0} \pcbeta = p_x, 
\end{equation}
in the formal sense that \(\lVert \pcbeta - p_x\rVert_{\mathrm{TV}}\to0\) as \(\beta\downarrow0\). In particular \(\pcbeta(y)\to p_x(y)\) for every fixed \(y\).
\end{proposition}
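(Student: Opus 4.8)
The plan is to exploit the fact that the verifier $\ver(y,x)\in\{0,1\}$ is \emph{binary}, so that $\exp(\ver(y,x)/\beta)$ takes only the two values $e^{1/\beta}$ (on correct responses) and $1$ (on incorrect ones). Fixing a context $x$, I would write $A_x \doteq \{y : \ver(y,x)=1\}$ for the set of correct responses and set $q_x \doteq \piref(A_x\mid x) = \sum_{y\in A_x}\piref(y\mid x)$. The whole argument rests on the standing assumption $q_x>0$, i.e.\ the problem is solvable under the base model; this is exactly the condition needed for the target $p_x$ in Eq.~\ref{eq:pcver} to be well defined (otherwise its normalizer vanishes), so it is a natural hypothesis rather than a genuine obstacle.

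First I would compute the partition function in closed form. Splitting the sum over $A_x$ and its complement gives
\[
Z_x(\beta) = e^{1/\beta}\, q_x + (1-q_x),
\]
so that, dividing numerator and denominator by $e^{1/\beta}$, for $y\in A_x$ one obtains
\[
\pcbeta(y) = \frac{\piref(y\mid x)}{q_x + (1-q_x)\,e^{-1/\beta}},
\]
while for $y\notin A_x$
\[
\pcbeta(y) = \frac{\piref(y\mid x)}{e^{1/\beta}\, q_x + (1-q_x)}.
\]
Taking $\beta\downarrow 0$ sends $e^{-1/\beta}\to 0$ and $e^{1/\beta}\to\infty$, so the first expression converges to $\piref(y\mid x)/q_x = p_x(y)$ and the second to $0 = p_x(y)$. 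This already yields the pointwise claim $\pcbeta(y)\to p_x(y)$ for every fixed $y$.

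To upgrade to total variation I would compute the distance explicitly rather than invoke Scheff\'e. Since $q_x + (1-q_x)e^{-1/\beta}\ge q_x$, on $A_x$ we have $\pcbeta(y)\le p_x(y)$, and summing the signed differences over $A_x$ telescopes to the mass that $\pcbeta$ misplaces on incorrect responses:
\[
\sum_{y\in A_x}\bigl(p_x(y)-\pcbeta(y)\bigr) = 1 - \frac{q_x}{q_x+(1-q_x)e^{-1/\beta}} = \pcbeta(A_x^c).
\]
Because $p_x(A_x^c)=0$, the incorrect side contributes the same quantity $\pcbeta(A_x^c)$, so
\[
\lVert \pcbeta - p_x\rVert_{\mathrm{TV}} = \pcbeta(A_x^c) = \frac{(1-q_x)\,e^{-1/\beta}}{q_x+(1-q_x)\,e^{-1/\beta}} \le \frac{1-q_x}{q_x}\,e^{-1/\beta},
\]
which vanishes as $\beta\downarrow 0$, indeed at an exponential rate. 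Averaging this bound over $x\sim\D$ then gives the stated convergence.

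The computation is elementary once the binary structure is used, so there is no hard analytic core. The only points I would flag are the well-definedness hypothesis $q_x>0$ and, if one wants convergence \emph{uniformly} in $x$, a uniform lower bound $\inf_x q_x>0$ on the base-model acceptance rate, since the rate constant $(1-q_x)/q_x$ blows up as $q_x\to 0$.
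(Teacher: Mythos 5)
Your proof is correct and follows essentially the same route as the paper's: both split the sum according to the binary verifier value, compute the partition function in closed form as $e^{1/\beta}q_x + (1-q_x)$, and arrive at the identical total-variation formula $\frac{(1-q_x)e^{-1/\beta}}{q_x+(1-q_x)e^{-1/\beta}}$, which vanishes as $\beta\downarrow 0$. Your additional remarks on the exponential rate and on uniformity in $x$ under $\inf_x q_x > 0$ are correct refinements the paper does not state, but the core argument is the same.
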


\emph{Proof.} The proof is a direct adaptation, for a context $x$, of the following Lemma.

\medskip

\begin{lemma}
Let \(\pi_{\mathrm{base}}\) be a probability distribution on a countable set \(Y\), let
\(r:Y\to\{0,1\}\) and assume \(Z:=\sum_{y}\pi_{\mathrm{base}}(y)r(y)\in(0,1]\).
Define
\[
p(y)=\frac{\pi_{\mathrm{base}}(y)r(y)}{Z},\qquad
p_\beta(y)=\frac{\pi_{\mathrm{base}}(y)e^{r(y)/\beta}}{Z_\beta},\qquad
Z_\beta:=\sum_{y}\pi_{\mathrm{base}}(y)e^{r(y)/\beta}.
\]
Then \(\lVert p_\beta-p\rVert_{\mathrm{TV}}\to0\) as \(\beta\downarrow0\).
In particular \(p_\beta(y)\to p(y)\) for every fixed \(y\in Y\).
\end{lemma}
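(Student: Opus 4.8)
The plan is to exploit the fact that $r$ is binary, so that $e^{r(y)/\beta}$ takes only the two values $e^{1/\beta}$ (when $r(y)=1$) and $1$ (when $r(y)=0$). First I would partition the countable set $Y$ into $A=\{y: r(y)=1\}$ and $B=\{y: r(y)=0\}$, and record $\sum_{y\in A}\pi_{\mathrm{base}}(y)=Z$ and $\sum_{y\in B}\pi_{\mathrm{base}}(y)=1-Z$. Because the exponential factor is constant on each block, this immediately yields the closed form $Z_\beta = e^{1/\beta}Z + (1-Z)$.

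The key structural observation is that both $p$ and $p_\beta$, when restricted to $A$, are proportional to $\pi_{\mathrm{base}}$ restricted to $A$: indeed $p$ is supported on $A$ with $p(y)=\pi_{\mathrm{base}}(y)/Z$, while on $A$ we have $p_\beta(y)=\pi_{\mathrm{base}}(y)e^{1/\beta}/Z_\beta$. Consequently, writing $m_\beta \doteq \sum_{y\in A}p_\beta(y)=Z e^{1/\beta}/Z_\beta = Z/(Z+(1-Z)e^{-1/\beta})$ for the total $p_\beta$-mass on $A$, I can express $p_\beta(y)=m_\beta\,p(y)$ for every $y\in A$. In other words, $p_\beta$ and $p$ share the same conditional law on $A$ and differ only in how much mass $p_\beta$ leaks onto $B$.

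From here the total variation distance becomes explicit. On $A$ we have $|p_\beta(y)-p(y)|=(1-m_\beta)p(y)$ since $m_\beta\le 1$, and these sum to $1-m_\beta$; on $B$ we have $p=0$, so $\sum_{y\in B}|p_\beta(y)-p(y)|=\sum_{y\in B}p_\beta(y)=1-m_\beta$ as well. Hence $\lVert p_\beta - p\rVert_{\mathrm{TV}}=\tfrac12\sum_{y}|p_\beta(y)-p(y)|=1-m_\beta=(1-Z)e^{-1/\beta}/(Z+(1-Z)e^{-1/\beta})$. Since $Z\in(0,1]$ is fixed and $e^{-1/\beta}\to 0$ as $\beta\downarrow 0$, this quantity tends to $0$, giving TV convergence; the pointwise claim $p_\beta(y)\to p(y)$ for each fixed $y$ then follows a fortiori (or by reading off the blockwise formulas directly).

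There is essentially no serious obstacle, as binarity makes every quantity explicit; the only points worth checking are that $Z>0$ guarantees the term $e^{1/\beta}Z$ dominates the denominator so that $m_\beta\to 1$, and that the degenerate case $Z=1$ is handled automatically by the same formula (then $m_\beta=1$ and the distance is identically $0$). I would present the computation of $m_\beta$ and the two blockwise sums as the heart of the argument, and keep the limit as a one-line consequence.
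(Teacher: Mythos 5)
Your proof is correct and follows essentially the same route as the paper's: both partition $Y$ by the value of $r$, use the closed form $Z_\beta=e^{1/\beta}Z+(1-Z)$, compute the blockwise sums of $|p_\beta-p|$ explicitly, and arrive at the identical expression $\lVert p_\beta-p\rVert_{\mathrm{TV}}=(1-Z)e^{-1/\beta}/\bigl(Z+(1-Z)e^{-1/\beta}\bigr)\to 0$. Your packaging of the calculation through the mass $m_\beta=\pi_\beta$-measure of $A$ (so that $p_\beta=m_\beta p$ on $A$) is a slightly cleaner bookkeeping device than the paper's $\varepsilon$-notation, but it is the same argument.
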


\begin{proof}
Set \(S_0:=\sum_{r(y)=0}\pi_{\mathrm{base}}(y)=1-Z\) and
\(\varepsilon:=e^{-1/\beta}S_0=e^{-1/\beta}(1-Z)\). A short computation
gives \(Z_\beta=e^{1/\beta}Z+S_0\) and hence, after multiplying
numerator and denominator by \(e^{-1/\beta}\),
\[
p_\beta(y)=\begin{cases}
\dfrac{\pi_{\mathrm{base}}(y)}{Z+\varepsilon},& r(y)=1,\\[6pt]
\dfrac{e^{-1/\beta}\,\pi_{\mathrm{base}}(y)}{Z+\varepsilon},& r(y)=0.
\end{cases}
\]
Therefore
\[
\sum_{r(y)=1}\big|p_\beta(y)-p(y)\big|
=\frac{\varepsilon}{Z+\varepsilon},
\qquad
\sum_{r(y)=0}\big|p_\beta(y)-p(y)\big|
=\frac{\varepsilon}{Z+\varepsilon},
\]
so \(\sum_y|p_\beta(y)-p(y)|=\dfrac{2\varepsilon}{Z+\varepsilon}\) and
\[
\lVert p_\beta-p\rVert_{\mathrm{TV}}
=\frac12\sum_y|p_\beta(y)-p(y)|
=\frac{\varepsilon}{Z+\varepsilon}
=\frac{e^{-1/\beta}(1-Z)}{\,Z+e^{-1/\beta}(1-Z)\,}.
\]
Since \(e^{-1/\beta}\to0\) as \(\beta\downarrow0\), the right-hand side
tends to \(0\), proving total-variation convergence. The pointwise
convergence \(p_\beta(y)\to p(y)\) follows immediately because
\(|p_\beta(y)-p(y)|\le \sum_{y'}|p_\beta(y')-p(y')|=2\lVert p_\beta-p\rVert_{\mathrm{TV}}\).
\end{proof}

\section{RS-FT optimizes the Forward KL}
\label{app:rs_ft_is_kl_dpg}

Rejection sampling fine tuning~\citep{zelikman_star_2022,yuan_scaling_2023} generates a sample set from the base model $\piref$, $\mathcal{S}\sim \piref(\cdot|x)$, filters them using the verifier to obtain $\mathcal{S'} = \{y_i: \ver(y_i, x) = 1, y_i \in \mathcal{S}\}$, and then trains the policy using standard cross-entropy on this filtered set:

\begin{equation}
    \nabla_\theta\mathcal{L}^\textrm{RS-FT} = -\E_{y\sim\mathcal{S}'}  \nabla_\theta \log\ \pit(y|x) \label{eq:rs-ft}
\end{equation}

Now, starting from KL-DPG, we note that:

\begin{align}
    \nabla_\theta \KL(p_x|| \pit) &= -\E_{y\sim\pit(\cdot|x)} \left(\frac{p_x(y)}{\pit(y|x)} - 1 \right)\ \nabla_\theta\log \pit(y|x).\\
    &= -\E_{y\sim\pit(\cdot|x)} \frac{p_x(y)}{\pit(y|x)}\ \nabla_\theta\log \pit(y|x).\\
    &=-\E_{y\sim\pit(\cdot|x)} \frac{\piref(y|x)}{\piref(y|x)}\frac{p_x(y)}{\pit(y|x)}\ \nabla_\theta\log \pit(y|x)\\
    &=-\E_{y\sim\piref(\cdot|x)} \frac{p_x(y)}{\piref(y|x)} \ \nabla_\theta\log \pit(y|x)\\
    &=-\frac{1}{Z_x}\E_{y\sim\piref(\cdot|x)} \frac{\piref(y|x)\ver(y,x)}{\piref(y|x)} \nabla_\theta\log \pit(y|x)\\
    &\propto \E_{y\sim\piref(\cdot|x)} \ver(y,x) \nabla_\theta\log \pit(y|x),
\end{align}
which exactly corresponds to the objective in Eq. \ref{eq:rs-ft}. There are two crucial differences between the two: One is their sample efficiency: whereas RS-FT is limited to using samples from the base model, many of which may be rejected from the verifier, KL-DPG makes use of the updated policy which has a higher acceptance rate. The second one is that whereas RS-FT uses a finite pool of samples, and thus can over-fit to them, KL-DPG uses an unbounded number of samples.
\section{Estimation of the partition function}
\label{app:z_estimation}

Define
\begin{align}
    P_x(y) = \piref(y|x)v(y,x).
\end{align}
over a discrete space $\mathcal{Y}$, with partition function $Z_x$
\begin{align}
    Z_x = \sum_{y\in\mathcal{Y}} P_x(y).
\end{align}

Using importance sampling~\citep{owen_importance_2013} we can estimate $Z_x$ by using $N$ samples $[y_i]_{i\in\{0...N\}}$ generated from a proposal distribution $q$ with $\supp(P)\subseteq\supp(q)$, as follows.
\begin{align}
    Z_x = \sum_{y\in\mathcal{Y}} P_x(y) = \sum_{y\in\mathcal{Y}}\frac{q(y|x)}{q(y|x)} P_x(y)=\E_{y\sim q(\cdot|x)}\frac{ P_x(y)}{q(y|x)}\approx \frac{1}{N} \sum_i \frac{P_x(y_i)}{q(y_i|x)}.
\end{align}

Note that in the specific case that use $q=\piref$, then

\begin{align}
    Z_x = \E_{y\sim q(\cdot|x)}\frac{ \piref(y|x)v(y,x)}{\piref(y|x)}  = \E_{y\sim q(\cdot|x)} v(y,x) \approx \frac{1}{N} \sum_i{v(y_i,x)}.
\end{align}

\subsection{Ablation experiment: Role of the partition function calculation}

To analyze whether a more precise calculation of the partition function affects the results, we compared the reported results using an online-computed partition function based on just $4$ sampled responses with a pre-computed partition function using 128 samples from the base model. The results are displayed on Fig. \ref{fig:z_ablation}, showing no clear advantage for pre-computing the partition function. As a result, we favored the online computation as it offers a drop-in replacement for GRPO and other similar variants without any additional computational burden.

\begin{figure}[h]
    \centering
    \includegraphics[width=0.5\linewidth]{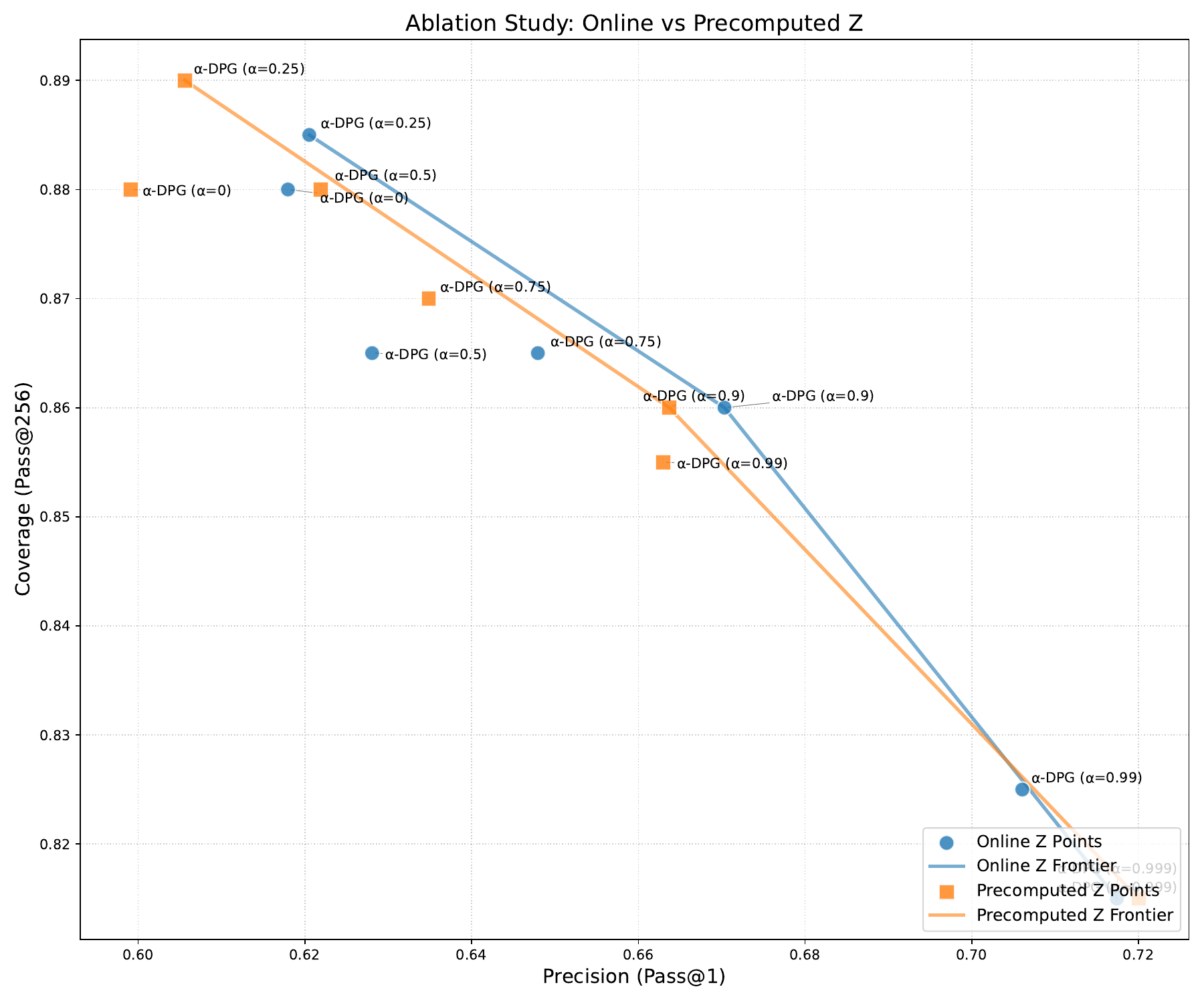}
    \caption{Comparison of $\alpha$-DPG with $Z_x$ computed online on the basis of just $4$ samples per problem versus using $128$ samples to compute the partition function offline. }
    \label{fig:z_ablation}
\end{figure}
\section{Additional Experimental Details and Hyperparameters}
\label{app:additional-experiment-details}

\begin{table}[h!]
\centering
\begin{tabular}{lcccc}
\toprule
\textbf{Method} & \textbf{Learning Rate} & \textbf{Batch Size} & \textbf{Rollout Size (N)} & \textbf{KL Penalty} \\
\midrule
Pass@k & $1\times10^{-6}$ & 16  & 32 & 0.001 \\
Rwrd. Unlkly (rank plty=0.25) & $2\times10^{-6}$ & 16  & 32 & 0.001
 \\
Dr. GRPO  & $2\times10^{-6}$ & 128 & 4  & 0.0 \\
RLOO  & $2\times10^{-6}$ & 128 & 4  & 0.0 \\
GPG  & $2\times10^{-6}$ & 128 & 4  & 0.0 \\
ReMax  & $2\times10^{-6}$ & 128 & 4  & 0.0 \\
Dr. GRPO with High KL  & $2\times10^{-6}$ & 128 & 4  & 0.1 \\
$\alpha$-DPG & $2\times10^{-6}$ & 128 & 4 & - \\
\bottomrule
\end{tabular}
\caption{Summary of key hyper parameters for different training runs.}
\label{tab:hyperparams}
\end{table}

\begin{figure}[h!]
    \centering
    \includegraphics[width=0.49\linewidth]{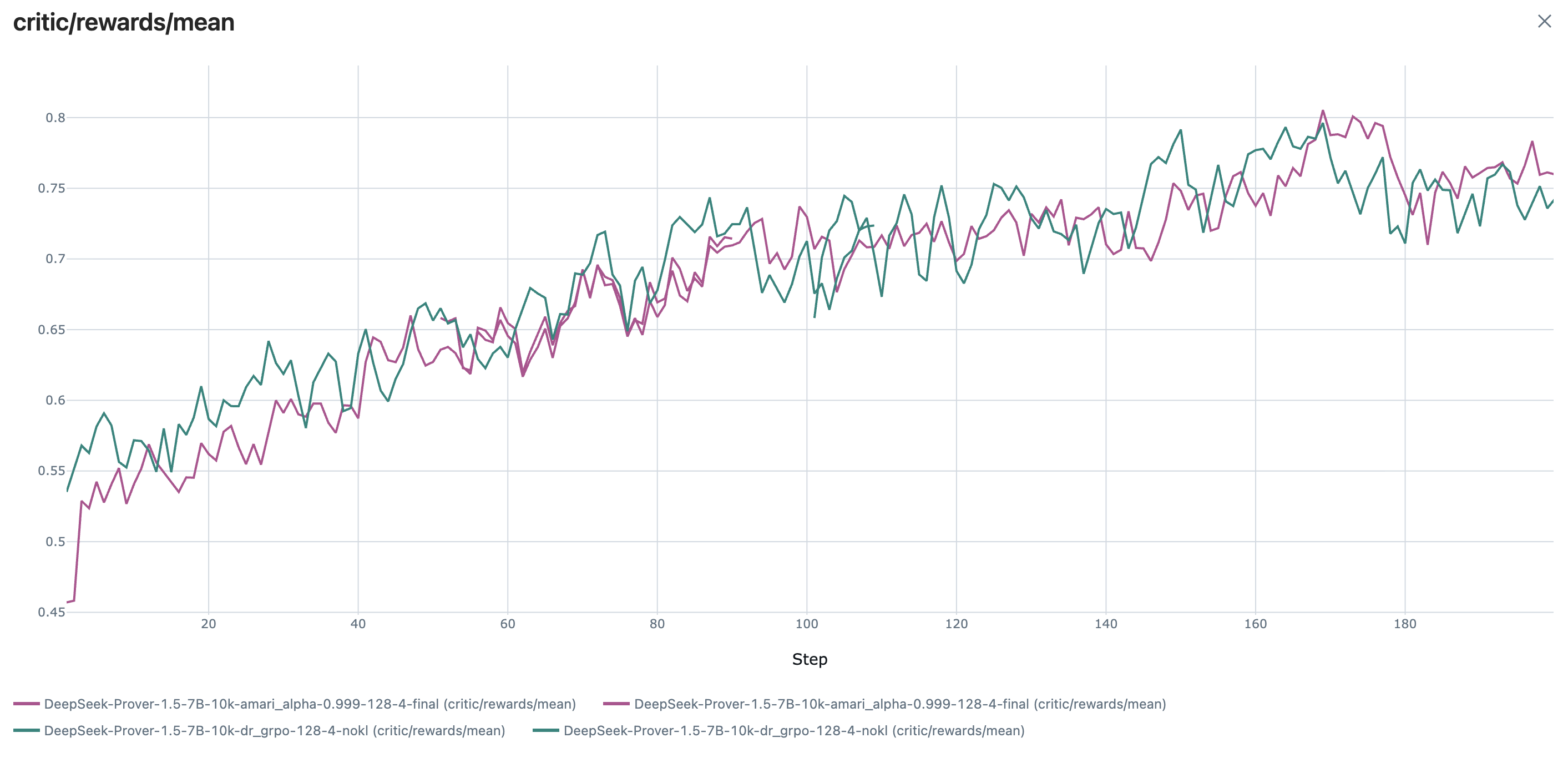}
    \includegraphics[width=0.49\linewidth]{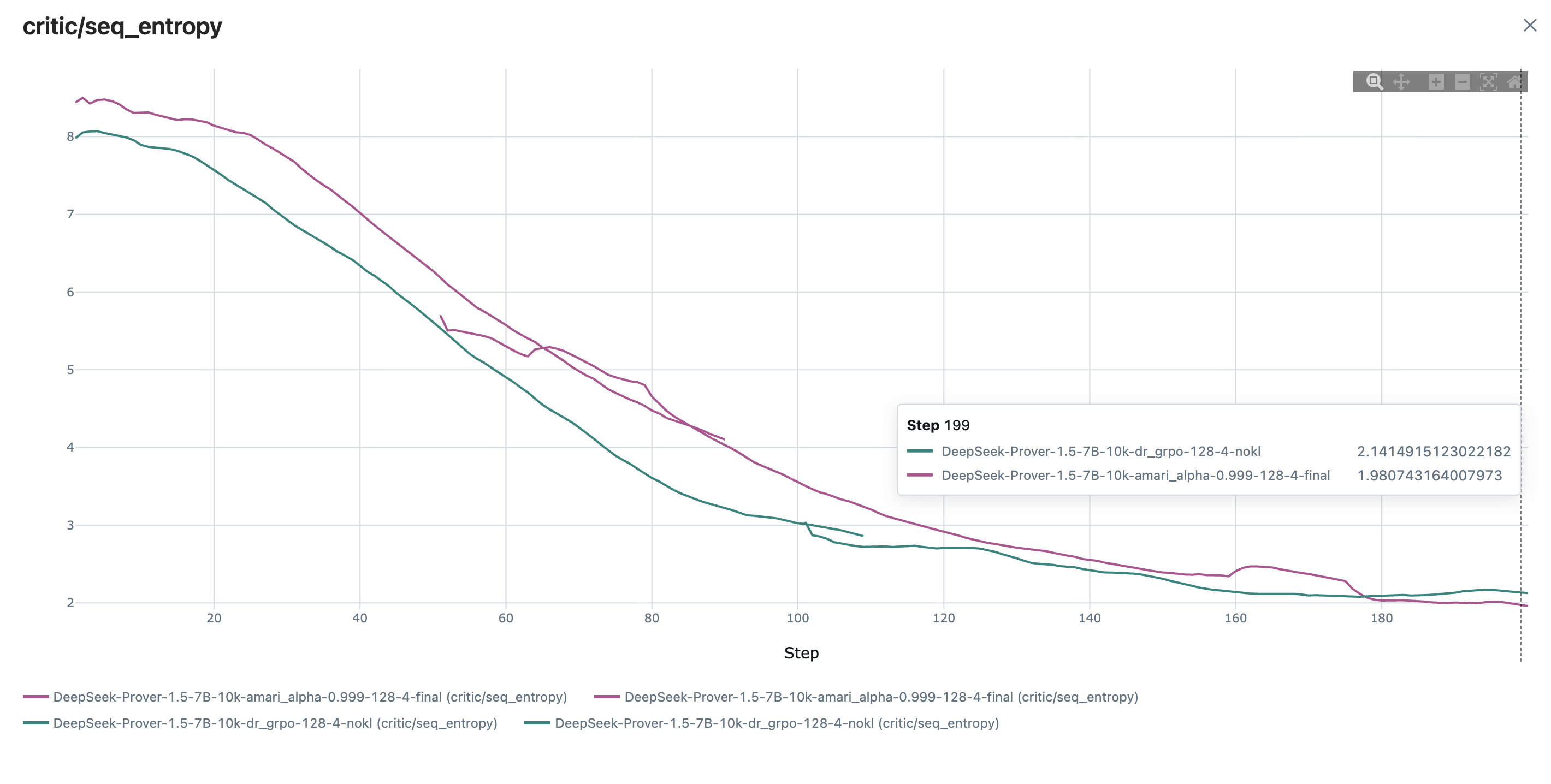}
     \caption{Training curves of both $\alpha$-DPG and dr-GRPO.  Sequence entropy on the right and reward on the left}
\end{figure}

\begin{figure}[h!]
    \centering
    \includegraphics[width=0.49\linewidth]{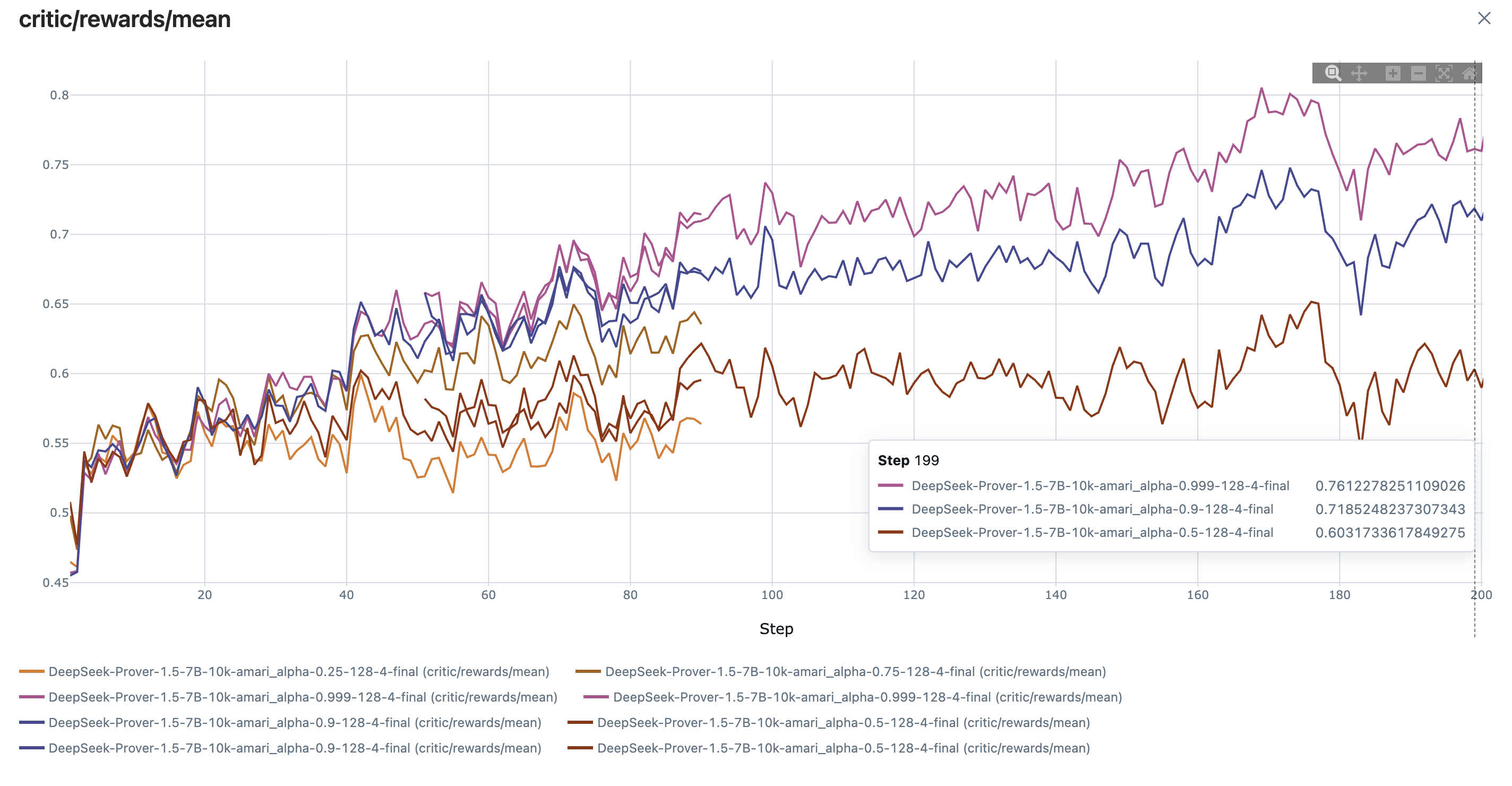}
    \includegraphics[width=0.49\linewidth]{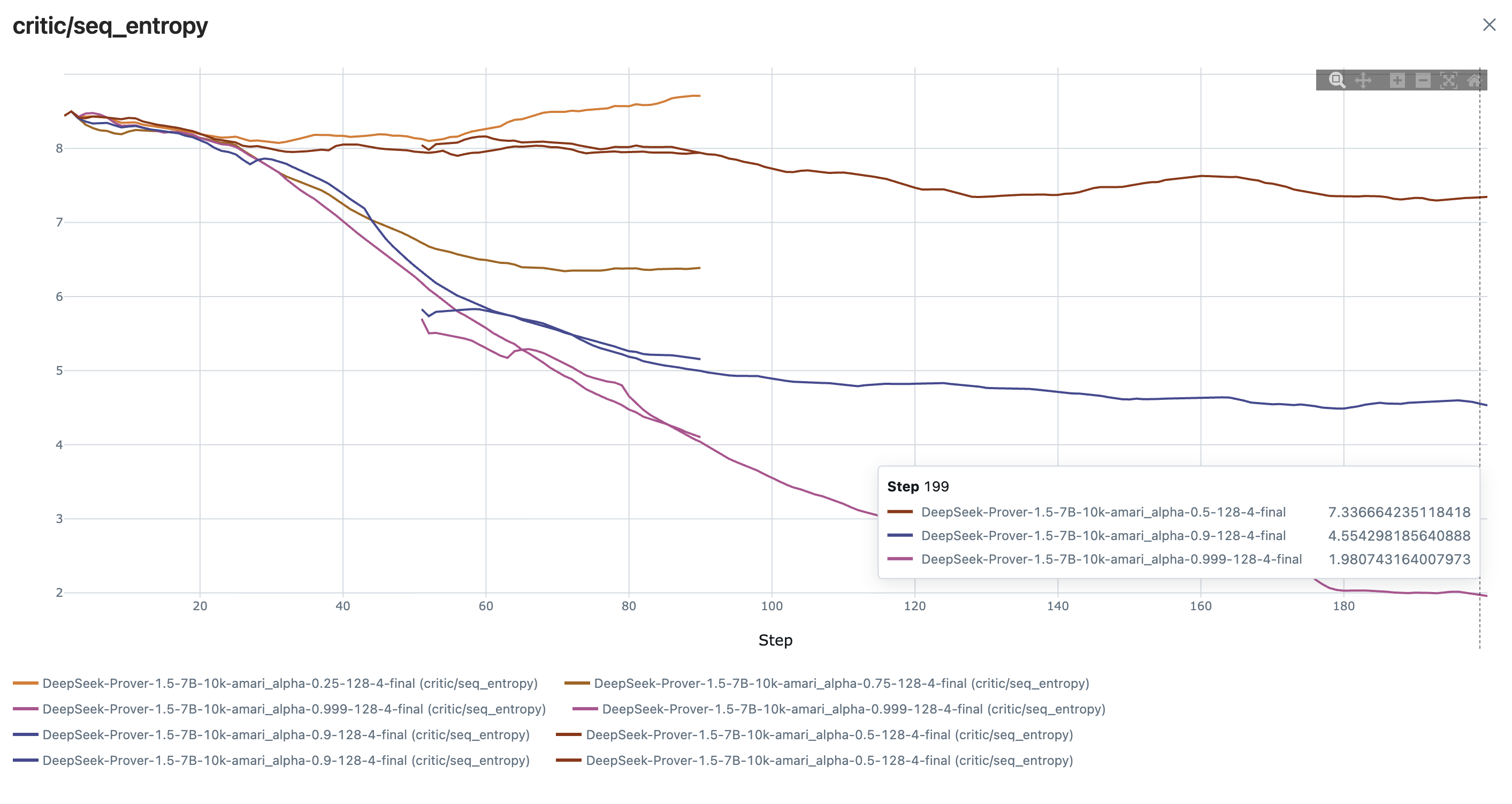}
     \caption{Training curves of $\alpha$-DPG for various alpha values.  Sequence entropy on the right and reward on the left. (Truncated curves are runs that have been stopped and resumed )}
\end{figure}

\begin{figure}[h!]
    \centering
    \includegraphics[width=0.49\linewidth]{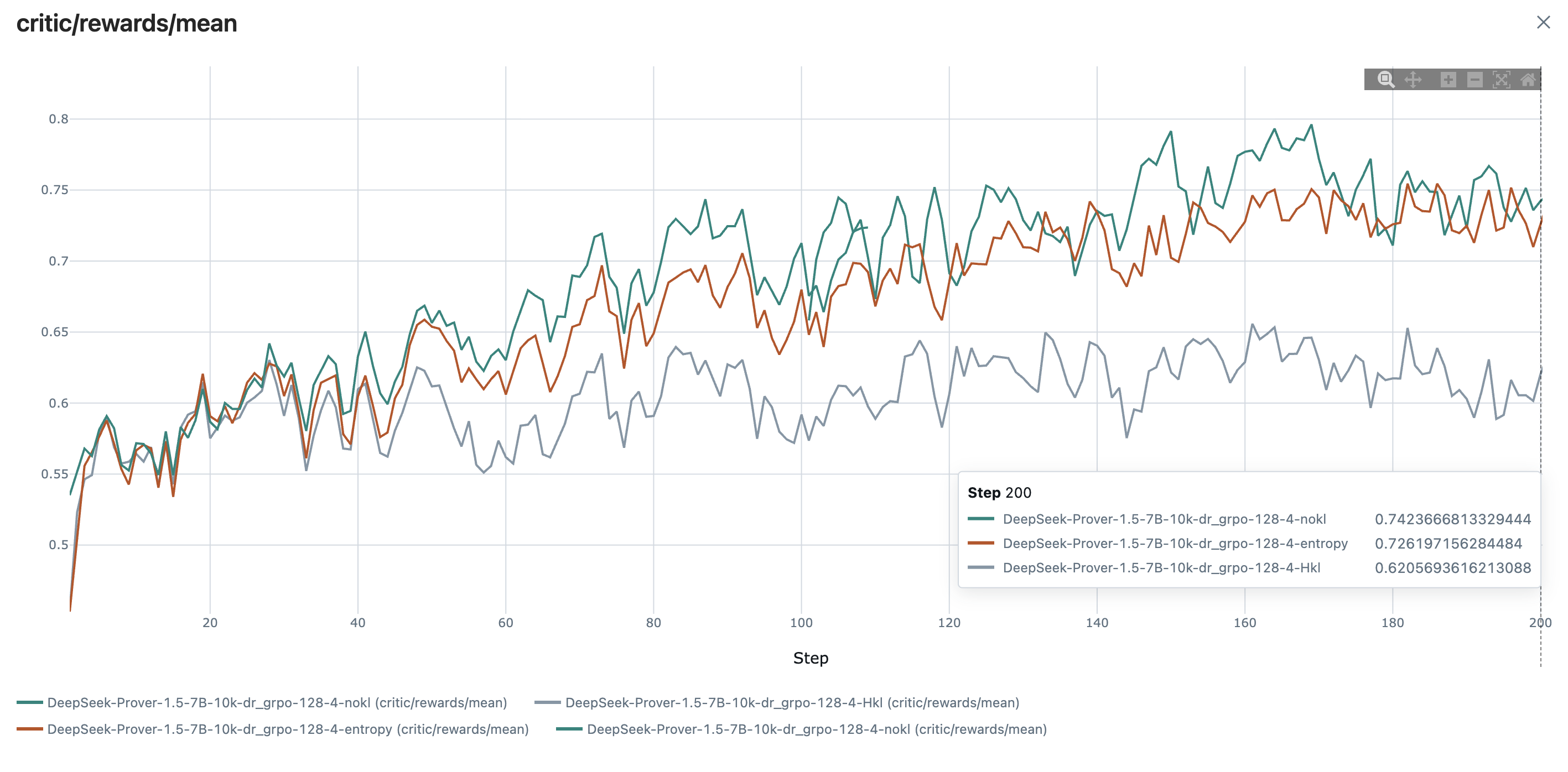}
    \includegraphics[width=0.49\linewidth]{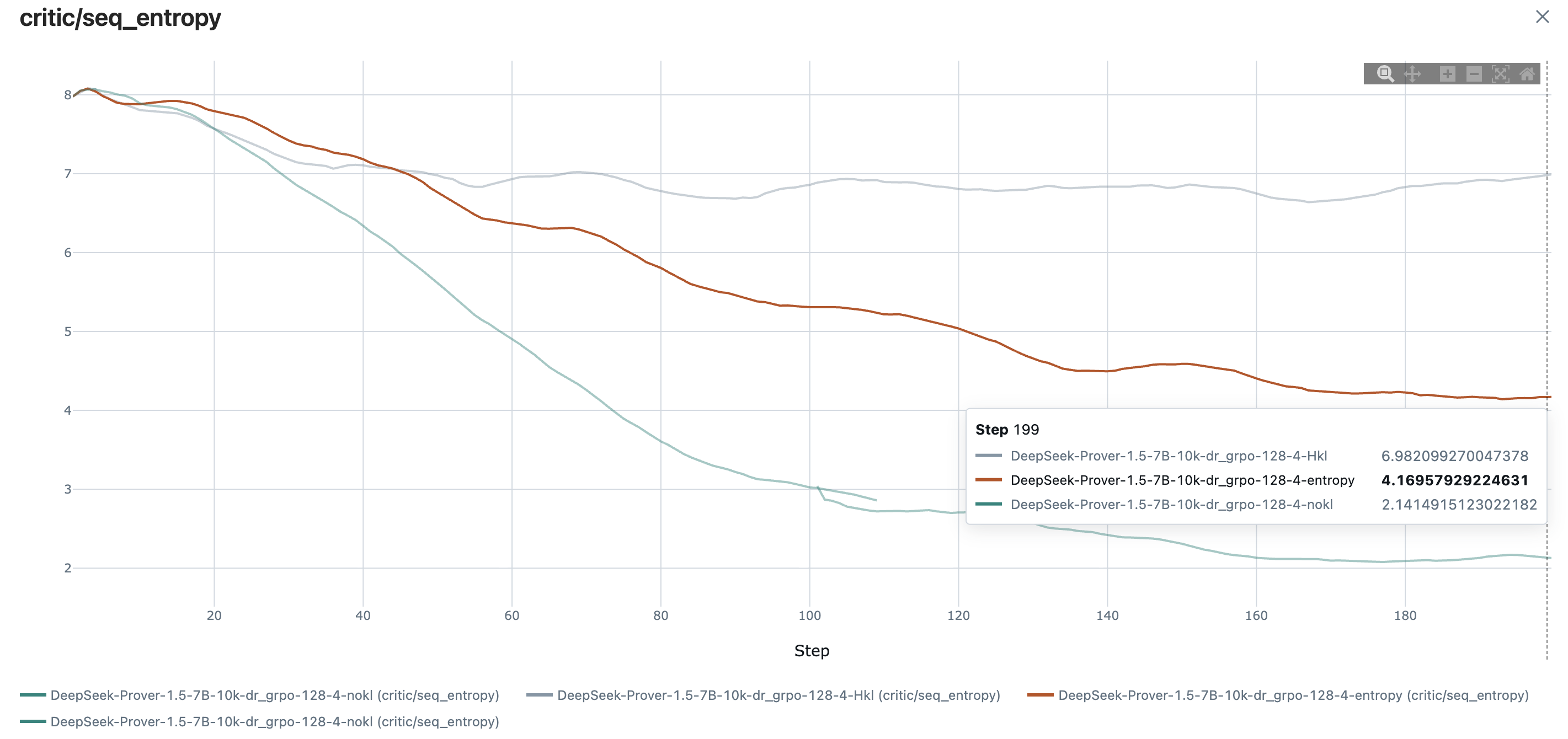}
     \caption{Training curves various dr-grpo baselines (dr-grpo, +high KL, +entropy). Sequence entropy on the right and reward on the left. (Truncated curves are runs that have been stopped and resumed )}
\end{figure}

\FloatBarrier

\section{Additional Experimental Results}

\begin{figure}[h!]
    \centering
    \includegraphics[width=0.24\linewidth]{figures/matrices/difficulty_transition_matrix_DeepSeek-Prover-1.5-7B-10k-dr_grpo-128-4-Hkl.pdf}
    \includegraphics[width=0.24\linewidth]{figures/matrices/difficulty_transition_matrix_DeepSeek-Prover-1.5-7B-10k-dr_grpo-128-4-nokl.pdf}
    \includegraphics[width=0.24\linewidth]{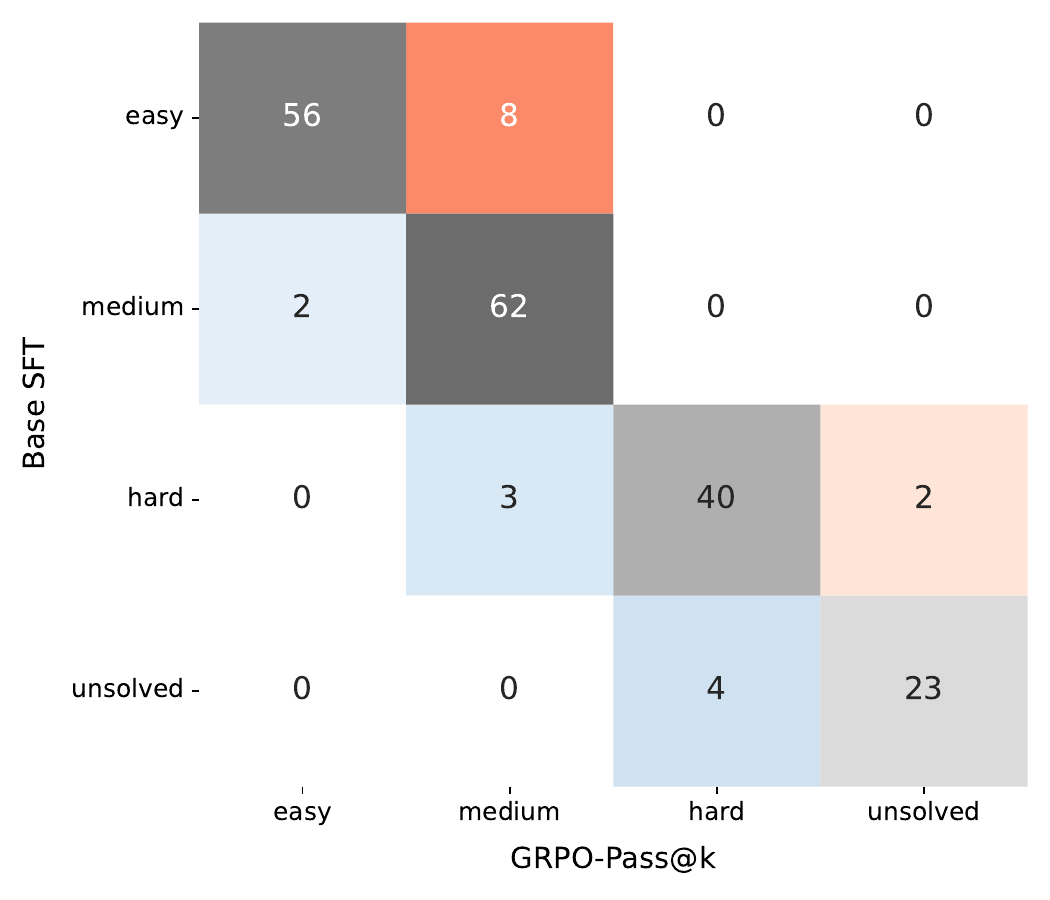}
    \includegraphics[width=0.24\linewidth]{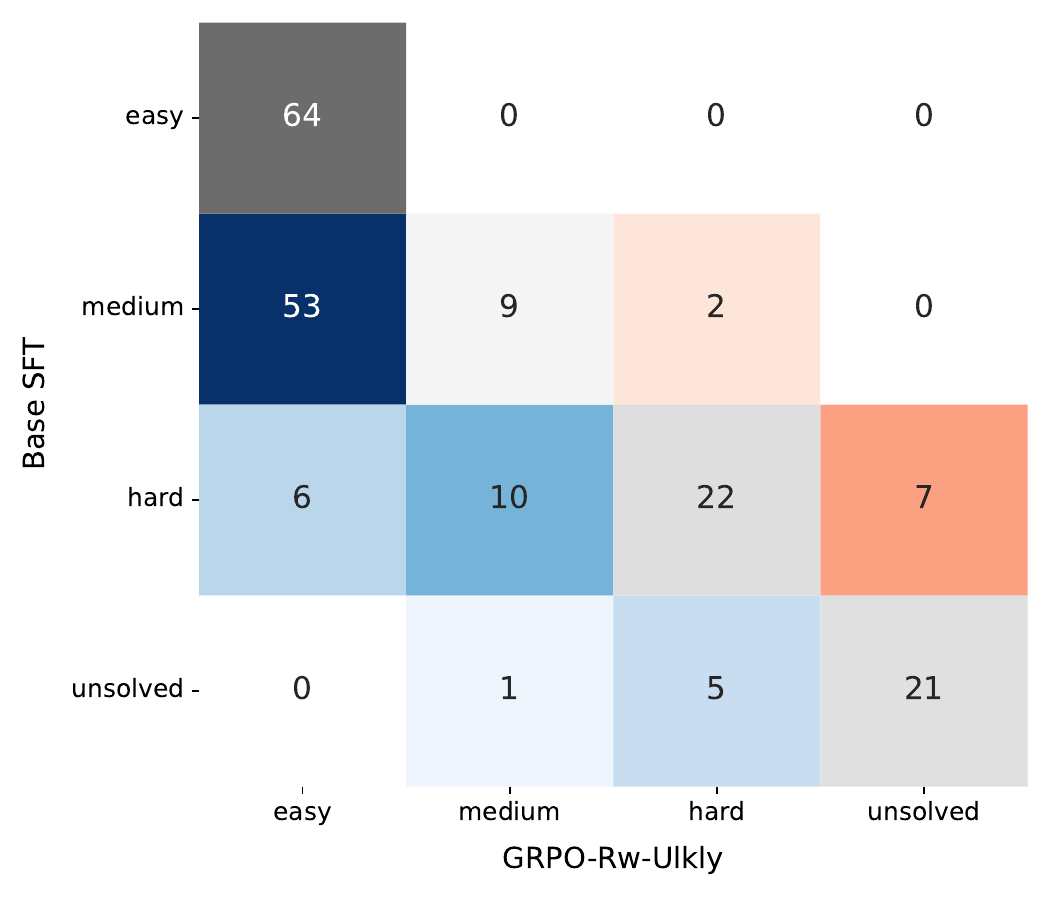}
    \includegraphics[width=0.24\linewidth]{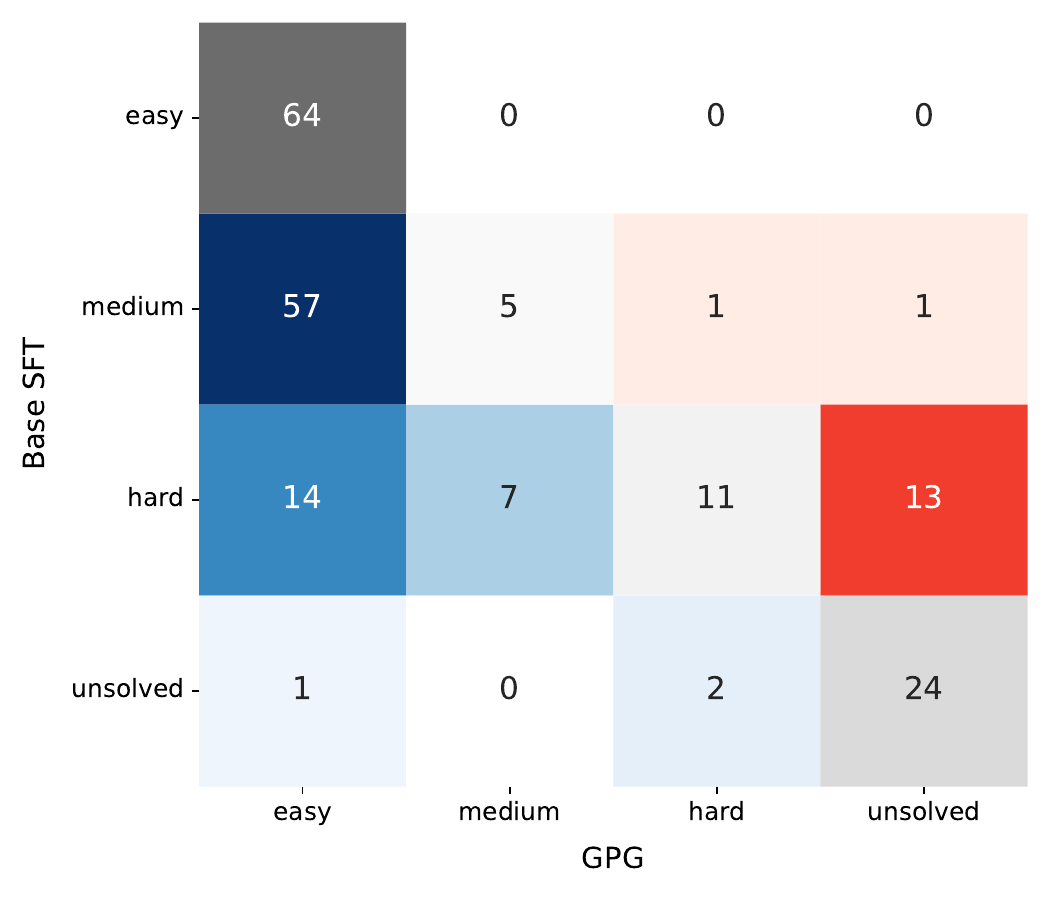}
    \includegraphics[width=0.24\linewidth]{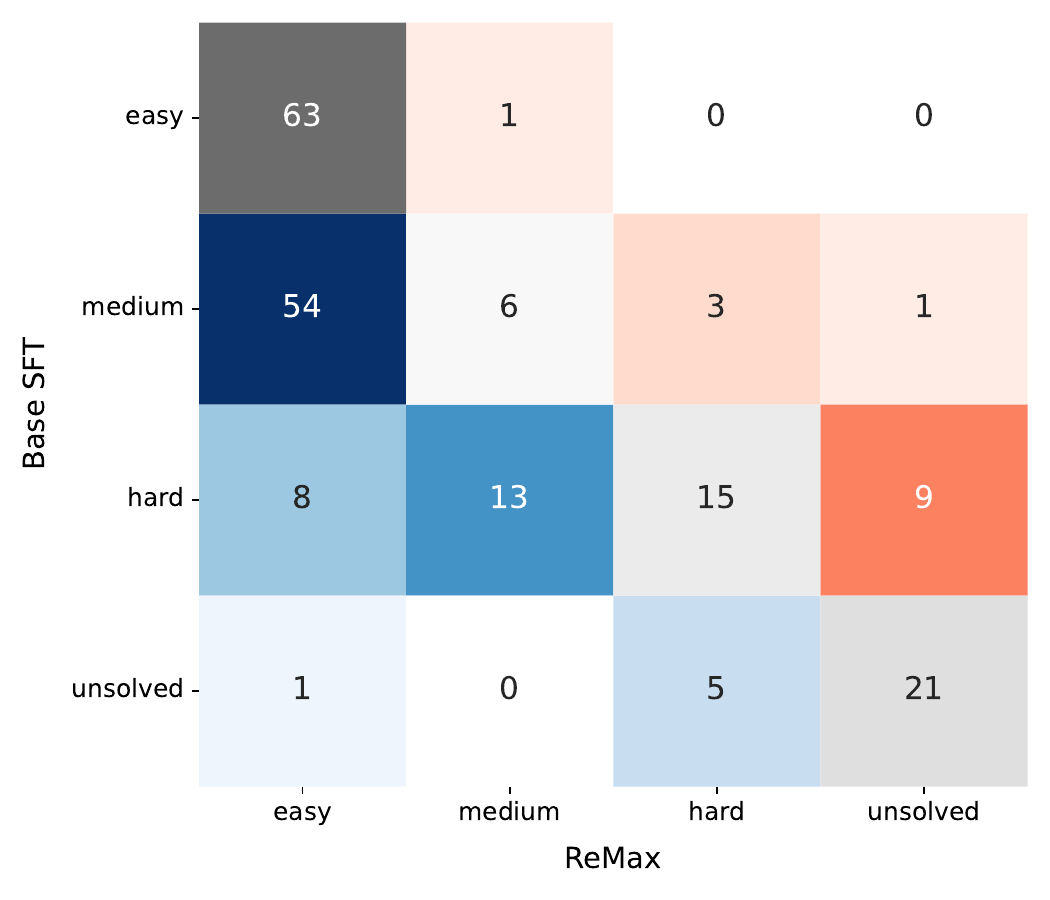}
    \includegraphics[width=0.24\linewidth]{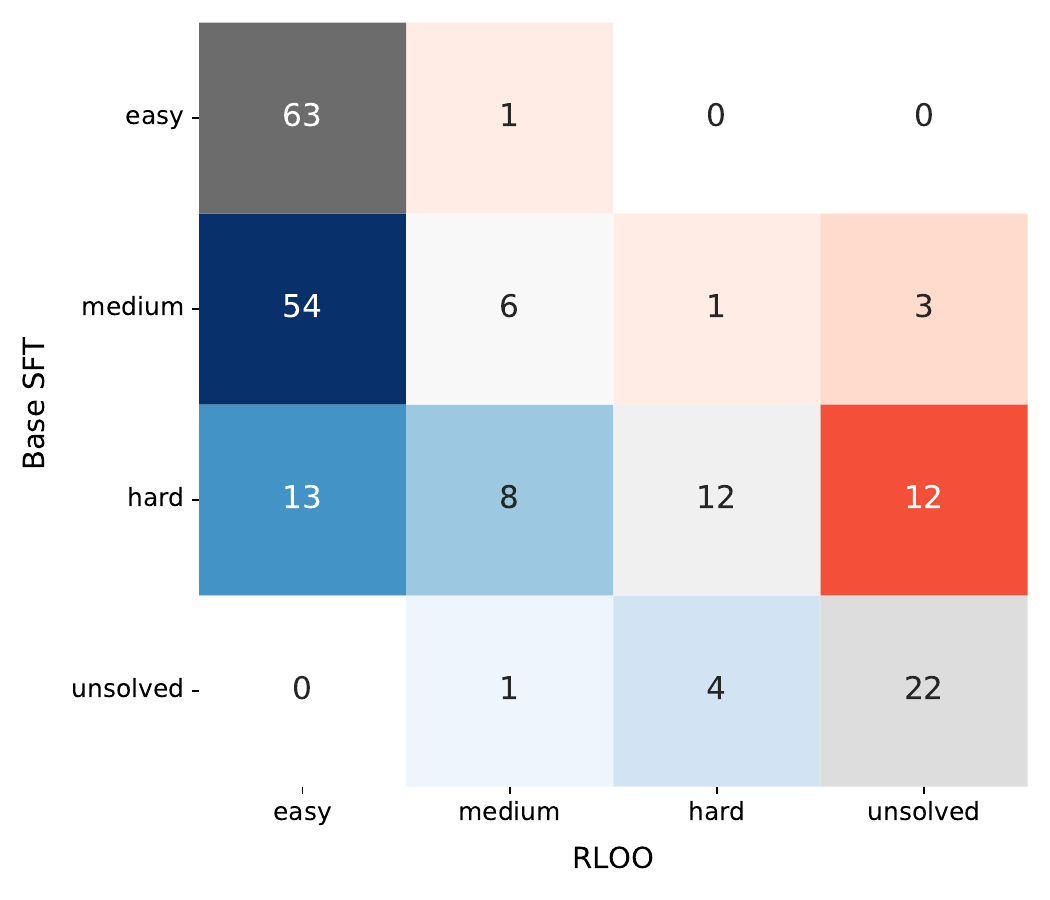}
    \includegraphics[width=0.24\linewidth]{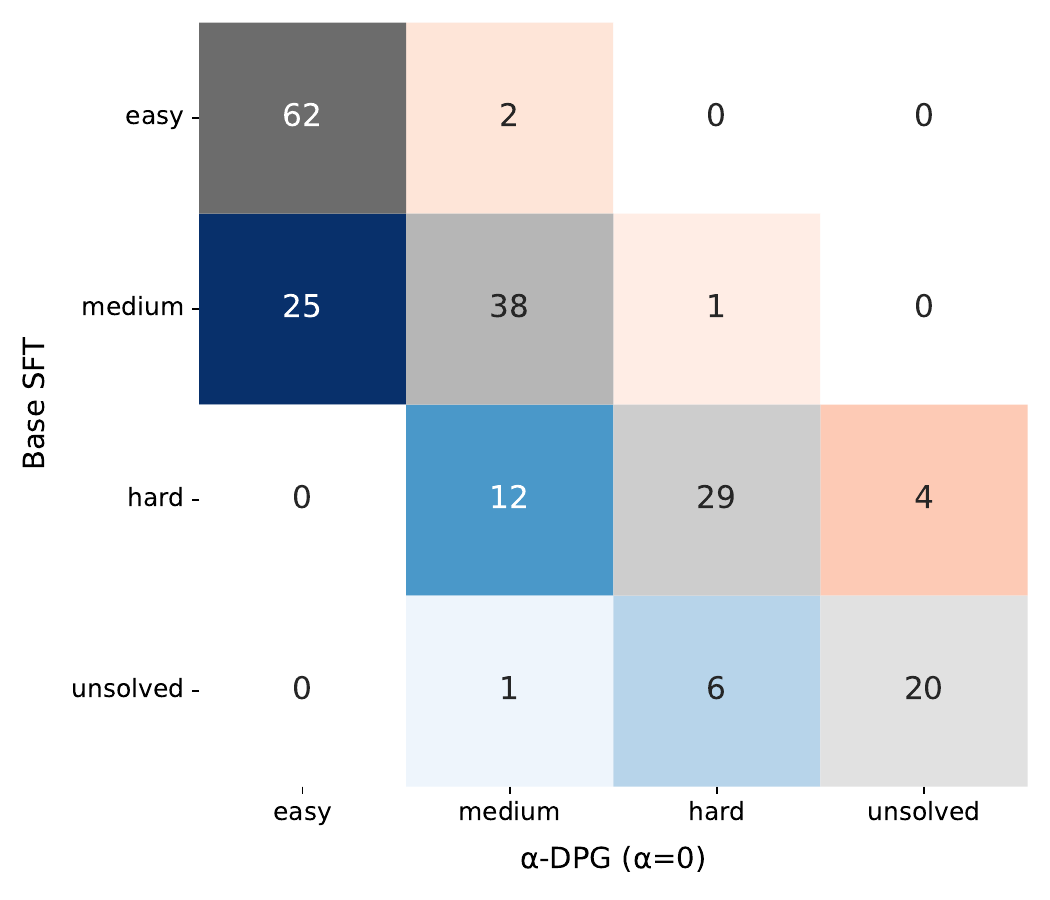}
    \includegraphics[width=0.24\linewidth]{figures/matrices/difficulty_transition_matrix_lean_fcdpg_amari_alpha_unit_0.25_dsprover_128_4_clip_pr_10_fp16_online_z.pdf}
    \includegraphics[width=0.24\linewidth]{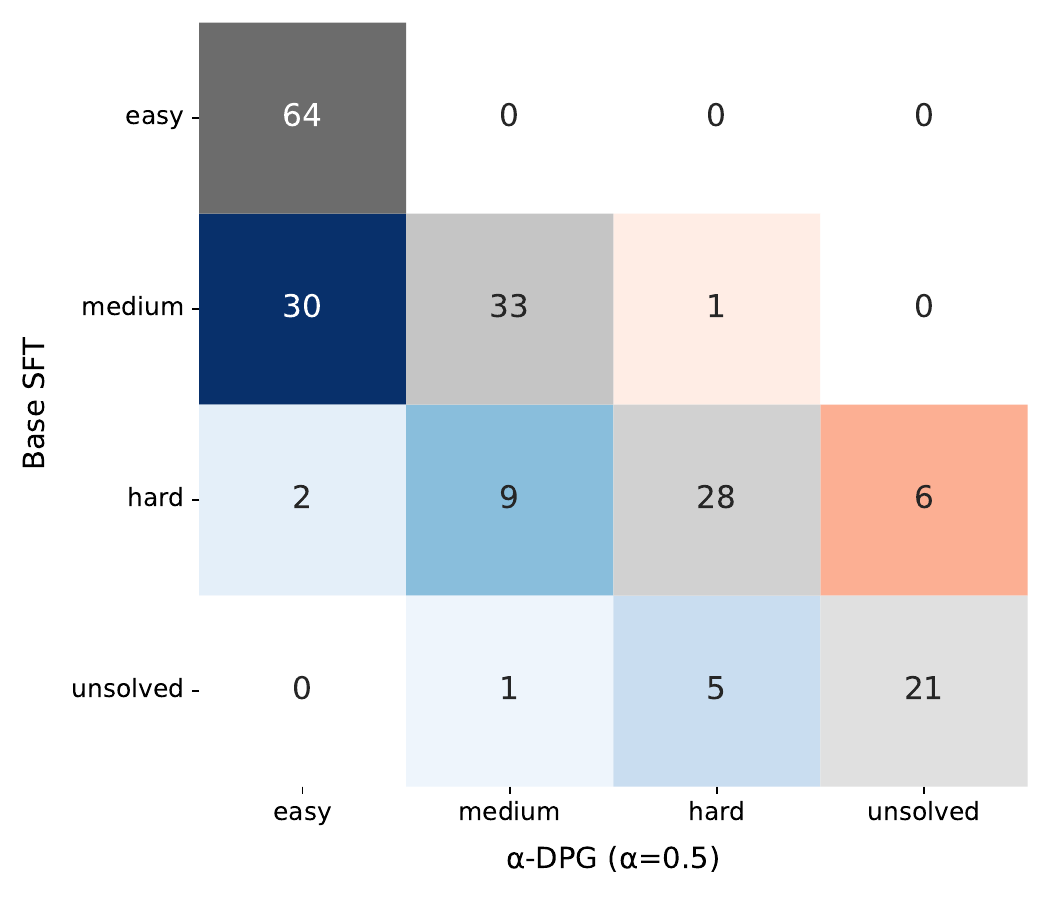}
    \includegraphics[width=0.24\linewidth]{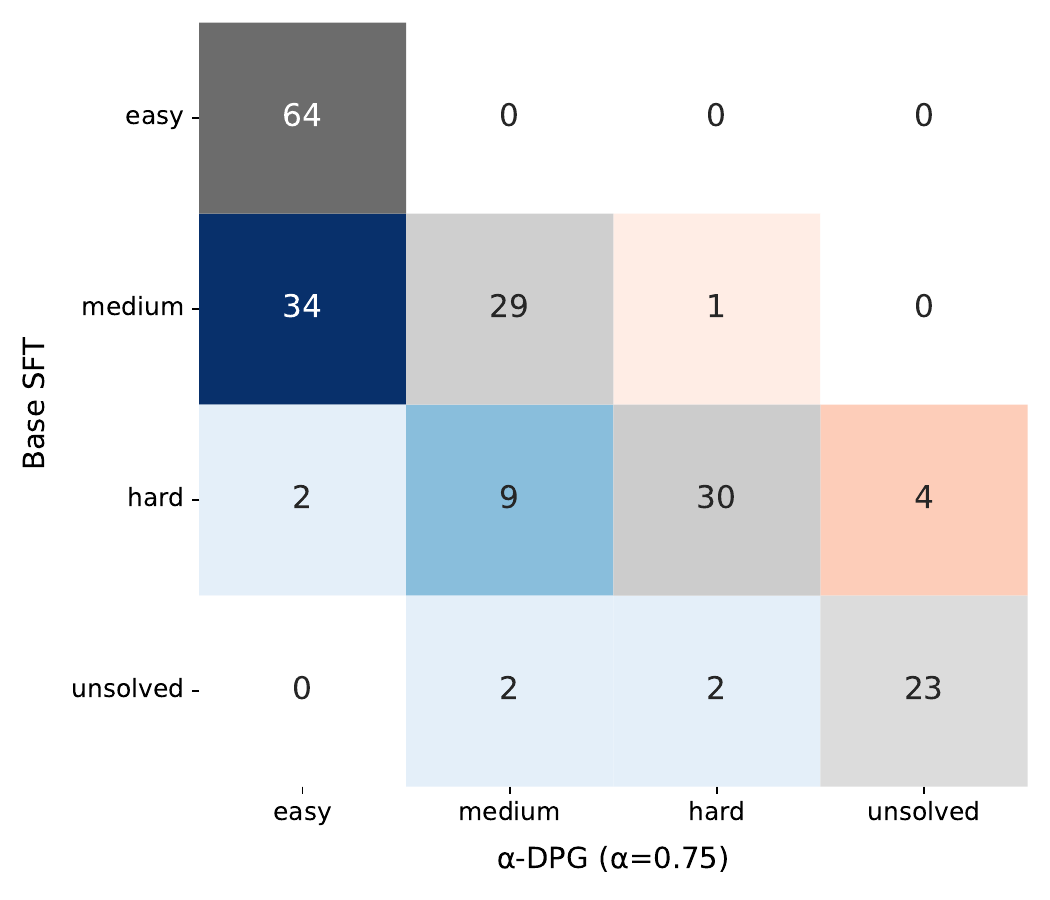}
    \includegraphics[width=0.24\linewidth]{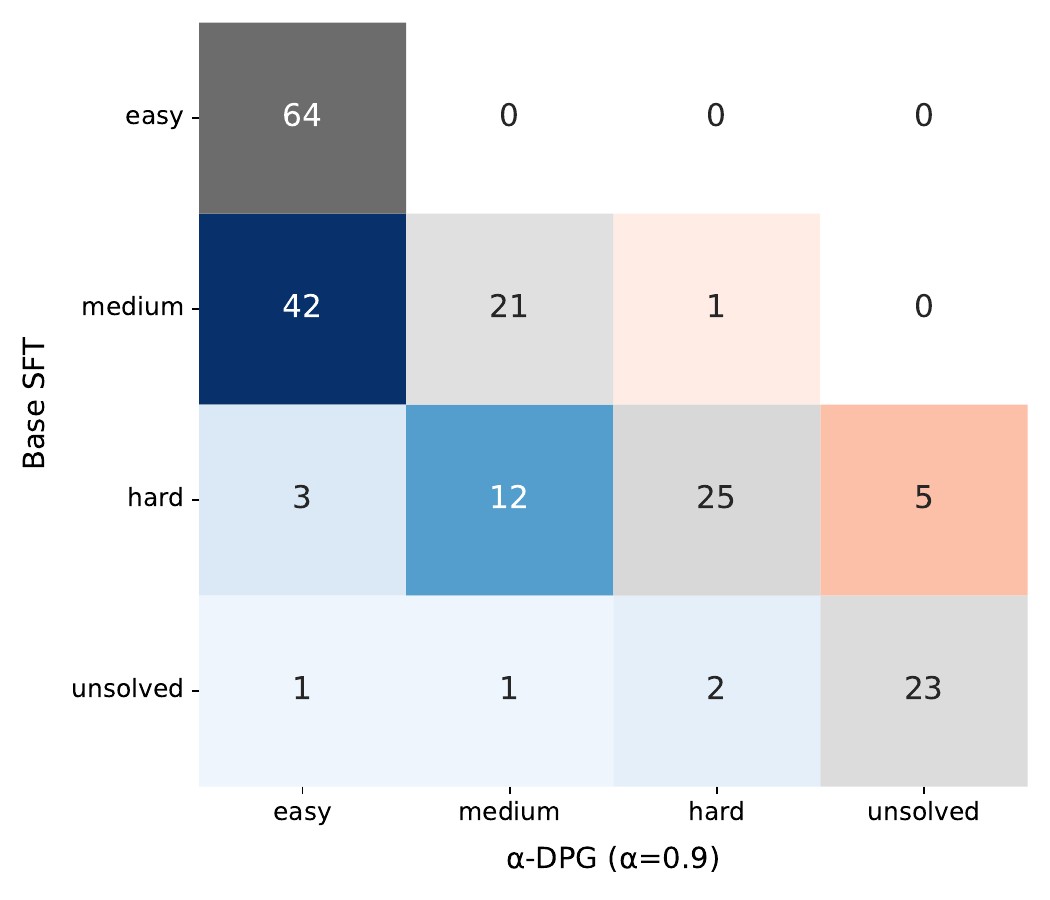}
    \includegraphics[width=0.24\linewidth]{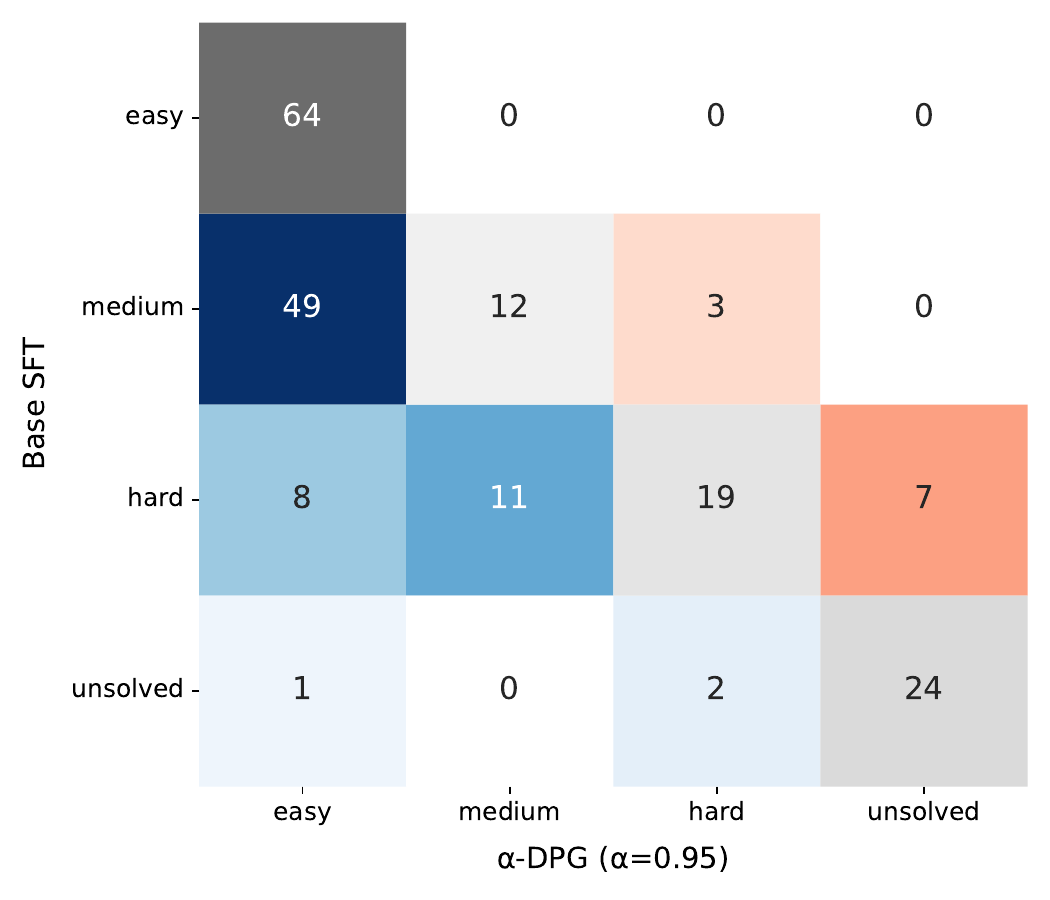}
    \includegraphics[width=0.24\linewidth]{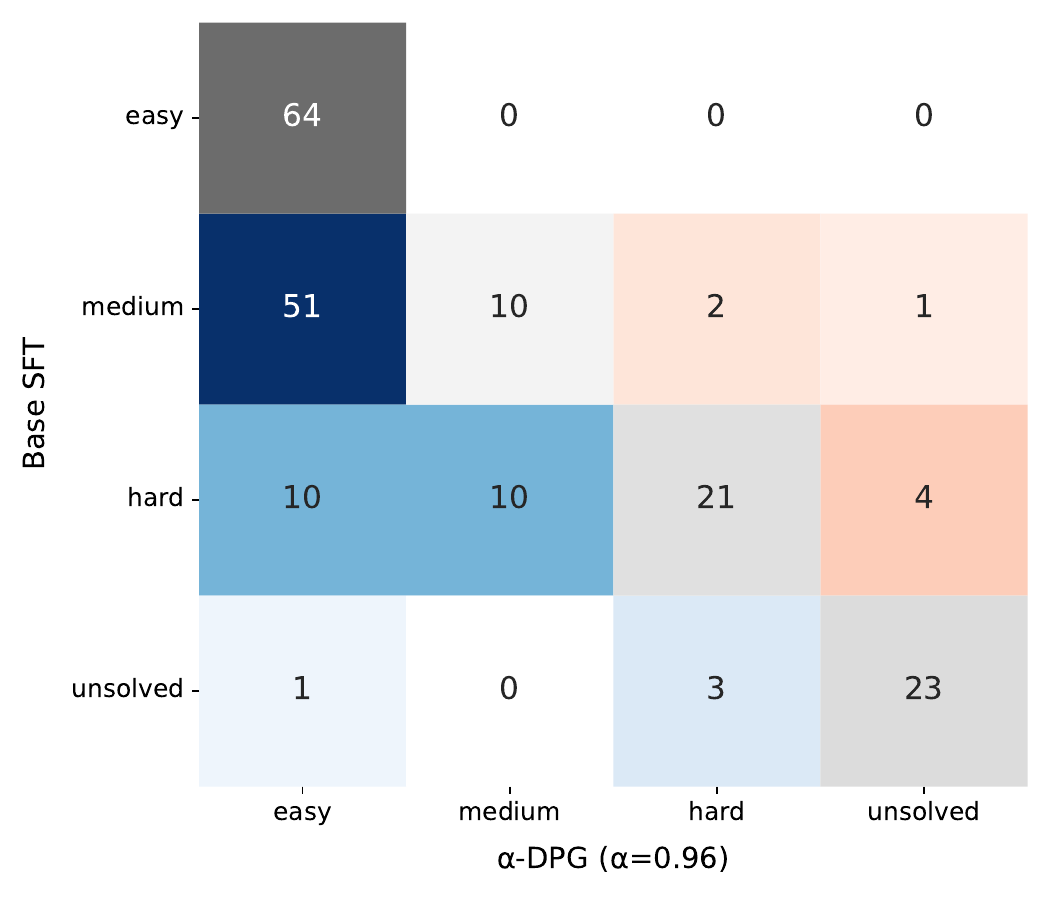}
    \includegraphics[width=0.24\linewidth]{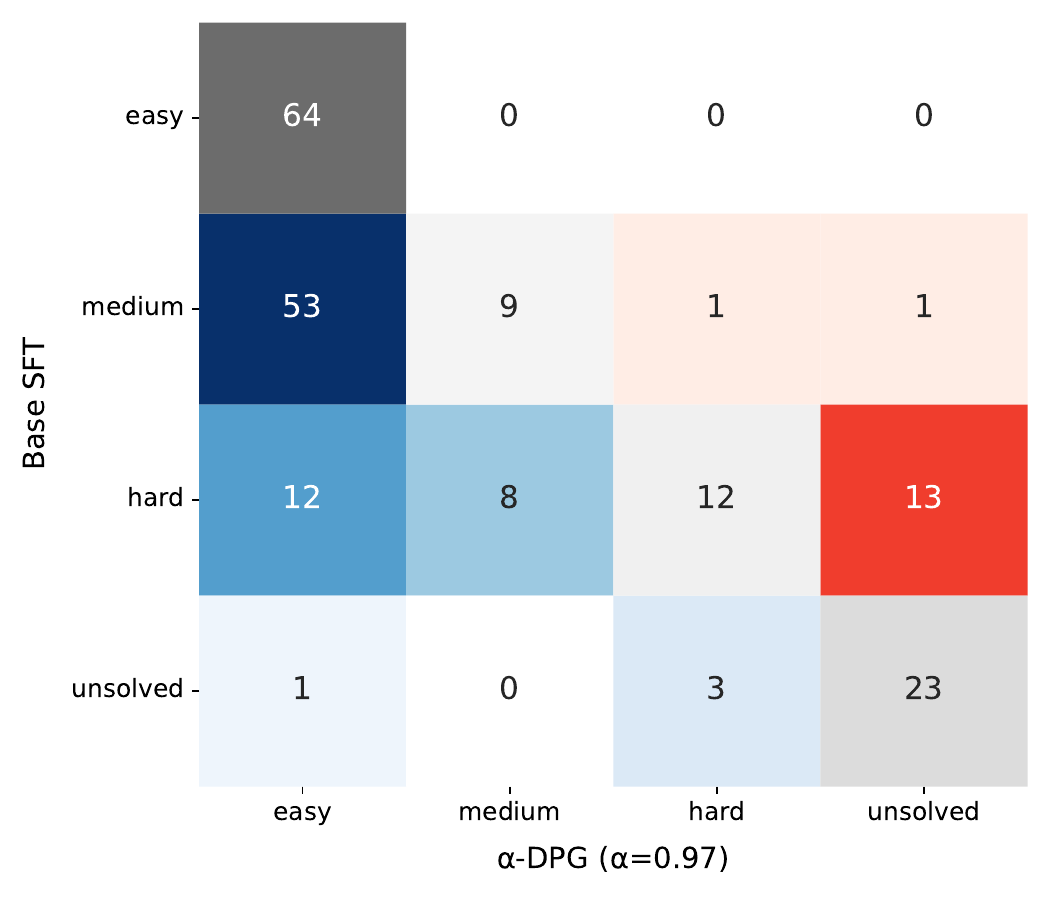}
    \includegraphics[width=0.24\linewidth]{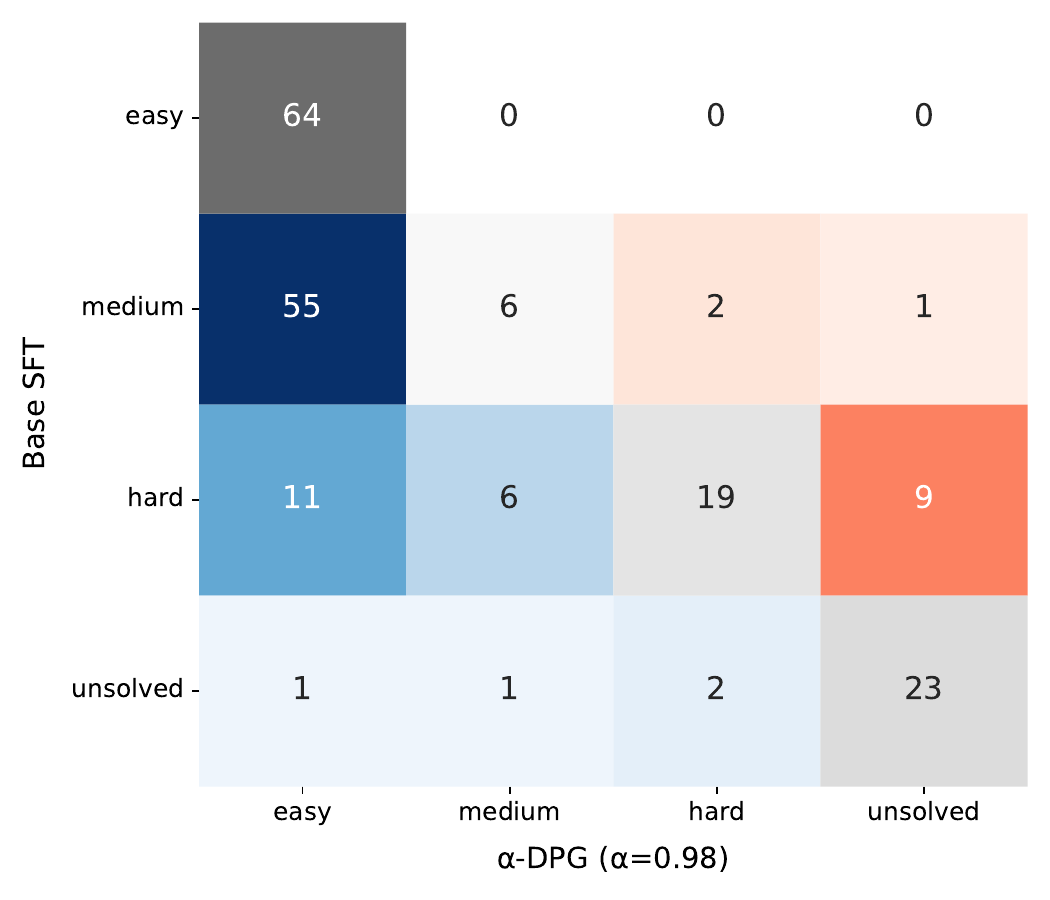}
    \includegraphics[width=0.24\linewidth]{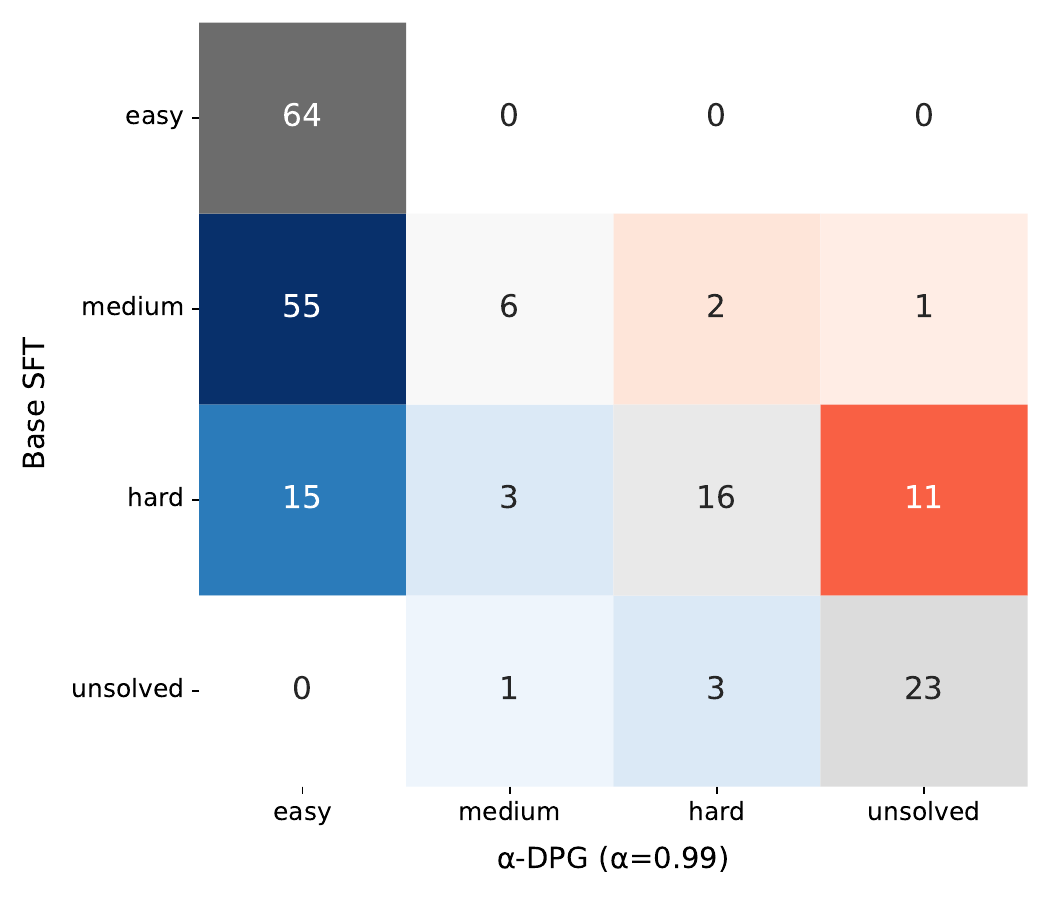}
    \includegraphics[width=0.24\linewidth]{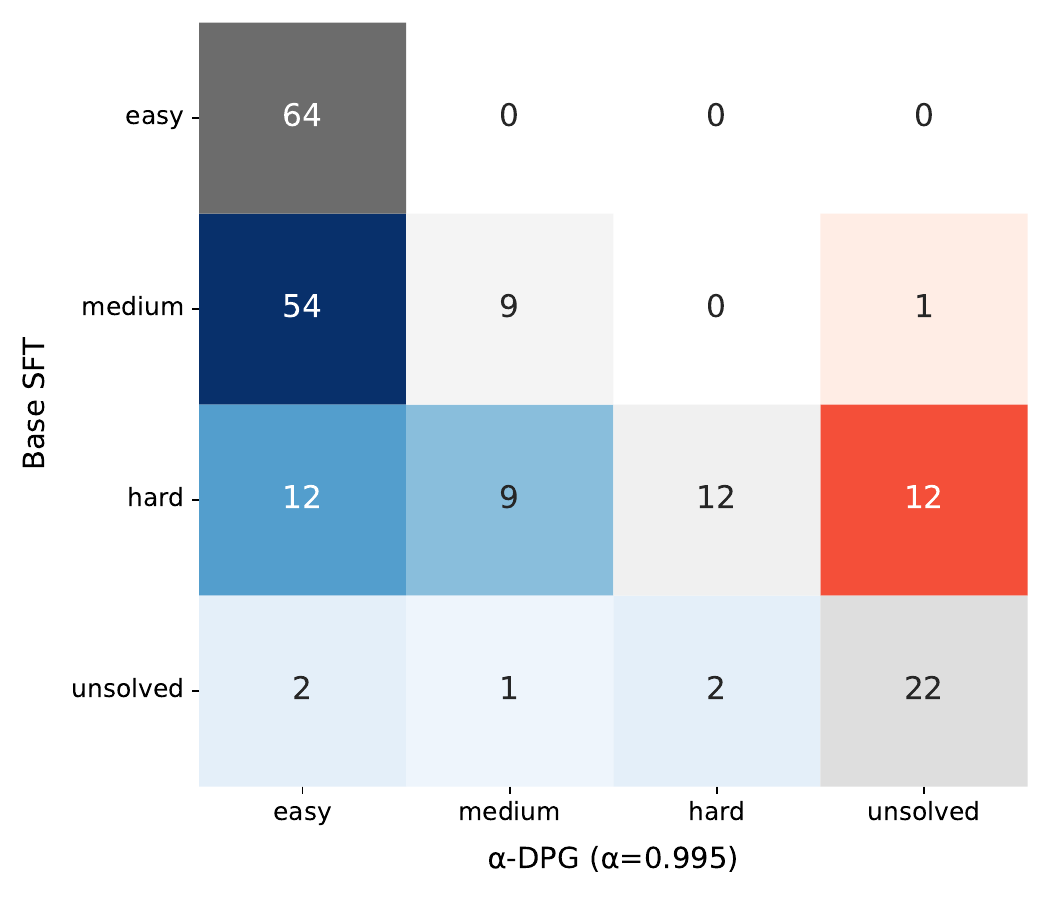}
    \includegraphics[width=0.24\linewidth]{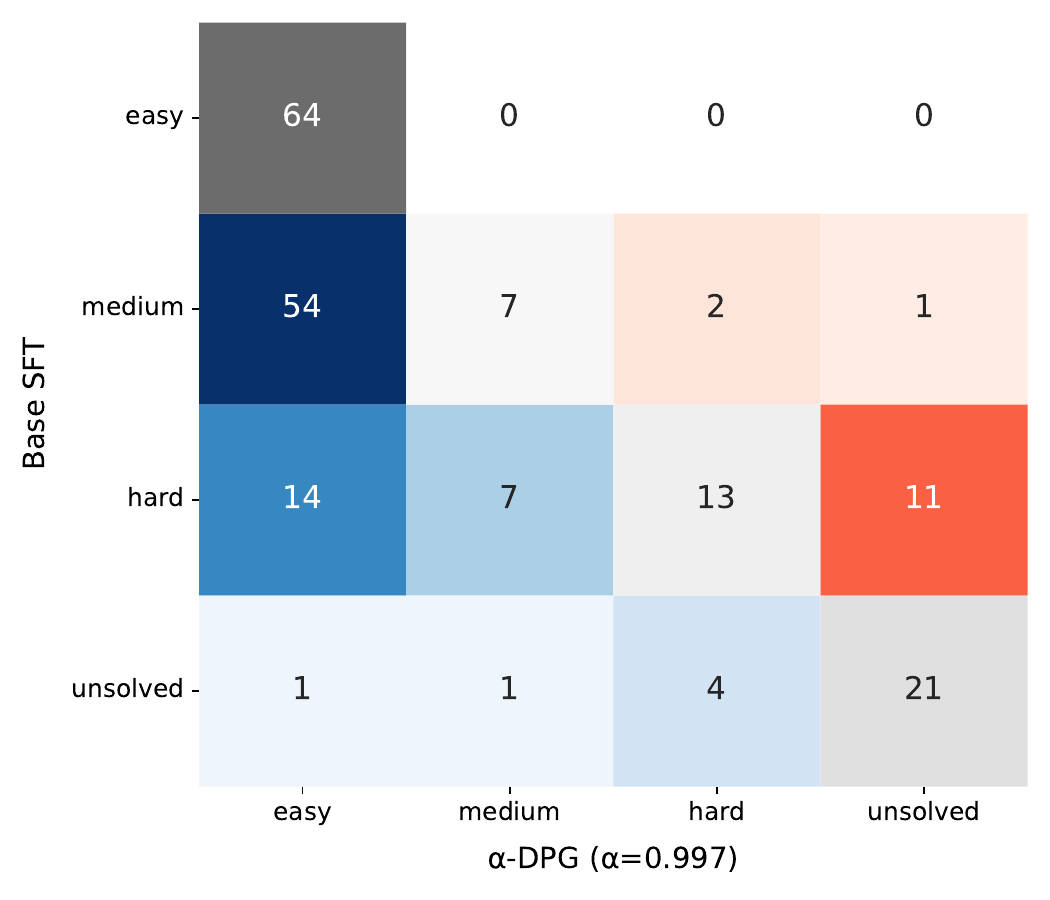}
    \includegraphics[width=0.24\linewidth]{figures/matrices/difficulty_transition_matrix_lean_fcdpg_amari_alpha_unit_0.999_dsprover_128_4_clip_pr_10_fp16_online_z.pdf}
     \caption{Problem Difficulty Transition Matrix from the Base-SFT to GRPO. The matrix shows the number of problems that transition from an initial difficulty classification under
     the base model (Base-SFT) (y-axis) to a final classification after post-training (x-axis).}
    \label{fig:app:difficulty1}
\end{figure}

\begin{figure}
    \centering
    \includegraphics[width=0.49\linewidth]{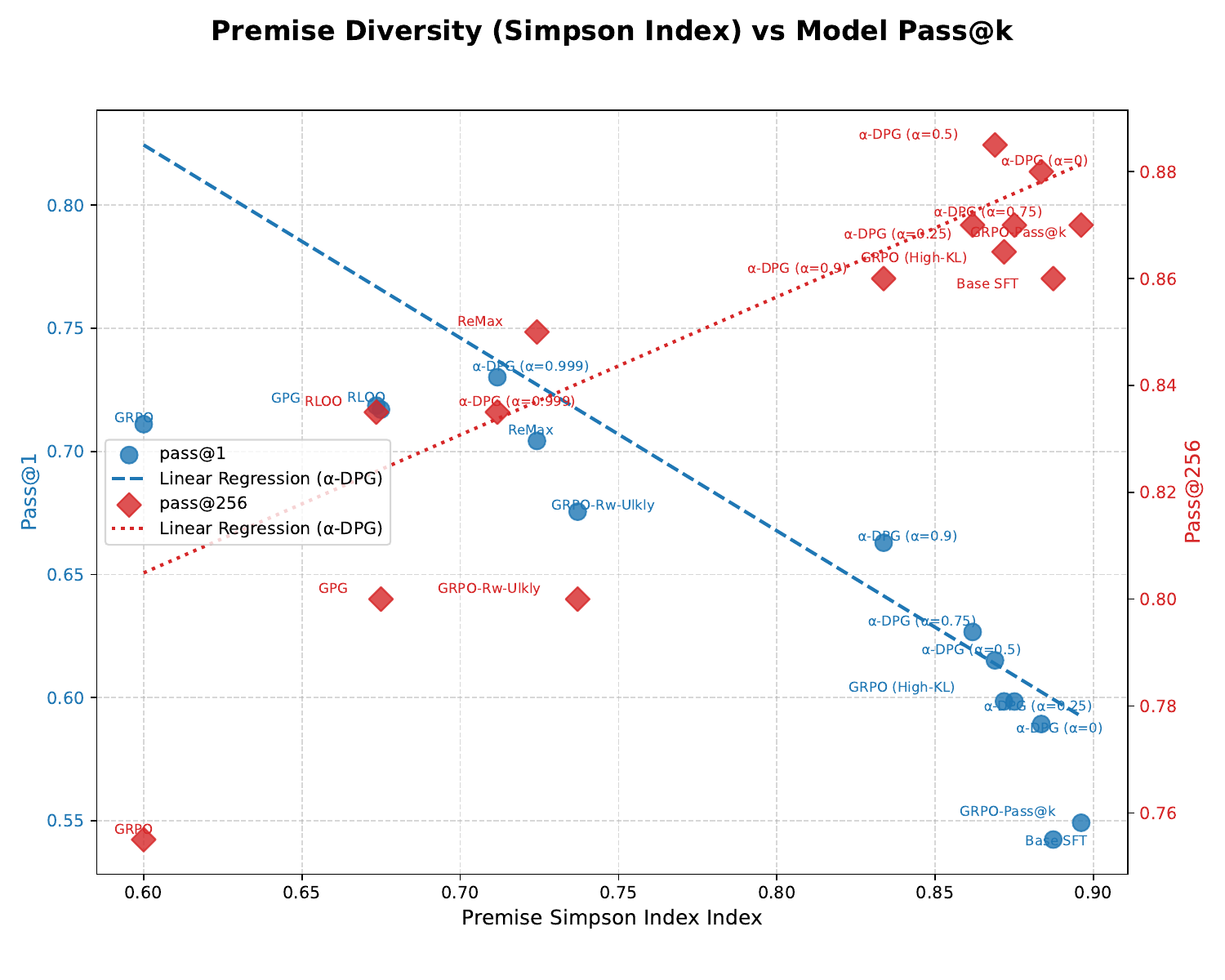}
    \includegraphics[width=0.49\linewidth]{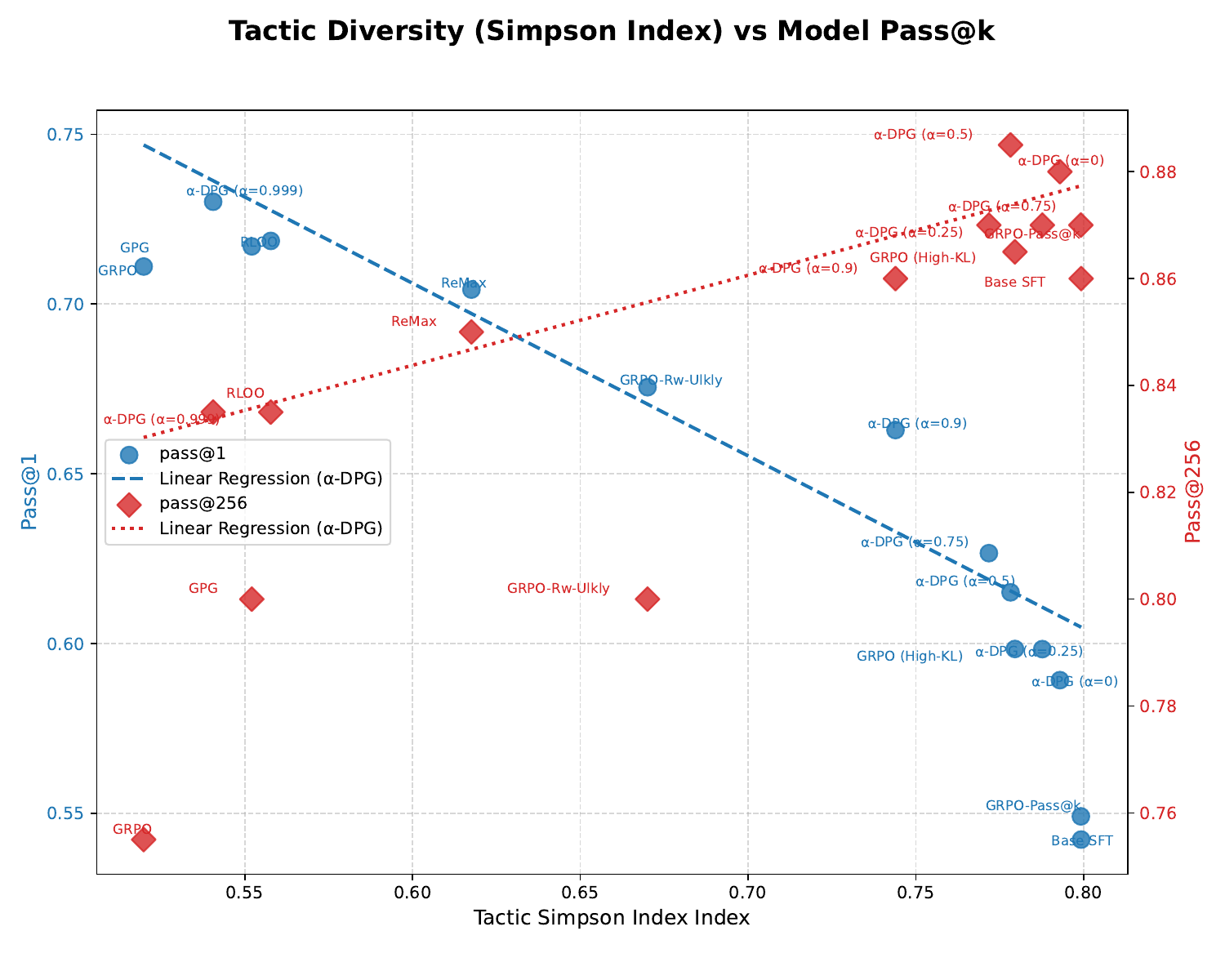}
    \includegraphics[width=0.49\linewidth]{figures/diversity/diversity_vs_performance_quadratic_amari_e.pdf}
    \includegraphics[width=0.49\linewidth]{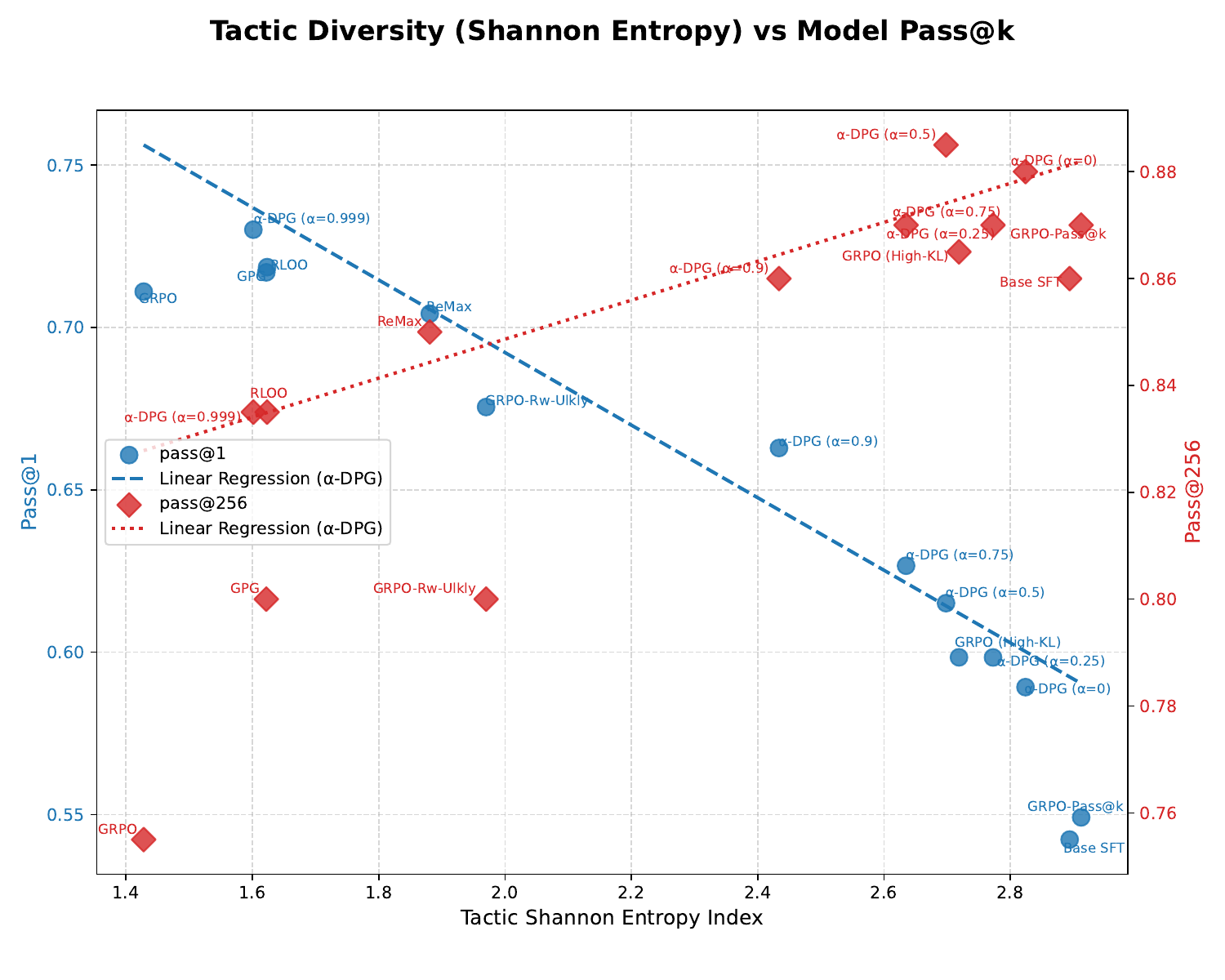}
     \caption{Diversity index vs Pass@k}
    \label{fig:app:diversity}
\end{figure}

\begin{table*}[h]
\centering
\scriptsize
\setlength{\tabcolsep}{3pt}
\resizebox{\linewidth}{!}{
\begin{tabular}{lccccccccccccccccccccc}
\toprule
 & \rotatebox{90}{Base SFT} & \rotatebox{90}{GRPO-Pass@k} & \rotatebox{90}{GRPO} & \rotatebox{90}{GRPO (High-KL)} & \rotatebox{90}{GRPO-Rw-Ulkly} & \rotatebox{90}{ReMax} & \rotatebox{90}{RLOO} & \rotatebox{90}{GPG} & \rotatebox{90}{$\alpha$-DPG ($\alpha$=0)} & \rotatebox{90}{$\alpha$-DPG ($\alpha$=0.25)} & \rotatebox{90}{$\alpha$-DPG ($\alpha$=0.5)} & \rotatebox{90}{$\alpha$-DPG ($\alpha$=0.75)} & \rotatebox{90}{$\alpha$-DPG ($\alpha$=0.9)} & \rotatebox{90}{$\alpha$-DPG ($\alpha$=0.95)} & \rotatebox{90}{$\alpha$-DPG ($\alpha$=0.96)} & \rotatebox{90}{$\alpha$-DPG ($\alpha$=0.97)} & \rotatebox{90}{$\alpha$-DPG ($\alpha$=0.98)} & \rotatebox{90}{$\alpha$-DPG ($\alpha$=0.99)} & \rotatebox{90}{$\alpha$-DPG ($\alpha$=0.995)} & \rotatebox{90}{$\alpha$-DPG ($\alpha$=0.997)} & \rotatebox{90}{$\alpha$-DPG ($\alpha$=0.999)} \\
\midrule
Base SFT & \textbf{55.3/86.5} & -1.0 & \textbf{+5.5} & -0.5 & +0.5 & +2.0 & \textbf{+5.0} & \textbf{+5.5} & -1.5 & -2.0 & +0.0 & +0.0 & +0.5 & +2.0 & +0.5 & \textbf{+5.0} & +3.0 & +4.0 & +4.0 & +3.0 & \textbf{+5.0} \\
GRPO-Pass@k & +0.6 & \textbf{55.9/87.5} & \textbf{+6.5} & +0.5 & +1.5 & +3.0 & \textbf{+6.0} & \textbf{+6.5} & -0.5 & -1.0 & +1.0 & +1.0 & +1.5 & \textbf{+3.0} & +1.5 & \textbf{+6.0} & \textbf{+4.0} & \textbf{+5.0} & \textbf{+5.0} & +4.0 & \textbf{+6.0} \\
GRPO & \textbf{+14.7} & \textbf{+14.1} & \textbf{70.0/81.0} & \textbf{-6.0} & \textbf{-5.0} & \textbf{-3.5} & -0.5 & +0.0 & \textbf{-7.0} & \textbf{-7.5} & \textbf{-5.5} & \textbf{-5.5} & \textbf{-5.0} & \textbf{-3.5} & \textbf{-5.0} & -0.5 & -2.5 & -1.5 & -1.5 & -2.5 & -0.5 \\
GRPO (High-KL) & \textbf{+5.4} & \textbf{+4.8} & \textbf{-9.3} & \textbf{60.7/87.0} & +1.0 & +2.5 & \textbf{+5.5} & \textbf{+6.0} & -1.0 & -1.5 & +0.5 & +0.5 & +1.0 & +2.5 & +1.0 & \textbf{+5.5} & +3.5 & \textbf{+4.5} & \textbf{+4.5} & +3.5 & \textbf{+5.5} \\
GRPO-Rw-Ulkly & \textbf{+12.4} & \textbf{+11.8} & \textbf{-2.4} & \textbf{+6.9} & \textbf{67.7/86.0} & +1.5 & \textbf{+4.5} & \textbf{+5.0} & -2.0 & -2.5 & -0.5 & -0.5 & -0.0 & +1.5 & -0.0 & \textbf{+4.5} & +2.5 & +3.5 & +3.5 & +2.5 & \textbf{+4.5} \\
ReMax & \textbf{+14.3} & \textbf{+13.7} & -0.5 & \textbf{+8.8} & +1.9 & \textbf{69.5/84.5} & \textbf{+3.0} & \textbf{+3.5} & \textbf{-3.5} & \textbf{-4.0} & -2.0 & -2.0 & -1.5 & +0.0 & -1.5 & +3.0 & +1.0 & +2.0 & +2.0 & +1.0 & +3.0 \\
RLOO & \textbf{+15.2} & \textbf{+14.6} & +0.5 & \textbf{+9.8} & \textbf{+2.9} & +1.0 & \textbf{70.5/81.5} & +0.5 & \textbf{-6.5} & \textbf{-7.0} & \textbf{-5.0} & \textbf{-5.0} & \textbf{-4.5} & -3.0 & \textbf{-4.5} & +0.0 & -2.0 & -1.0 & -1.0 & -2.0 & +0.0 \\
GPG & \textbf{+16.5} & \textbf{+15.9} & \textbf{+1.8} & \textbf{+11.1} & \textbf{+4.2} & \textbf{+2.3} & +1.3 & \textbf{71.8/81.0} & \textbf{-7.0} & \textbf{-7.5} & \textbf{-5.5} & \textbf{-5.5} & \textbf{-5.0} & \textbf{-3.5} & \textbf{-5.0} & -0.5 & \textbf{-2.5} & -1.5 & -1.5 & -2.5 & -0.5 \\
$\alpha$-DPG ($\alpha$=0) & \textbf{+6.5} & \textbf{+5.9} & \textbf{-8.2} & +1.1 & \textbf{-5.9} & \textbf{-7.8} & \textbf{-8.7} & \textbf{-10.0} & \textbf{61.8/88.0} & -0.5 & +1.5 & +1.5 & +2.0 & \textbf{+3.5} & +2.0 & \textbf{+6.5} & \textbf{+4.5} & \textbf{+5.5} & \textbf{+5.5} & \textbf{+4.5} & \textbf{+6.5} \\
$\alpha$-DPG ($\alpha$=0.25) & \textbf{+6.7} & \textbf{+6.2} & \textbf{-8.0} & \textbf{+1.3} & \textbf{-5.6} & \textbf{-7.5} & \textbf{-8.5} & \textbf{-9.8} & +0.3 & \textbf{62.0/88.5} & +2.0 & +2.0 & +2.5 & \textbf{+4.0} & \textbf{+2.5} & \textbf{+7.0} & \textbf{+5.0} & \textbf{+6.0} & \textbf{+6.0} & \textbf{+5.0} & \textbf{+7.0} \\
$\alpha$-DPG ($\alpha$=0.5) & \textbf{+7.5} & \textbf{+6.9} & \textbf{-7.2} & \textbf{+2.1} & \textbf{-4.9} & \textbf{-6.8} & \textbf{-7.7} & \textbf{-9.0} & \textbf{+1.0} & +0.8 & \textbf{62.8/86.5} & -0.0 & +0.5 & +2.0 & +0.5 & \textbf{+5.0} & \textbf{+3.0} & \textbf{+4.0} & \textbf{+4.0} & +3.0 & \textbf{+5.0} \\
$\alpha$-DPG ($\alpha$=0.75) & \textbf{+9.5} & \textbf{+8.9} & \textbf{-5.2} & \textbf{+4.1} & \textbf{-2.9} & \textbf{-4.8} & \textbf{-5.7} & \textbf{-7.0} & \textbf{+3.0} & \textbf{+2.7} & \textbf{+2.0} & \textbf{64.8/86.5} & +0.5 & \textbf{+2.0} & +0.5 & \textbf{+5.0} & \textbf{+3.0} & \textbf{+4.0} & \textbf{+4.0} & +3.0 & \textbf{+5.0} \\
$\alpha$-DPG ($\alpha$=0.9) & \textbf{+11.7} & \textbf{+11.1} & \textbf{-3.0} & \textbf{+6.3} & -0.6 & \textbf{-2.5} & \textbf{-3.5} & \textbf{-4.8} & \textbf{+5.2} & \textbf{+5.0} & \textbf{+4.2} & \textbf{+2.2} & \textbf{67.0/86.0} & +1.5 & -0.0 & \textbf{+4.5} & +2.5 & \textbf{+3.5} & +3.5 & +2.5 & \textbf{+4.5} \\
$\alpha$-DPG ($\alpha$=0.95) & \textbf{+13.3} & \textbf{+12.8} & -1.4 & \textbf{+7.9} & +1.0 & -0.9 & -1.9 & \textbf{-3.2} & \textbf{+6.8} & \textbf{+6.6} & \textbf{+5.8} & \textbf{+3.9} & \textbf{+1.6} & \textbf{68.6/84.5} & -1.5 & +3.0 & +1.0 & +2.0 & +2.0 & +1.0 & +3.0 \\
$\alpha$-DPG ($\alpha$=0.96) & \textbf{+14.3} & \textbf{+13.7} & -0.4 & \textbf{+8.9} & \textbf{+2.0} & +0.1 & -0.9 & \textbf{-2.2} & \textbf{+7.8} & \textbf{+7.6} & \textbf{+6.8} & \textbf{+4.8} & \textbf{+2.6} & +1.0 & \textbf{69.6/86.0} & \textbf{+4.5} & +2.5 & \textbf{+3.5} & +3.5 & +2.5 & \textbf{+4.5} \\
$\alpha$-DPG ($\alpha$=0.97) & \textbf{+15.1} & \textbf{+14.5} & +0.3 & \textbf{+9.6} & \textbf{+2.7} & +0.8 & -0.2 & -1.5 & \textbf{+8.6} & \textbf{+8.3} & \textbf{+7.5} & \textbf{+5.6} & \textbf{+3.3} & \textbf{+1.7} & +0.7 & \textbf{70.3/81.5} & -2.0 & -1.0 & -1.0 & -2.0 & +0.0 \\
$\alpha$-DPG ($\alpha$=0.98) & \textbf{+14.5} & \textbf{+13.9} & -0.2 & \textbf{+9.0} & \textbf{+2.1} & +0.2 & -0.7 & \textbf{-2.1} & \textbf{+8.0} & \textbf{+7.7} & \textbf{+7.0} & \textbf{+5.0} & \textbf{+2.8} & +1.1 & +0.1 & -0.6 & \textbf{69.8/83.5} & +1.0 & +1.0 & -0.0 & +2.0 \\
$\alpha$-DPG ($\alpha$=0.99) & \textbf{+15.3} & \textbf{+14.7} & +0.6 & \textbf{+9.9} & \textbf{+2.9} & +1.0 & +0.1 & \textbf{-1.2} & \textbf{+8.8} & \textbf{+8.5} & \textbf{+7.8} & \textbf{+5.8} & \textbf{+3.6} & \textbf{+2.0} & +1.0 & +0.2 & +0.8 & \textbf{70.6/82.5} & -0.0 & -1.0 & +1.0 \\
$\alpha$-DPG ($\alpha$=0.995) & \textbf{+16.7} & \textbf{+16.1} & +2.0 & \textbf{+11.3} & \textbf{+4.4} & \textbf{+2.5} & +1.5 & +0.2 & \textbf{+10.2} & \textbf{+10.0} & \textbf{+9.2} & \textbf{+7.3} & \textbf{+5.0} & \textbf{+3.4} & \textbf{+2.4} & +1.7 & \textbf{+2.3} & +1.4 & \textbf{72.0/82.5} & -1.0 & +1.0 \\
$\alpha$-DPG ($\alpha$=0.997) & \textbf{+16.9} & \textbf{+16.3} & \textbf{+2.2} & \textbf{+11.5} & \textbf{+4.6} & \textbf{+2.7} & +1.7 & +0.4 & \textbf{+10.4} & \textbf{+10.2} & \textbf{+9.4} & \textbf{+7.4} & \textbf{+5.2} & \textbf{+3.6} & \textbf{+2.6} & +1.9 & \textbf{+2.5} & +1.6 & +0.2 & \textbf{72.2/83.5} & +2.0 \\
$\alpha$-DPG ($\alpha$=0.999) & \textbf{+16.4} & \textbf{+15.8} & +1.7 & \textbf{+11.0} & \textbf{+4.1} & \textbf{+2.2} & +1.2 & -0.1 & \textbf{+9.9} & \textbf{+9.7} & \textbf{+8.9} & \textbf{+6.9} & \textbf{+4.7} & \textbf{+3.1} & \textbf{+2.1} & +1.4 & +2.0 & +1.1 & -0.3 & -0.5 & \textbf{71.7/81.5} \\
\bottomrule
\end{tabular}}
\caption{Pairwise performance comparison. \textbf{Diagonal}: Absolute Pass@1 / Pass@256 scores (\%). \textbf{Upper Triangle}: Pass@256 differences (Row $-$ Column). \textbf{Lower Triangle}: Pass@1 differences (Row $-$ Column). \textbf{Bold} indicates statistical significance ($p < 0.05$) via paired bootstrap.}
\label{tab:pairwise_matrix}
\end{table*}

\begin{table}[h!]
\centering
\resizebox{\textwidth}{!}{%
% \begin{tabular}{lrrrrrr}
% \toprule
% Model & Premises SI & Tactics SI & Tactics Entropy & Premises Entropy & Pass@256 & Pass@1 \\
% \midrule
% GRPO                       & 0.6000 & 0.5198 & 1.4280 & 2.1844 & 0.7550 & 0.7111 \\
% $\alpha$-DPG ($\alpha=0.999$) & 0.6248 & 0.5289 & 1.4605 & 2.2799 & 0.7800 & 0.7212 \\
% GRPO-Rw-Ulkly              & 0.7371 & 0.6700 & 1.9700 & 2.8147 & 0.8000 & 0.6755 \\
% $\alpha$-DPG ($\alpha=0.9$)   & 0.7653 & 0.7003 & 2.0941 & 2.9818 & 0.8100 & 0.6842 \\
% GRPO (High-KL)             & 0.8066 & 0.7356 & 2.3106 & 3.2714 & 0.8100 & 0.6361 \\
% $\alpha$-DPG ($\alpha=0.75$)  & 0.8093 & 0.7358 & 2.2919 & 3.3189 & 0.8350 & 0.6588 \\
% $\alpha$-DPG ($\alpha=0.5$)   & 0.8267 & 0.7420 & 2.3616 & 3.4591 & 0.8400 & 0.6404 \\
% $\alpha$-DPG ($\alpha=0.25$)  & 0.8368 & 0.7559 & 2.4511 & 3.5615 & 0.8600 & 0.6266 \\
% Base SFT                   & 0.8442 & 0.7523 & 2.4216 & 3.5502 & 0.8150 & 0.5908 \\
% GRPO-Pass@k                & 0.8447 & 0.7545 & 2.4594 & 3.6700 & 0.8500 & 0.6003 \\
% $\alpha$-DPG ($\alpha=0.0$)   & 0.8851 & 0.7889 & 2.7843 & 4.1168 & 0.865 & 0.5772 \\

% \bottomrule
% \end{tabular}}
\begin{tabular}{lrrrrrr}
\toprule
 & Premise Simpson Index & Tactic Simpson Index & Tactic Entropy Index & Premise Entropy Index & pass@256 & pass@1 \\
Model &  &  &  &  &  &  \\
\midrule
Base SFT & 0.884013 & 0.793399 & 2.814039 & 4.138613 & 0.865000 & 0.553008 \\
GRPO-Pass@k & 0.893146 & 0.793183 & 2.840828 & 4.271897 & 0.875000 & 0.558867 \\
GRPO & 0.668813 & 0.558731 & 1.641007 & 2.597491 & 0.810000 & 0.700332 \\
GRPO (High-KL) & 0.867052 & 0.775942 & 2.665062 & 3.936221 & 0.870000 & 0.607383 \\
GRPO-Rw-Ulkly & 0.794354 & 0.686514 & 2.127269 & 3.231985 & 0.860000 & 0.676836 \\
ReMax & 0.712695 & 0.614204 & 1.861515 & 2.920214 & 0.845000 & 0.695762 \\
RLOO & 0.651459 & 0.557749 & 1.614147 & 2.448276 & 0.815000 & 0.705332 \\
GPG & 0.670303 & 0.550061 & 1.607887 & 2.538284 & 0.810000 & 0.718320 \\
$\alpha$-DPG ($\alpha$=0) & 0.869384 & 0.775301 & 2.636581 & 3.899748 & 0.880000 & 0.617969 \\
$\alpha$-DPG ($\alpha$=0.25) & 0.864668 & 0.776839 & 2.654524 & 3.873548 & 0.885000 & 0.620527 \\
$\alpha$-DPG ($\alpha$=0.5) & 0.864517 & 0.770372 & 2.616379 & 3.855439 & 0.865000 & 0.628066 \\
$\alpha$-DPG ($\alpha$=0.75) & 0.849480 & 0.759164 & 2.519318 & 3.678993 & 0.865000 & 0.647949 \\
$\alpha$-DPG ($\alpha$=0.9) & 0.826129 & 0.735853 & 2.344282 & 3.473220 & 0.860000 & 0.670332 \\
$\alpha$-DPG ($\alpha$=0.95) & 0.793030 & 0.699298 & 2.144001 & 3.191290 & 0.845000 & 0.686562 \\
$\alpha$-DPG ($\alpha$=0.96) & 0.792300 & 0.670038 & 2.035383 & 3.155785 & 0.860000 & 0.696621 \\
$\alpha$-DPG ($\alpha$=0.97) & 0.731704 & 0.635556 & 1.884774 & 2.848516 & 0.815000 & 0.703691 \\
$\alpha$-DPG ($\alpha$=0.98) & 0.741682 & 0.635610 & 1.877606 & 2.881272 & 0.835000 & 0.697891 \\
$\alpha$-DPG ($\alpha$=0.99) & 0.718106 & 0.584643 & 1.699220 & 2.735522 & 0.825000 & 0.706055 \\
$\alpha$-DPG ($\alpha$=0.995) & 0.683162 & 0.554934 & 1.603550 & 2.593344 & 0.825000 & 0.720449 \\
$\alpha$-DPG ($\alpha$=0.997) & 0.702118 & 0.584954 & 1.693736 & 2.687643 & 0.835000 & 0.722520 \\
$\alpha$-DPG ($\alpha$=0.999) & 0.662599 & 0.546542 & 1.573962 & 2.536636 & 0.815000 & 0.717402 \\
\bottomrule
\end{tabular}}
\caption{Diversity and performance metrics across models. For each problem statement, we evaluate 256 generated proof sequences. At every proof state, we compute the Simpson Index (SI) and Shannon entropy over both tactic choices and premise selections. The metrics are then aggregated across all problems to capture the overall diversity of candidate sequences. Higher diversity in tactics and premises generally correlates with improved pass@256 performance.} 
\label{tab:diversity}
\end{table}

\FloatBarrier

\section{Formal complements on $\alpha$-divergence}
\label{app:decomposition}

\subsection{Smoothness of $\alpha$-divergence, behaviour for $\alpha \to 0$ and $\alpha \to 1$}
The fact that $D_{f_\alpha}(\pi,p)$ is a continuous function of $\alpha$ and that it converges to the forward $KL(p||\pi)$ for $\alpha \to 0$ and to the reverse $KL(\pi||p)$ for $\alpha \to 1$ --- including cases where these KL-divergences may be infinite --- is well-known in the literature, e.g. \citet{CichockiAmari2010}.

In our specific situation, while typically the autoregressive policy $\pi$ is full-support ($\pi(y)>0, \forall y\in \mathcal{Y}$), the support $A := \{y : p(y) > 0\}$ of $p$ is a proper subset of $\mathcal{Y}$. In such cases, while $KL(p||\pi)$ is (typically) finite,\footnote{In some ``pathological'' situations, and for an infinite sample space $\mathcal{Y}$, $KL(p||q)$ can be infinite even when the supports coincide.} $KL(\pi||p)$ is infinite.

\subsection{Comparing different policies, illustration}
In order to provide a “non gradient” interpretation of what happens at the edges, it is instructive to compare $D_{f_\alpha}(\pi,p)$ with $D_{f_\alpha}(\pi',p)$ for different candidate policies $\pi$ and $\pi'$.

\paragraph{Illustration}
We provide an illustration in Fig. \ref{fig:alpha-toy-eps}, on a toy example with a small finite sample space $\mathcal{Y}$, with $A\subsetneq \mathcal{Y}$, and with three policies. The policy $\pi_1$ is more “covering” than $\pi_2$ and $\pi_3$, with a lower forward $KL(p||\pi)$, but it is less “focussed” on the valid region $A$ than either $\pi_2$ or $\pi_3$, which are both concentrated on $A$ (with $\pi_2(A)=\pi_3(A)=0.9)$, but with different peaks. Despite the fact that the reverse $KL(\pi||p)$ is infinite for the three policies, the divergences, for $\alpha$ close to $1$ --- while large (and tending to infinity) --- still show a clear order, with $D_{f_\alpha}(\pi_1,p)$ much higher than $D_{f_\alpha}(\pi_3,p)$, which in turn is slightly higher than $D_{f_\alpha}(\pi_2,p)$.

More in detail, we consider the discrete space $
\mathcal{Y}= \{y_1,y_2,y_3\}$, over which the base model $\pi_{base}$ has an (almost) uniform distribution $\pi_{base} = (0.33,\,0.34,\,0.33)$, and where the binary verifier $v(y)$ takes the values $(1,0,1)$, i.e. accepts $y_1$ and $y_3$ and rejects $y_2$. This results in the target distribution: 
$$p = (0.5,\,0,\,0.5).$$

We study three alternative policies, with distributions:
\[
\pi_1 = \pi_{base} = (0.33,\,0.34,\,0.33),\qquad
\pi_2 = (0.8,\,0.1,\,0.1),\qquad
\pi_3 = (\epsilon,\,0.1,\,1-\epsilon),
\]
taking $\epsilon=0.01$. $\pi_1$ is the more diverse/covering of the three, but has some significant mass on the invalid point $y_2$, while $\pi_2$ and $\pi_3$ waste less mass on the invalid point, and are peaky on the first point and the third point respectively, even more so for $\pi_3$.

The endpoints recover the forward and reverse KL divergences:
\[
\lim_{\alpha\to0} D_{f_\alpha}(\pi \Vert p)
= \mathrm{KL}(p\Vert\pi),\qquad
\lim_{\alpha\to1} D_{f_\alpha}(\pi \Vert p)
= \mathrm{KL}(\pi\Vert p),
\]
where we adopt the usual conventions that $\mathrm{KL}(p\Vert\pi)=+\infty$ whenever $p$ charges a point on which $\pi$ vanishes, and similarly for $\mathrm{KL}(\pi\Vert p)$.

In our setting, $p(y_2)=0$ while each $\pi_i$ assigns positive mass to
$y_2$, so $\mathrm{KL}(\pi_i\Vert p)=+\infty$ for $i=1,2,3$.

\paragraph{Numerical values.}
Table~\ref{tab:falpha-toy} reports $D_{f_\alpha}(\pi_i\Vert p)$ for a
range of representative $\alpha$ values, including the limiting cases
$\alpha=0$ and $\alpha=1$.

\begin{table}[h!]
\centering
\begin{tabular}{lccccccc}
\toprule
Distribution
& $\alpha\to 0$ & $\alpha=0.1$ & $\alpha=0.5$
& $\alpha=0.7$ & $\alpha=0.9$ & $\alpha=0.99$ & $\alpha\to 1$ \\
\midrule
$\pi_1=(0.33,0.34,0.33)$
& 0.4155 & 0.4522 & 0.7504 & 1.2018 & 3.4666 & 34.0658 & $\infty$ \\
$\pi_2=(0.8,0.1,0.1)$
& 0.5697 & 0.5585 & 0.5758 & 0.6816 & 1.3252 & 10.3160 & $\infty$ \\
$\pi_3=(0.01,0.1,0.89)$
& 1.6677 & 1.4693 & 1.0488 & 1.1827 & 1.7766 & 10.5828 & $\infty$ \\
\bottomrule
\end{tabular}
\caption{Values of $D_{f_\alpha}(\pi_i\Vert p)$ for
$p=(0.5,0,0.5)$ and
$\pi_1,\pi_2,\pi_3$ at selected $\alpha$ values.
For $\alpha\to 0$ and $\alpha\to 1$ we recover the forward and reverse
KL divergences, respectively.}
\label{tab:falpha-toy}
\end{table}

\paragraph{Curves as a function of $\alpha$.}
Figure~\ref{fig:alpha-toy-eps} shows the curves
$\alpha\mapsto D_{f_\alpha}(\pi_i\Vert p)$ for $i=1,2,3$ on
$\alpha\in(0,1)$.

On the left of the plot, with $\alpha$ at 0,  we see that $\pi_1$, which is the more diverse/covering of the three relative to $p$, has the lowest value of forward KL, $KL(p,\pi)$, while $\pi_3$, which is even more ``peaky'' than $\pi_2$ has a larger forward KL.
On the right of the plot, with $\alpha$ tending to 1, the divergences all tend to infinity, but their order stabilizes, with the divergence of $\pi_1$ getting and staying much larger than both those of $\pi_2$ and $\pi_3$. As we will see in Theorem \ref{thm:decomposition} and \eqref{eq:decomposition} below, the relative behaviour of the policies for $\alpha$ close to $1$ is determined predominantly by their relative values of $\pi(A)$, and here, with $\pi_2(A)=\pi_3(A)=0.9 > \pi_1(A)= 0.66$, $\pi_1$ ``loses'' relative to $\pi_2$ and $\pi_3$. For two policies, such as $\pi_2,\pi_3$, which have exactly the same ``valid'' mass, their order is determined by a secondary term, the one appearing to the right of `+' in \eqref{eq:decomposition}.

\begin{figure}[h]
\centering
\includegraphics[width=0.65\linewidth]{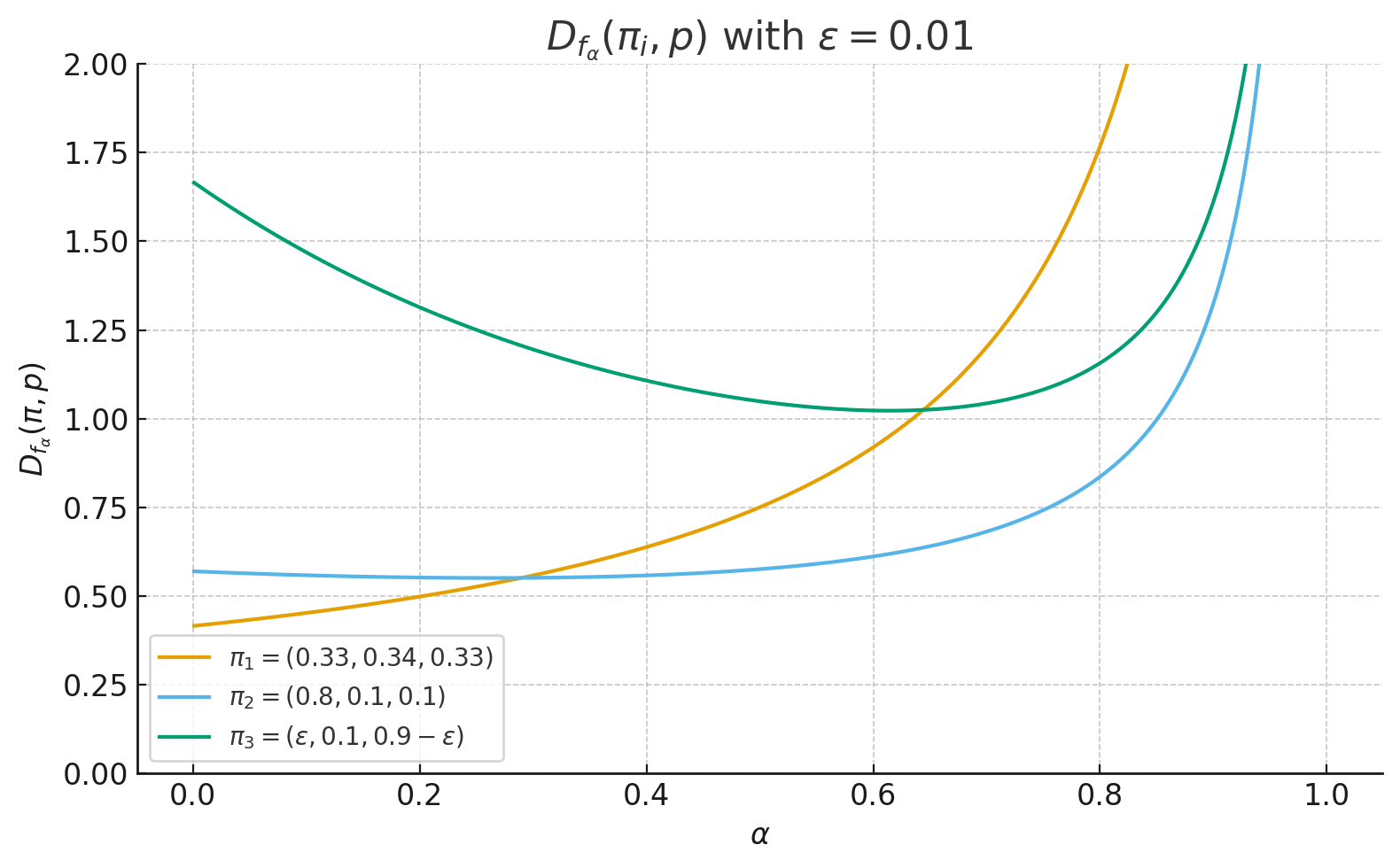}
\caption{The divergence $D_{f_\alpha}(\pi,p)$ for 
$\pi_1=(0.33,0.34,0.33)$,
$\pi_2=(0.8,0.1,0.1)$, and
$\pi_3=(\varepsilon,0.1,0.9-\varepsilon)$ with $\varepsilon=0.01$.}
\label{fig:alpha-toy-eps}
\end{figure}

\subsection{A decomposition theorem for the $\alpha$-divergence for targets with partial support}

We now state a useful (and apparently novel) result, which permits a better understanding of what happens in the situation where the support $A$ of the target $p$ is strictly contained in the support of the model $\pi$. This result says that, with $\pi(A)$ the $\pi$ mass of $A$, and with $\pi_A$ is the “renormalization” of $\pi$ to $A$, that is, $\pi_A(y) = \pi(y)/\pi(A)$ for $y\in A$, and for $\alpha\in (0,1)$, we have the identity
$D_{f_\alpha}(\pi, p) = \frac{1 - \pi(A)^\alpha}{\alpha(1 - \alpha)} + \pi(A)^\alpha D_{f_\alpha}(\pi_A, p)$.

This identity is especially interesting for the case of $\alpha$ tending to $1$. In that case, with $\pi$ full support and $\pi(A) < 1$, the support of $\pi_A$ is equal to the support of $p$, and therefore $D_{f_\alpha}(\pi_A, p)$ tends to a finite 
value $KL(\pi_A,p)$, and the second term of the identity tends towards a finite value. On the other hand, the first term tends to infinity at a rate closer and closer to $\frac{1-\pi(A)}{1-\alpha}$, meaning that $\pi(A) > \pi'(A)$ implies that the 
divergence of $\pi$ becomes and stays lower than the divergence of $\pi'$ after a certain point $\alpha_0$. 

\textit{In other words, when $A$ is the subset of $\mathcal{Y}$ for which the binary reward $v(y)$ is equal to $1$, and for $\alpha$ sufficiently close to $1$, minimizing $D_{f_\alpha}(\pi_\theta, p)$ is essentially equivalent to maximizing $\mathbb{E}_{\pi_\theta} v(y)$, the same objective as pure REINFORCE.}

\subsection*{Formal Result: the Support Decomposition of $\alpha$-Divergence}

Let $\pi$ and $p$ be probability distributions on a countable sample space $Y$. We consider the $\alpha$-divergence defined by the generator function $f_\alpha(t) = \frac{t^\alpha - \alpha t - (1-\alpha)}{\alpha(\alpha-1)}$ for $\alpha \in (0,1)$.

First, we establish the algebraic relationship between the divergence and the ``Hellinger sum'' (the discrete counterpart to the Hellinger integral \citep{liese2006divergences}, see also Appendix B of \citep{li2017dropoutinferencebayesianneural}).

\begin{lemma}[Connection to Hellinger sum]
    \label{lemma:hellinger_connection}
    Let $\alpha \in (0,1)$. Let $A = \operatorname{supp}(p) \subseteq Y$. The $\alpha$-divergence satisfies:
    \begin{equation}
        D_{f_\alpha}(\pi, p) = \frac{1 - H_\alpha(\pi, p)}{\alpha(1-\alpha)},
    \end{equation}
    where $H_\alpha(\pi, p) = \sum_{y \in Y} \pi(y)^\alpha p(y)^{1-\alpha}$ is the Hellinger sum. This holds even if $\operatorname{supp}(\pi) \not\subseteq A$.
\end{lemma}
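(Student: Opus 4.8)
The plan is to compute $D_{f_\alpha}(\pi,p)$ straight from the definition of an $f$-divergence and collect terms. Following the convention used for $f$-DPG in Section~\ref{sec:background} (the divergence is weighted by the second argument $p$ and the generator is evaluated at the ratio $\pi/p$), I would write, with $A=\operatorname{supp}(p)$,
$$
D_{f_\alpha}(\pi,p)=\sum_{y\in A} p(y)\,f_\alpha\!\left(\tfrac{\pi(y)}{p(y)}\right)\;+\;f_\alpha'(\infty)\sum_{y\notin A}\pi(y),
$$
where the second sum is the standard boundary term accounting for the mass $\pi$ places off the support of $p$. This is exactly the term that matters for the claim ``this holds even if $\operatorname{supp}(\pi)\not\subseteq A$,'' so keeping it explicit from the start is essential.

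First I would expand the support term. Substituting $f_\alpha(t)=\frac{t^\alpha-\alpha t-(1-\alpha)}{\alpha(\alpha-1)}$ at $t=\pi(y)/p(y)$ and multiplying through by $p(y)$ converts the three summands into $\pi(y)^\alpha p(y)^{1-\alpha}$, $\alpha\pi(y)$, and $(1-\alpha)p(y)$ respectively. Summing over $y\in A$ and using $\sum_{y\in A}p(y)=1$ and $\sum_{y\in A}\pi(y)=\pi(A)$, together with the observation that $p(y)^{1-\alpha}=0$ for $y\notin A$ (valid precisely because $1-\alpha>0$), so that $\sum_{y\in A}\pi(y)^\alpha p(y)^{1-\alpha}=H_\alpha(\pi,p)$, yields $\frac{1}{\alpha(\alpha-1)}\big(H_\alpha(\pi,p)-\alpha\pi(A)-(1-\alpha)\big)$ for the first term.

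Next I would handle the boundary term. From Table~\ref{tab:alpha-divergence}, for $\alpha<1$ one has $f_\alpha'(\infty)=\tfrac{1}{1-\alpha}$, so the out-of-support contribution is $\tfrac{1}{1-\alpha}\big(1-\pi(A)\big)$. Adding the two pieces over the common denominator $\alpha(1-\alpha)$ and expanding, the terms proportional to $\pi(A)$ cancel and the leftover constants $(1-\alpha)+\alpha$ collapse to $1$, leaving exactly $\frac{1-H_\alpha(\pi,p)}{\alpha(1-\alpha)}$, as required.

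The computation is routine; the difficulty is entirely in the bookkeeping. I would stress two points. First, one must not discard the infinity term $f_\alpha'(\infty)\sum_{y\notin A}\pi(y)$: dropping it breaks the identity precisely in the regime the lemma highlights, and it is this term that supplies the $\tfrac{\alpha}{\alpha(1-\alpha)}(1-\pi(A))$ contribution needed for the cancellation. Second, restricting to $\alpha\in(0,1)$ is what makes $f_\alpha'(\infty)$ finite and the exponent $1-\alpha$ positive, so that $H_\alpha$ is genuinely supported on $A$. For a countable $Y$ I would also remark that each of the three partial sums converges absolutely (indeed $H_\alpha\le 1$ by Hölder's inequality applied to $\pi^\alpha$ and $p^{1-\alpha}$), which justifies the term-by-term rearrangement. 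Thus the anticipated obstacle is conceptual rather than algebraic: pinning down the $f$-divergence convention and the $f_\alpha'(\infty)$ boundary term correctly.
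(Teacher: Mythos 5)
Your proposal is correct and follows essentially the same route as the paper's own proof: the same split into the on-support sum plus the boundary term $f_\alpha'(\infty)\,\pi(A^c)$ with $f_\alpha'(\infty)=\tfrac{1}{1-\alpha}$, followed by the same cancellation of the $\pi(A)$ (equivalently $\epsilon$) terms over the common denominator $\alpha(1-\alpha)$. Your added remarks on absolute convergence via H\"older ($H_\alpha\le 1$) and on why $1-\alpha>0$ confines the Hellinger sum to $A$ are harmless refinements not present in, but fully consistent with, the paper's argument.
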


\begin{proof}
    We use the extended definition of $f$-divergence \citep{polyanskiy2025information} which includes the boundary term for the set where $p(y)=0$ but $\pi(y)>0$. Let $A^c = Y \setminus A$ and let $\epsilon = \pi(A^c)$ be the ``leakage'' mass.
    \[
        D_{f_\alpha}(\pi, p) = \sum_{y \in A} p(y) f_\alpha\left( \frac{\pi(y)}{p(y)} \right) + f'_\alpha(\infty) \cdot \pi(A^c).
    \]
    1. \textbf{The Boundary Term:} The term to the right uses the constant  $f'_\alpha(\infty) = \lim_{t \to \infty} f_\alpha(t)/t$. For $\alpha < 1$, $t^{\alpha-1} \to 0$, so:
        $f'_\alpha(\infty) = \frac{-\alpha}{\alpha(\alpha-1)} = \frac{1}{1-\alpha}$. See also Table \ref{tab:alpha-divergence}.
    
    2. \textbf{The Sum on Support $A$:} Note that $\sum_{y \in A} \pi(y) = 1-\epsilon$ and $\sum_{y \in A} p(y) = 1$.
    \begin{align*}
        \sum_{y \in A} p(y) f_\alpha\left( \frac{\pi(y)}{p(y)} \right) &= \frac{1}{\alpha(\alpha-1)} \left[ \sum_{y \in A} \pi(y)^\alpha p(y)^{1-\alpha} - \alpha(1-\epsilon) - (1-\alpha)(1) \right] \\
        &= \frac{1}{\alpha(\alpha-1)} \left[ H_\alpha(\pi, p) - 1 + \alpha\epsilon \right].
    \end{align*}
    
    3. \textbf{Combination:} Adding the boundary term: $\text{Boundary} = \frac{\epsilon}{1-\alpha} = \frac{-\alpha\epsilon}{\alpha(\alpha-1)}$.
    The terms involving $\epsilon$ cancel perfectly:
    \[
        D_{f_\alpha}(\pi, p) = \frac{H_\alpha(\pi, p) - 1 + \alpha\epsilon - \alpha\epsilon}{\alpha(\alpha-1)} = \frac{1 - H_\alpha(\pi, p)}{\alpha(1-\alpha)}.
    \]
\end{proof}

Using Lemma \ref{lemma:hellinger_connection}, we now derive the main decomposition theorem.

\begin{theorem}[Support Decomposition]
    \label{thm:decomposition}
    Assume that $A = \operatorname{supp}(p)$ is strictly included in $\operatorname{supp}(\pi)$. Let $\pi_A$ be the renormalization of $\pi$ on $A$, i.e., $\pi_A(y) = \pi(y)/\pi(A)$ for $y \in A$, or, equivalently $\pi_A(y) = \pi(y | A)$.
    The divergence decomposes as:
    \begin{equation}
        D_{f_\alpha}(\pi, p) = \underbrace{\frac{1 - \pi(A)^\alpha}{\alpha(1 - \alpha)}}_{\text{Leakage Penalty}} + \underbrace{\pi(A)^\alpha D_{f_\alpha}(\pi_A, p)}_{\text{Shape Divergence}}. \label{eq:decomposition}
    \end{equation}
\end{theorem}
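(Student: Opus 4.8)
The plan is to route everything through the Hellinger sum $H_\alpha$, using the already-proved Lemma~\ref{lemma:hellinger_connection} as a black box, so that the only genuine work is a multiplicative identity for $H_\alpha$ under renormalization. Concretely, I would apply the Lemma twice: once to $D_{f_\alpha}(\pi,p)$ and once to $D_{f_\alpha}(\pi_A,p)$, and then eliminate the Hellinger sums between the two resulting expressions. Writing $q=\pi(A)$ for brevity, the Lemma gives $D_{f_\alpha}(\pi,p)=\tfrac{1-H_\alpha(\pi,p)}{\alpha(1-\alpha)}$ and $D_{f_\alpha}(\pi_A,p)=\tfrac{1-H_\alpha(\pi_A,p)}{\alpha(1-\alpha)}$; the latter application is legitimate because $\pi_A$ has support exactly $A=\operatorname{supp}(p)$, so the Lemma applies with no leakage term.

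The key step is the factorization
\[
H_\alpha(\pi,p)=q^{\alpha}\,H_\alpha(\pi_A,p).
\]
To establish it, I would first observe that since $p(y)=0$ for $y\notin A$ and $1-\alpha>0$, every term with $y\notin A$ contributes $\pi(y)^{\alpha}\,0^{1-\alpha}=0$, so that $H_\alpha(\pi,p)=\sum_{y\in A}\pi(y)^{\alpha}p(y)^{1-\alpha}$. Substituting $\pi(y)=q\,\pi_A(y)$ on $A$ yields $\pi(y)^{\alpha}=q^{\alpha}\pi_A(y)^{\alpha}$, and pulling the constant $q^{\alpha}$ out of the sum gives exactly $q^{\alpha}\sum_{y\in A}\pi_A(y)^{\alpha}p(y)^{1-\alpha}=q^{\alpha}H_\alpha(\pi_A,p)$.

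With the factorization in hand, the decomposition is pure algebra. From the second Lemma application I solve for $H_\alpha(\pi_A,p)=1-\alpha(1-\alpha)D_{f_\alpha}(\pi_A,p)$, substitute this into $H_\alpha(\pi,p)=q^{\alpha}H_\alpha(\pi_A,p)$, and insert the result into the first Lemma application:
\[
D_{f_\alpha}(\pi,p)=\frac{1-q^{\alpha}\bigl(1-\alpha(1-\alpha)D_{f_\alpha}(\pi_A,p)\bigr)}{\alpha(1-\alpha)}
=\frac{1-q^{\alpha}}{\alpha(1-\alpha)}+q^{\alpha}D_{f_\alpha}(\pi_A,p),
\]
which is the claimed identity once $q=\pi(A)$ is restored.

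There is essentially no deep obstacle here: the entire analytic content is carried by Lemma~\ref{lemma:hellinger_connection}, and the remainder is bookkeeping. The one subtlety I would take care to flag is the vanishing of the off-support terms in $H_\alpha$, which relies on $\alpha\in(0,1)$ so that $1-\alpha>0$ and $0^{1-\alpha}=0$; this is also precisely what makes the second Lemma application clean, since the renormalized $\pi_A$ has support exactly $A$ and therefore incurs no boundary contribution. Notably, unlike the Hellinger-sum Lemma itself, the decomposition does not need the explicit boundary constant $f'_\alpha(\infty)$ — it has already been absorbed once and for all into the $\tfrac{1-H_\alpha}{\alpha(1-\alpha)}$ form.
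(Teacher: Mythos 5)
Your proposal is correct and follows essentially the same route as the paper's own proof: both restrict the Hellinger sum to $A$, establish the factorization $H_\alpha(\pi,p)=\pi(A)^\alpha H_\alpha(\pi_A,p)$ by substituting $\pi(y)=\pi(A)\pi_A(y)$, and then apply Lemma~\ref{lemma:hellinger_connection} twice to convert back to divergences and finish by algebra. Your added remarks — that $0^{1-\alpha}=0$ requires $\alpha<1$, and that the Lemma applies to $\pi_A$ with no boundary term since $\operatorname{supp}(\pi_A)=A$ — are careful touches the paper leaves implicit, but the argument is the same.
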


\begin{proof}
    Since $p(y)=0$ for $y \notin A$, the Hellinger sum restricts to $A$. Substituting $\pi(y) = \pi(A)\pi_A(y)$:
    \[
        H_\alpha(\pi, p) = \sum_{y \in A} \left( \pi(A)\pi_A(y) \right)^\alpha p(y)^{1-\alpha} = \pi(A)^\alpha H_\alpha(\pi_A, p).
    \]
    From Lemma \ref{lemma:hellinger_connection}, we have $H_\alpha(\pi_A, p) = 1 - \alpha(1-\alpha)D_{f_\alpha}(\pi_A, p)$. Substituting this back into the global divergence formula:
    \begin{align*}
        D_{f_\alpha}(\pi, p) &= \frac{1 - \pi(A)^\alpha H_\alpha(\pi_A, p)}{\alpha(1-\alpha)} \\
        &= \frac{1 - \pi(A)^\alpha \left[ 1 - \alpha(1-\alpha)D_{f_\alpha}(\pi_A, p) \right]}{\alpha(1-\alpha)} \\
        &= \frac{1 - \pi(A)^\alpha}{\alpha(1-\alpha)} + \pi(A)^\alpha D_{f_\alpha}(\pi_A, p).
    \end{align*}
\end{proof}

\subsection*{Consequences for Fixed Target $p$}

We analyze the case where $\pi$ has full support and $p$ has strictly partial support $A$.

\begin{remark}[Limit $\alpha \to 1$: The Strong Constraint]
    As $\alpha \to 1$, $D_{f_\alpha}$ converges to the Reverse KL divergence $D_{KL}(\pi \|p) =+\infty$. The Mass Penalty term diverges:
    \[
        \lim_{\alpha \to 1} \frac{1 - \pi(A)^\alpha}{\alpha(1-\alpha)} = +\infty \quad (\text{if } \pi(A) < 1).
    \]
    This acts as a strong constraint, heavily penalizing support leakage.

    As for the Shape divergence term $\pi(A)^\alpha D_{f_\alpha}(\pi_A, p)$, it remains finite, and is dominated by the first term.
\end{remark}

\begin{remark}[Limit $\alpha \to 0$: The Weak Constraint]
    As $\alpha \to 0$, $D_{f_\alpha}$ converges to the Forward KL divergence $D_{KL}(p \| \pi)$. The Mass Penalty term remains finite:
    \[
        \lim_{\alpha \to 0} \frac{1 - \pi(A)^\alpha}{\alpha(1-\alpha)} = -\ln(\pi(A)).
    \]
    This acts as a soft constraint (``Surprise Penalty''), allowing a trade-off between coverage ($\pi(A)$) and conditional shape matching ($D_{KL}(p \| \pi_A)$).
    The Shape divergence term  $\pi(A)^\alpha D_{f_\alpha}(\pi_A, p)$ also remains finite and converges to $D_{f_\alpha}(\pi_A, p)$. 
\end{remark}

\end{document}